\documentclass{article}

 \usepackage[preprint]{neurips_2026}

\usepackage[utf8]{inputenc} 
\usepackage[T1]{fontenc}    
\usepackage{hyperref}       
\usepackage{url}            
\usepackage{booktabs}       
\usepackage{amsfonts}       
\usepackage{nicefrac}       
\usepackage{microtype}      
\usepackage{xcolor}         
\usepackage{amsmath,amssymb,amsthm}
\usepackage{mathtools}
\usepackage{enumitem}
\usepackage{xcolor}
\usepackage{algorithm}
\usepackage{algpseudocode}
\usepackage{algorithmicx}
\usepackage{thmtools, thm-restate}
\usepackage{bbm}
\usepackage{tikz}
\usepackage{wrapfig}
\usepackage{caption}

\usepackage{bbm}
\usepackage{boxedminipage}
\usepackage{mdframed}
\usepackage{xcolor}
\usepackage{tcolorbox}
\usepackage{array}
\usepackage{wrapfig}
\usepackage{multirow}

\newcolumntype{C}[1]{>{\centering\arraybackslash}p{#1}}

\tcbuselibrary{theorems, skins, breakable}

\definecolor{sinkblue}{RGB}{110, 150, 200}
\definecolor{examplebg}{RGB}{245, 255, 245}

\newtcolorbox{examplebox}{
  colback=white, colframe=black, boxrule=0.5pt, arc=2pt
}

\newtcolorbox{propbox}{
  colback=examplebg, colframe=sinkblue!60!black, boxrule=0.5pt, arc=2pt,
  fonttitle=\bfseries,
  breakable, enhanced
}

\hypersetup{colorlinks=true, linkcolor=blue!60!black, citecolor=blue!60!black, urlcolor=blue!60!black}

\newtheorem{proposition}{Proposition}
\newtheorem{remark}{Remark}
\theoremstyle{definition}
\newtheorem{definition}{Definition}
\newtheorem{example}{Example}

\DeclareMathOperator{\rank}{rank}

\DeclareMathOperator{\CH}{CH}
\DeclareMathOperator{\PA}{PA}
\DeclareMathOperator{\Var}{Var}
\DeclareMathOperator{\Cov}{Cov}

\DeclareMathOperator{\NB}{NB}
\newcommand{\myyes}{\includegraphics[width=0.02\textwidth]{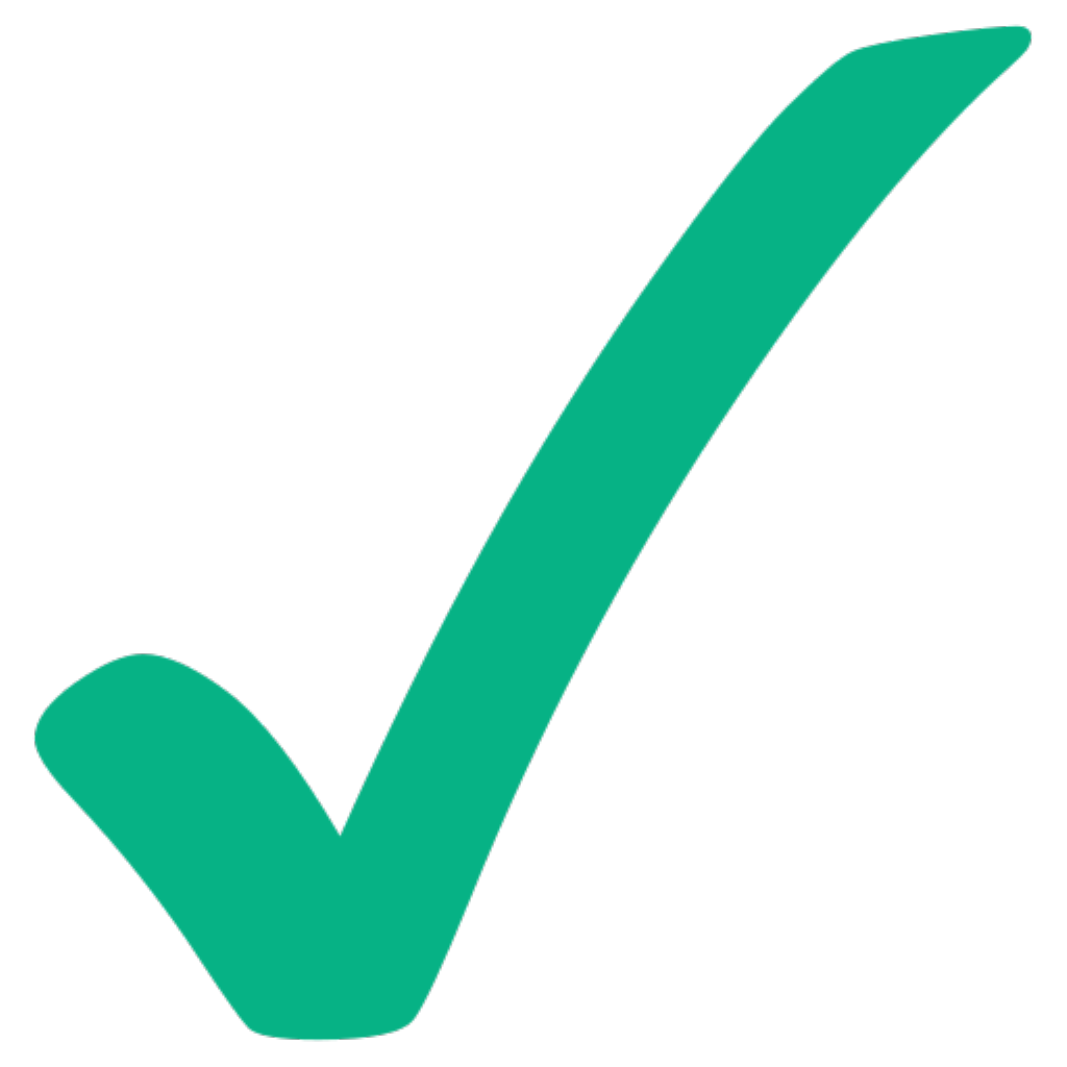}}
\newcommand{\myno}{\includegraphics[width=0.02\textwidth]{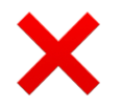}}

\newcommand{\X}{\mathbf{X}}
\newcommand{\bbR}{\mathbb{R}}
\newcommand{\bbE}{\mathbb{E}}
\newcommand{\cB}{\mathcal{B}}
\newcommand{\cD}{\mathcal{D}}
\newcommand{\cG}{\mathcal{G}}
\newcommand{\bZ}{\mathbf{Z}}
\newcommand{\ind}{\perp\!\!\!\perp}

\newcommand{\circled}[1]{\textcircled{\scriptsize #1}}
\newcommand{\var}[1]{{\scriptsize \textcolor{darkgray}{($\pm$#1)}}}

\definecolor{revisionblue}{RGB}{0, 150, 250}
\newcommand{\rev}[1]{\textcolor{black}{#1}}

\title{Causal Discovery Under Hard Selection Bias: \\ A New Robust Score-Matching Approach }

\author{%
Yiwen Qiu \\
\And
Francesco Montagna \\
\And
Shimeng Huang \\
\And 
Francesco Locatello \\
\AND \vspace{-1em}\\ 
  IST Austria\\
  Am Campus 1, Klosterneuburg, 3400\\
  \texttt{firstname.lastname@ista.ac.at} \\
}

\begin{document}
\maketitle




\newcommand{\FL}[1]{\textcolor{purple}{(FL: #1)}}

\begin{abstract}
Selection bias is a difficult yet widespread problem in causal discovery, occurring whenever non-random selection processes lead to data that is not representative of the underlying populations. 
Due to its practical importance, related works have been proposed to address the problem under {local~\citep{versteegLocalConstraintBasedCausal2022a} or interventional settings~\citep{daiWhenSelectionMeets2025}, but general algorithms for observational data remain elusive}. 
In this paper, we focus on truncated data (hard selection), and provide a general result for Additive Noise Models (ANM). We show that score-based methods are a surprisingly suitable candidate due to a special property of the score function ($\nabla \log p(x)$): \emph{deterministic truncation preserves both the score} of the joint distribution as well as the Jacobian of the score. While established score-matching algorithms still fail in most settings, we actively leverage this observation to propose a new method for causal discovery that is robust under truncated data with ANMs. Overall, our approach offers a general, robust solution that is agnostic to external information about the selection process and still achieves comparable performance to the state-of-the-art approaches when selection is not present.

\end{abstract}

\section{Introduction}
Identifying causal relationships from observational data is a fundamental problem in many scientific disciplines. However, selection bias can significantly distort the observed data distribution, leading to incorrect inference about causal relationships~\citep{heckmanSampleSelectionBias1979,hernan2004structural}. Selection bias occurs when the data collected is not representative of the underlying population due to a non-random selection process, e.g., in epidemiological studies where only patients with certain traits (socio-economic or health factors) are recorded~\citep{swansonUKBiobankSelection2012, vanaltenReweightingUKBiobank2024}, or in molecular biology where only transcripts in cells that are highly expressed are measured and included in the study~\citep{dwyer2023problem}. 

Some methods have been developed to address this problem in causal discovery, but they often rely on access to interventional data~\citep{daiWhenSelectionMeets2025, luo2025gene}, or only focus on a local structure (with three or four variables)~\citep{gregoryCausalDiscoveryData1995,versteegLocalConstraintBasedCausal2022a}. The seminal work FCI~\citep{spirtes2000causation} and its variants~\citep{colombo2012learning}  recover the causal graph under latent variables and selection bias up to a Markov equivalence class without further interpretation. 
Recent studies based on Functional Causal Models (FCMs) by ~\citet{zhangIdentifiabilityEstimationFunctional} have shown that if we know the location of the selection variable, i.e., if it is only outcome-dependent, then we can identify the full causal graph under certain assumptions. Beyond these specific, partial solutions, a general approach for selection bias in causal discovery for observational data is missing.

In this paper, we investigate the case of \textit{hard selection bias}: the densities of (some or all) observed variables are \textit{truncated}, meaning that under selection, we do not observe data on certain parts of the (true) support at all. We begin by showing a surprising yet simple result:   regardless of the location of selection, \emph{causal asymmetry persists under hard selection bias within the score function of the data ($\nabla \log p(x)$)}. This introduces an opportunity, as the score function has been a central component in a rich line of work on causal discovery~\citep{rollandScoreMatchingEnables2022,montagnaScalableCausalDiscovery2023a,montagnaScoreMatchingRoof2025}. Unfortunately, we find that while the SCORE~\citep{rollandScoreMatchingEnables2022} algorithm for non-linear additive Gaussian noise is robust to hard selection bias, the more general extensions~\citep{montagnaScoreMatchingRoof2025} are not.
To this end, we propose a novel score-based method for causal discovery, and to our knowledge, it is the first general causal discovery method that is robust to hard selection, without relying on any external information about the selection process. Our contributions are as follows:

\paragraph{Contribution} We evaluate the existing score function method---SCORE~\citep{rollandScoreMatchingEnables2022}, DAS~\citep{montagnaScalableCausalDiscovery2023a}, NoGAM~\citep{montagnaScoreMatchingRoof2025}--- and show that they can be surprisingly robust to hard selection bias (Section~\ref{sec:score}). However, failures persist: to tackle these cases, we propose a new method (Section~\ref{sec:method}) based on the conditional rank of the Hessian log density ($\nabla^2 \log(x)$), which is fully robust to hard selection bias under all Additive Noise Model (ANM) settings, with the only exception being the linear Gaussian case (see Table~\ref{tab:identifiability} for a summary). We provide a practical estimation algorithm for our method in Section~\ref{sec:estim}, and demonstrate its effectiveness through extensive synthetic and real-world \citep{pinna2010knockouts} experiments in Section~\ref{sec:exp}.

\begin{table}[h]
\centering
\renewcommand{\arraystretch}{1.2}

\begin{tabular}{lC{1.2cm}C{1.5cm}C{1.2cm}C{1.2cm}C{1.2cm}C{4cm}}
\toprule
\textbf{ANM Assumption} & {LiNGAM} & {CAM-UV} & {SCORE} & {NoGAM}  & \mbox{CHess (Ours)} \\
\midrule
Nonlinear Gaussian     & \myno & \myyes & \myyes & \myyes & \myyes \\
Linear non-Gaussian    & \myyes & \myno & \myno & \myyes & \myyes \\
Nonlinear non-Gaussian  & \myno & \myyes & \myno & \myyes & \myyes \\
\midrule
Hard Selection (Truncation) & \myno & \myno & \myyes$^{*}$ & \myno & \myyes \\
\bottomrule
\end{tabular}
\vspace{5pt}
\caption{Identifiability guarantees of causal discovery methods across ANM assumptions.
\myyes~denotes full DAG identifiability; \myno~denotes no guarantee.
LiNGAM~\citep{shimizu2006linear, shimizu2011directlingam} only deals with linear non-Gaussian ANMs, SCORE~\citep{rollandScoreMatchingEnables2022} requires nonlinear mechanisms with Gaussian noise.
NoGAM~\citep{montagnaScoreMatchingRoof2025} extends SCORE to arbitrary noise distributions.
CAM-UV~\citep{maeda2021causal} additionally handles latent confounders under nonlinear separable (CAM) assumptions. \textcolor{black}{SCORE can handle latent hard selection bias in both Gaussian and non-Gaussian cases, which is not discussed in the original paper but is one of the surprising findings in this work (\myyes$^{*}$)}. Our method (CHess) is the only method that is resilient to hard selection under all the above ANMs.  \textcolor{black}{Linear Gaussian models are known to be not identifiable beyond the Markov equivalence class~\citep{peters2014identifiability} thus omitted in the table.}
}
\label{tab:identifiability}
\end{table}
\vspace{-10pt}
\section{Preliminaries and Problem Setup}\label{sec:problem}

\subsection*{Problem Setting: Hard Selection and What We Know about it}\label{sec:setting}
We consider a set of random variables $\X = \{X_1, \ldots, X_d\}$ generated by a restricted additive noise model (ANM):
\begin{equation}\label{eq:anm}
    X_i := f_i(X_{\PA_i}) + N_i, \qquad i = 1, \ldots, d,
\end{equation}
where $f_i$ are $C^2$ causal mechanisms, $N_i$ are mutually independent noise terms with strictly positive densities, and the model satisfies the restricted ANM identifiability condition~\citep{buhlmannCAMCausalAdditive2014}. The associated causal graph $\cG$ is a DAG.

In the remainder of the paper, we assume the Markov and faithfulness conditions for the DAG $\cG$ and the distribution $p(\X)$~\citep{spirtes2000causation}. We say two nodes $X_i$ and $X_j$ are \emph{d-separated} by a set of nodes $\mathbf{Z}$ in $\cG$ if every path between them is blocked by $\bZ$, and a path is blocked by conditioning on $\bZ$ if the path contains a non-collider node in $\bZ$ or a collider node that is not in $\bZ$ and has no descendant in $\bZ$. For a full definition, please refer to the Appendix.
\paragraph{Hard Selection (Truncation)}
We observe biased data from $p(\X \mid S = 1)$,  where $p(\X \mid S=1)\neq p(\X)$.  The selection mechanism is unknown and can be dependent on an arbitrary subset of $\X$. Here we focus on the hard selection (truncation) case, which is the scenario where the selection mechanism is deterministic and only data points in a subset $\cB$ of the support of $\X$ are observed, i.e. $p(S = 1 \mid \X=x) = \mathbbm{1}(x \in \cB)$ for a set $\mathcal{B}\subseteq \bbR^d$ with $\operatorname{vol}(\mathcal{B}) > 0$. 

\paragraph{Causal Discovery under Selection Bias} Given the biased observational data $\cD = \{x^{(i)}\}_{i=1}^N$ from $p(\X \mid S = 1)$, the goal is to recover the causal graph $\cG$ under the original distribution $p(\X)$. This is a challenging problem that has been studied in the causal discovery literature in local causal discovery~\citep{versteegLocalConstraintBasedCausal2022a}, interventional settings~\citep{daiWhenSelectionMeets2025, luo2025gene}, latent variable settings~\citep{daiLatentVariableCausal2025}, etc. However, to the best of our knowledge, \emph{no method has been developed in the general setting} to tackle selection bias for causal discovery only with observational data. For a comprehensive review of the related work, please refer to Appendix~\ref{sec:related}.

Given the generality of the problem, we focus on the following question in this paper: 
\begin{examplebox}
\emph{Can we find properties of the original distribution $p(\X)$ preserved under hard selection, and how can we leverage them for robust causal discovery?} 
\end{examplebox}
We take the first step to answer this question by revisiting the score function and score-matching based causal discovery methods, and show its potentials.
\section{The Blessing of Score Functions}\label{sec:score}
In this section,  we look at the score function (Section~\ref{sec:score-preserve}) and review score-matching for causal discovery (Section~\ref{sec:score-revisit}).  Then in Section~\ref{sec:motivation}, we present the advantage of using the score directly for causal discovery under truncation, and an example showing a failure case, which motivates our proposed method in the next section (Section~\ref{sec:method}).

\subsection{Score Preservation}\label{sec:score-preserve}
Consider the score of the selected distribution $\nabla_X \log p(x\mid s)$. The selected density satisfies
\[
    \log p(x \mid S = 1) = \log p(x) + \underbrace{\log \mathbbm{1}(x \in \cB)}_{= 0 \text{ for all $x \in$} \operatorname{int}(\cB)} - \underbrace{\log p(S=1)}_{\text{constant}}, \quad \forall x \in \bbR^d,
\]
because of Bayes' rule. Take the derivative, for $x \in \text{int}(\mathcal{B})$, 
\begin{equation}\label{eq:score-preserved}
    \boxed{\underbrace{\nabla_X \log p(x \mid S = 1)}_\text{(the score on the biased data)}  = \underbrace{\nabla_X \log p(x)}_\text{(the score on the true data)}}
\end{equation}
The score is exactly preserved at every observed data point.
\begin{remark} [Score preservation under hard selection]
    This is an important and nontrivial observation that we leverage in this paper for developing a novel score-based method for causal discovery under hard selection. The same argument also applies to the second-order score $\nabla^2_X \log p(x)$, which is also preserved under hard selection, i.e. $\nabla^2_X \log p(x \mid S = 1) = \nabla^2_X \log p(x)$ (in the interior of $\cB$). This aligns with the local geometrical nature of the score, which characterizes the curvature of the log-density pointwise, and which is preserved under truncation. 
\end{remark}

\begin{remark}[Boundary effect]\label{rem:boundary}
    One might notice that the score is only preserved in the interior of $\cB$, and not on its boundary, where we have an extra term $\delta(x) $ in the score due to the derivative of the indicator function. This boundary effect can be problematic for score estimation: \citep{rollandScoreMatchingEnables2022,montagnaScoreMatchingRoof2025} show that we can use Stein identity \citep{stein1972bound} for score estimation, which relies on an vanishing boundary condition on $p(x)$, i.e. $h(x)p(x) \to 0$ as $x \to \infty$, for any testing function $h(x)$. However, under truncation, the selected density $p(x \mid s)$ does not satisfy this condition. In practice, we propose to use the diffusion score matching method~\citep{sanchezDiffusionModelsCausal2023}, which is more robust to boundary effects. We will discuss this in more detail in Section~\ref{sec:method}.
\end{remark}

\subsection{Revisiting Score-Matching}\label{sec:score-revisit}
We provide a review of the score-matching based causal discovery methods. The score component for node $X_i$ decomposes as:
\begin{equation}\label{eq:score-decomp}
    \partial_{X_i} \log p(x) = s_{n_i}(N_i) - \sum_{k \in X_{\CH_i}} \frac{\partial f_k}{\partial x_i}(x_{\PA_k}) \cdot s_{n_k}(N_k),
\end{equation}
where $s_{n_i}(N_i) := (\log p_{N_i})'$ is the score function of the $i$-th noise distribution and $X_{\CH_i}$ denotes the children of $X_i$ in $\cG$.
\paragraph{Sink Identification}
For a sink node $X_l$ (with $X_{\CH_l} = \emptyset$):
\begin{equation}\label{eq:sink-score}
    \partial_{X_l} \log p(x) = s_{n_l}(N_l).
\end{equation}

Under \emph{Gaussian} noise, \cite{rollandScoreMatchingEnables2022} propose the variance test (Proposition~\ref{prop:var-test}) as sink node identification method criterion: 
\begin{proposition}[Gaussain case: Variance test~\citep{rollandScoreMatchingEnables2022} ]\label{prop:var-test} Let $\X$ be generated by a restricted ANM~\eqref{eq:anm} with mutually independent Gaussian noise terms.
    \begin{equation}
        \Var\left[\partial_{X_i} {\color{black}s_{x_i}}(\mathbf{X})\right]=0 \Longleftrightarrow X_i \text { is a sink node, } \forall i=1, \ldots, d .
    \end{equation}
\end{proposition}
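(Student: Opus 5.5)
Here $s_{x_i}(x) := \partial_{X_i}\log p(x)$, and the quantity tested is its diagonal second derivative $\partial_{X_i} s_{x_i}$. With Gaussian noise $N_j\sim\mathcal N(0,\sigma_j^2)$, each noise score is linear: $s_{n_j}(N_j) = -N_j/\sigma_j^2$.

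The plan is to differentiate the score decomposition \eqref{eq:score-decomp} once more with respect to $x_i$ and examine the diagonal term. Substituting the Gaussian noise score gives
\[
s_{x_i}(x) = -\frac{x_i - f_i(x_{\PA_i})}{\sigma_i^2} + \sum_{k \in \CH_i} \frac{\partial f_k}{\partial x_i}(x_{\PA_k})\,\frac{x_k - f_k(x_{\PA_k})}{\sigma_k^2}.
\]
Differentiating in $x_i$ and using $\partial_{x_i} f_i = 0$ (no self-loops) yields
\[
\partial_{X_i} s_{x_i}(x) = -\frac{1}{\sigma_i^2} + \sum_{k\in\CH_i}\left[ \frac{\partial^2 f_k}{\partial x_i^2}(x_{\PA_k})\,\frac{N_k}{\sigma_k^2} - \frac{1}{\sigma_k^2}\Big(\frac{\partial f_k}{\partial x_i}(x_{\PA_k})\Big)^2 \right].
\]

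\emph{($\Leftarrow$)} If $X_i$ is a sink, the sum is empty, so $\partial_{X_i}s_{x_i}\equiv -1/\sigma_i^2$ is constant and its variance is zero.

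\emph{($\Rightarrow$)} Argue by contraposition. Suppose $X_i$ has at least one child; we must show the expression above is not almost surely constant. Since the joint density is strictly positive on $\bbR^d$, almost-sure constancy is equivalent to constancy as a function of $x$, by continuity (the $f_k$ are $C^2$). Assume it were constant and vary $x_k$ for a child $k$ with all parent values fixed. The quantity $N_k = x_k - f_k(x_{\PA_k})$ then ranges over $\bbR$ freely, and $x_k$ enters no other term unless $X_k$ is itself a parent of another child $k'$ of $X_i$.

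To avoid such cross-dependencies, choose $k$ to be a child of $X_i$ that is \emph{last in a topological order} among the children. Then $x_k$ does not appear in any $x_{\PA_{k'}}$ for children $k'$ of $X_i$: a child $k'$ with $X_k \in \PA_{k'}$ would come after $k$, contradicting the choice of $k$. Consequently the derivative in $x_k$ of the expression equals $\partial^2 f_k/\partial x_i^2\,/\sigma_k^2$, so constancy forces $\partial^2_{x_i} f_k \equiv 0$. That is, $f_k$ is affine in $x_i$, and the expression reduces to a sum over the remaining children, with the $k$-term contributing $-(\partial_{x_i} f_k)^2/\sigma_k^2$, where $\partial_{x_i} f_k$ may now depend on the other parents of $X_k$.

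Iterating over children in reverse topological order shows that every $f_k$ must be affine in $x_i$, and constancy of $\sum_k (\partial_{x_i} f_k)^2/\sigma_k^2$ must hold. \textbf{This is the main obstacle}: linear-Gaussian dependence in $x_i$ is precisely the non-identifiable regime. I would invoke the restricted ANM identifiability condition of \citep{buhlmannCAMCausalAdditive2014} assumed in \eqref{eq:anm}, which for Gaussian noise requires the mechanisms to be nonlinear, so some child's $f_k$ is nonlinear in $x_i$. This contradicts $\partial^2_{x_i} f_k \equiv 0$ and gives positive variance.

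Finally, the truncation setting does not affect the argument: by \eqref{eq:score-preserved} and the Remark on second-order scores, the same derivative holds on $\operatorname{int}(\cB)$. There $N_k$ ranges over an open set rather than all of $\bbR$, and a nonzero coefficient of a nonconstant variable still forces nonzero variance on positive volume.
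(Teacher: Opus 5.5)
Your $(\Leftarrow)$ direction is exactly the paper's argument: for a sink, the Gaussian noise score is affine, so $\partial_{X_i}s_{x_i}\equiv -1/\sigma_i^2$. It is in fact the only direction the paper writes out; the converse is deferred to Rolland et al. Your converse therefore goes beyond the paper, and its first step is sound. Your formula for $\partial_{X_i}s_{x_i}$ is correct. Positivity of $p$ on $\bbR^d$ plus continuity turns zero variance into constancy everywhere. For the topologically last child $k$, the $x_k$-derivative is exactly $\partial^2_{x_i}f_k/\sigma_k^2$, which forces $\partial^2_{x_i}f_k\equiv 0$. If you assume every mechanism is nonlinear in each of its arguments ($\partial^2_{x_i}f_k\not\equiv0$ for every edge $i\to k$), this already closes the proof and no iteration is needed.

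The gap is in how you handle the case where you only know that \emph{some} child is nonlinear in $x_i$.

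(i) The iteration does not go through as written. After removing $k$, the leftover term $-(\partial_{x_i}f_k)^2/\sigma_k^2$ depends on $x_{k_2}$ whenever $X_{k_2}\in\PA_k$. You note this, but then proceed as if the $x_{k_2}$-derivative were still $\partial^2_{x_i}f_{k_2}/\sigma_{k_2}^2$. This situation does occur in restricted ANMs, e.g.\ $i\to m\to k$, $i\to k$, with $f_m$ nonlinear and $f_k(x_i,x_m)=x_i h(x_m)+g(x_m)$. A fix: let $G=\sum_{\text{processed }k}(\partial_{x_i}f_k)^2/\sigma_k^2\ge 0$. Constancy gives $\partial_{x_{k_2}}G=\partial^2_{x_i}f_{k_2}(x_{\PA_{k_2}})/\sigma_{k_2}^2$, and the right side does not involve $x_{k_2}$. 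So $G$ is affine in $x_{k_2}\in\bbR$ and nonnegative, hence has zero slope, which gives $\partial^2_{x_i}f_{k_2}\equiv0$.

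(ii) The claim that the restricted-ANM condition ``requires the mechanisms to be nonlinear'' is not accurate. It allows Gaussian-noise edges that are affine in $x_i$ whenever $X_i\mid X_S$ is non-Gaussian, as in the example above. What it does guarantee is nonlinearity for the topologically \emph{first} child $k_1$. The set $S=\PA_i\cup(\PA_{k_1}\setminus\{i\})$ is admissible and contains no descendant of $i$, so $X_i\mid X_S$ is Gaussian. An affine $f_{k_1}(\cdot,x_{\PA_{k_1}\setminus\{i\}})$ with Gaussian input and Gaussian noise would then violate the condition.

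With (i) and (ii), your converse is complete. Your final paragraph on truncation lies outside this statement, and as written it does not follow. The region where $\partial^2_{x_i}f_k\neq 0$ may lie entirely outside $\cB$, and the nonnegativity step needs the coordinate to range over all of $\bbR$.
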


For a sink node, $s_{x_l}(x_l) = -\frac{x_l - f_l(\PA_l(x_l))}{\sigma_l^2}$, and  $s_{x_l}'(x_l) = c.$ Therefore, $\Var_X(s'_{n_l}(N_l)) = 0$. 

Extending beyond Gaussian noise, \cite{montagnaScoreMatchingRoof2025} construct the regression residual test (Proposition~\ref{prop:res-test}) as sink-node identification method criterion:

\begin{proposition}[Non-Gaussain case: Residual test~\citep{montagnaScoreMatchingRoof2025}]\label{prop:res-test} Let $\X$ be generated by a restricted ANM~\eqref{eq:anm} with mutually independent non-Gaussian noise terms. For any $i \in [d]$,
    \begin{equation}
        \bbE\left[\left(\partial_{X_i} {\color{black}s_i}(\mathbf{X})-\bbE\left[\partial_{X_i} {\color{black}s_i}(\mathbf{X}) \mid R_i\right]\right)^2\right]=0 \Longleftrightarrow X_i \text { is a sink node. }
    \end{equation}
\end{proposition}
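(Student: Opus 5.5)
The plan is to prove the two directions separately. Throughout, $s_i(\mathbf{X}) = \partial_{X_i}\log p(\mathbf{X})$ is the $i$-th score entry. I take $R_i := X_i - \bbE[X_i \mid \X_{\setminus i}]$ to be the regression residual used by NoGAM~\citep{montagnaScoreMatchingRoof2025}, where $\X_{\setminus i}$ denotes all variables other than $X_i$. The test quantity is $\bbE[(\partial_{X_i}s_i - \bbE[\partial_{X_i}s_i\mid R_i])^2]$. It vanishes exactly when the diagonal Hessian entry $\partial_{X_i}s_i(\mathbf{X})$ coincides almost surely with a measurable function of $R_i$. So both directions reduce to deciding when $\partial_{X_i}s_i$ is $\sigma(R_i)$-measurable.

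\textbf{($\Leftarrow$) Sink case.} Let $X_i$ be a sink. By~\eqref{eq:sink-score}, $s_i(\mathbf{X}) = s_{n_i}(X_i - f_i(X_{\PA_i}))$. Differentiating in $x_i$, and using that $f_i$ does not depend on $x_i$, gives $\partial_{X_i}s_i(\mathbf{X}) = s_{n_i}'(N_i)$. Next I identify $R_i$. Because $X_i$ has no children, $\X_{\setminus i}$ consists of non-descendants of $X_i$, so $N_i \ind \X_{\setminus i}$. Hence $\bbE[X_i\mid \X_{\setminus i}] = f_i(X_{\PA_i}) + \bbE[N_i]$ and $R_i = N_i - \bbE[N_i]$. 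Then $\partial_{X_i}s_i = s_{n_i}'(R_i + \bbE[N_i])$ is a deterministic function of $R_i$. The conditional expectation given $R_i$ therefore returns it unchanged, and the mean squared residual is $0$.

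\textbf{($\Rightarrow$) Contrapositive.} Let $X_i$ have a nonempty child set $X_{\CH_i}$. Differentiate~\eqref{eq:score-decomp} once more in $x_i$, using $\partial N_k/\partial x_i = -\partial_i f_k$ for each child $k$. This gives
\[
\partial_{X_i}s_i(\mathbf{X}) = s_{n_i}'(N_i) + \sum_{k\in \CH_i}\Bigl[(\partial_i f_k)^2\, s_{n_k}'(N_k) - \partial_i^2 f_k \; s_{n_k}(N_k)\Bigr],
\]
with $\partial_i f_k$ and $\partial_i^2 f_k$ evaluated at $X_{\PA_k}$. The goal is to show this is not almost surely a function of $R_i$. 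I would pick a child $k$ that has no other child of $X_i$ among its descendants; such a $k$ exists because the graph is a DAG. The idea is to vary $N_k$ while holding $R_i$ fixed on a set of positive probability, and to show the displayed expression changes.

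\textbf{Main obstacle.} The difficulty is that $R_i$ for a non-sink is a complicated function of all of $\X$, since $\bbE[X_i\mid\X_{\setminus i}]$ depends on the children. So one cannot simply treat $R_i$ and $N_k$ as independent. My first attempt is a dimension and level-set argument. Write both $R_i$ and $\partial_{X_i}s_i$ as smooth functions of the noise vector $(N_1,\dots,N_d)$, which has a strictly positive density. If $\partial_{X_i}s_i = g(R_i)$ almost surely, then the gradients of the two functions in noise space must be parallel everywhere, by continuity of both sides. Comparing the $N_k$- and $N_i$-components of the two gradients yields a differential identity linking $s_{n_i}''$, $s_{n_k}''$, $\partial_i f_k$ and $\partial_i^2 f_k$. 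I expect this identity to hold only in the degenerate cases (linear mechanisms together with Gaussian noise), and the restricted ANM identifiability condition~\citep{buhlmannCAMCausalAdditive2014} assumed in~\eqref{eq:anm} is exactly what excludes them. Verifying this exclusion carefully is the step I expect to require the most work, and it may need a genericity or faithfulness-type assumption to be invoked explicitly.
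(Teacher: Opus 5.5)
Your $(\Leftarrow)$ direction is correct. It is the same argument the paper gives right after the statement: for a sink, $N_i \ind \X_{\setminus i}$, so $R_i = N_i - \bbE N_i$, and the score quantity is a deterministic function of $R_i$. The paper phrases this for $s_{n_i}(N_i)$ and you for $s'_{n_i}(N_i)$; either works.

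The paper does not prove $(\Rightarrow)$; it imports it from \citep{montagnaScoreMatchingRoof2025}. Your proposal does not prove it either, so that direction is a genuine gap, and the plan as written would not close it. Write $m_i(x_{\setminus i}) := \bbE[X_i \mid \X_{\setminus i} = x_{\setminus i}]$. The noise-space gradient of $R_i$ contains $\nabla m_i$, which is an integral functional of $p_{N_i}$ and of all $p_{N_k}$, $k \in \CH_i$. So your parallel-gradient condition is an integro-differential equation, not a pointwise identity in $s''_{n_i}, s''_{n_k}, \partial_i f_k, \partial_i^2 f_k$.

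More importantly, its solutions are not confined to ``linear mechanisms with Gaussian noise''. Here is a counterexample:
\begin{itemize}
\item Let $X_1 = N_1$ and $X_2 = aX_1 + N_2$, with $\log p_{N_1}(t) = bt - \kappa e^{\lambda t}$ and $\log p_{N_2}(w) = b'w - \kappa' e^{-\lambda w/a}$ up to constants, where $b,\kappa,\kappa',\lambda>0$ and $ab'<0$.
\item Then $\log p(x) = (b-ab')x_1 + b'x_2 - K(x_2)e^{\lambda x_1} + \mathrm{const}$, with $K(x_2) = \kappa + \kappa' e^{-\lambda x_2/a}$.
\item Hence $V := X_1 + \lambda^{-1}\log K(X_2)$ is independent of $X_2$, and $R_1 = V - \bbE V$.
\item So $\partial_{X_1}s_1 = -\lambda^2 e^{\lambda V}$ (and likewise $s_1$) is a function of $R_1$, although $X_1$ is not a sink and both noises are non-Gaussian.
\end{itemize}
This is a known non-identifiable linear ANM with log-mix-lin-exp noises, and only the restricted-ANM condition excludes it. Non-Gaussianity cannot do the work your plan assigns to it, and you never connect your identity to the restricted-ANM condition.

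A route that avoids $\nabla m_i$ altogether starts from two observations. First, $\partial_{X_i}s_i(x) = \partial^2_{x_i}\log p(x_i \mid x_{\setminus i})$. Second, once $\X_{\setminus i}$ is fixed, $R_i$ is just a shift of $X_i$.
\begin{itemize}
\item If the test quantity vanishes, continuity in $x_i$ plus Fubini give $\partial^2_{x_i}\log p(x_i \mid x_{\setminus i}) = g(x_i - m_i(x_{\setminus i}))$ for a continuous $g$.
\item Integrating twice, the conditional density of $R_i$ given $\X_{\setminus i} = x_{\setminus i}$ is proportional to $\exp(G(u) + c(x_{\setminus i})\,u)$, an exponential tilt of one fixed function.
\item The mean of this family is strictly increasing in the tilt, and it must equal $0$ by definition of $R_i$. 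So $c$ is constant and $R_i \ind \X_{\setminus i}$.
\item The converse is immediate.
\end{itemize}
Hence the test vanishes iff $X_i$ is an additive-noise function of all remaining variables. What remains is the residual-independence principle behind RESIT \citep{peters2014causal}: under a restricted ANM, a non-sink admits no such representation. That is exactly where identifiability enters (non-Gaussianity is not needed on this route), and it is the step missing from your proposal.
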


For a sink node, $R_l = X_l - \bbE[X_l \mid X_{\setminus l}]$, which equals the centered noise $N_l-\bbE(N_l)$ for a true sink node. This means for a sink node (without selection bias), the score of $X_l$ can be perfectly predicted from the residual $R_l$. 


\paragraph{Edge (neighbor) Identification} 
 The Markov equivalence graph can be constructed by reading the conditional independence of pairs of random variables as shown by~\citep{spantini2018inference, montagnaScalableCausalDiscovery2023a}, in Proposition~\ref{prop:edge-test}.
 \begin{proposition}[~\citep{spantini2018inference, montagnaScalableCausalDiscovery2023a}]~\label{prop:edge-test}
        Let $\X$ be generated by a restricted ANM~\eqref{eq:anm} with mutually independent noise terms. For any $i, j \in [d]$,
        \begin{equation}
            \partial^2_{X_i X_j} \log p(x) = 0 \Longleftrightarrow X_i \ind X_j \mid \X_{\setminus \{i, j\}} \text{in } \cG.
        \end{equation}
\end{proposition}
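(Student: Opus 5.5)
The plan is to prove the equivalence through the factorization characterization of conditional independence for strictly positive densities, and then pass to the graph through the Markov and faithfulness assumptions of Section~\ref{sec:problem}. Under the restricted ANM~\eqref{eq:anm}, the joint density is $p(x)=\prod_k p_{N_k}(x_k - f_k(x_{\PA_k}))$. Every noise density is strictly positive, so $p>0$ on all of $\bbR^d$ and $\log p$ is well defined. I will additionally assume that the noise log-densities are $C^2$, so that $\log p$ is $C^2$ given $f_k\in C^2$; this is implicit whenever the paper speaks of the Hessian of $\log p$.

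\emph{($\Leftarrow$)} Suppose $X_i \ind X_j \mid \X_{\setminus\{i,j\}}$ in the distribution. Then $p(x)=p(x_i\mid x_{\setminus\{i,j\}})\,p(x_j \mid x_{\setminus\{i,j\}})\,p(x_{\setminus\{i,j\}})$. Taking logs gives $\log p = g(x_i,x_{\setminus\{i,j\}}) + h(x_j, x_{\setminus\{i,j\}})$, and so $\partial^2_{X_iX_j}\log p \equiv 0$.

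\emph{($\Rightarrow$)} Suppose $\partial^2_{X_iX_j}\log p(x)=0$ for all $x$. Fix $z=x_{\setminus\{i,j\}}$. Since $\partial_{x_j}\log p$ does not depend on $x_i$, integrating in $x_j$ gives $\log p(x_i,x_j,z) = a(x_i,z) + b(x_j,z)$. This is a standard argument, using connectedness of $\bbR^2$ and the fact that $\partial_{x_i}(\log p - b)$ is independent of $x_j$. Concretely, one can take $a(x_i,z)=\log p(x_i,0,z)$ and $b(x_j,z)=\log p(0,x_j,z)-\log p(0,0,z)$. Hence $p = e^{a}e^{b}$ factorizes, which implies $p(x_i,x_j\mid z)=p(x_i\mid z)\,p(x_j\mid z)$, i.e.\ $X_i \ind X_j \mid \X_{\setminus\{i,j\}}$.

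Finally, I translate the distributional statement into the graph. By the global Markov property, d-separation of $X_i,X_j$ given $\X_{\setminus\{i,j\}}$ in $\cG$ implies conditional independence. By faithfulness, conditional independence implies d-separation. So ``$X_i \ind X_j\mid \X_{\setminus\{i,j\}}$ in $\cG$'' is equivalent to the distributional statement. The d-separation condition given all other nodes corresponds exactly to $i,j$ being non-adjacent in the moralized graph, which recovers the Markov equivalence skeleton used in the edge-identification step. As a consistency check, I can also differentiate~\eqref{eq:score-decomp} in $x_j$. This gives nonzero terms only if $j\in\PA_i$, $i\in \PA_j$, or $i$ and $j$ share a child $k$, which matches moral adjacency. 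That computation only yields the direction ``not moral-adjacent $\Rightarrow$ zero'', and faithfulness covers the converse.

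The main obstacle is the converse direction at the level of \emph{all} $x$ versus almost all $x$, together with the needed regularity. The factorization argument requires the Hessian entry to vanish identically on a connected domain and $p>0$ everywhere; strict positivity of the noise densities secures both. If one only had vanishing on a truncated region $\cB$, as is relevant later in the paper, connectedness of the slices of $\cB$ would be needed. I will note this, but the proposition as stated concerns $p(x)$ on $\bbR^d$, so it is not an issue here.
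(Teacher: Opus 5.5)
Your proposal is correct, and it is essentially the standard argument behind this result. The paper gives no proof of its own; it cites Spantini et al.\ for the fact that, for a smooth strictly positive density, $\partial^2_{X_iX_j}\log p \equiv 0$ holds iff $p$ factorizes as $e^{a(x_i,z)}e^{b(x_j,z)}$, i.e.\ iff $X_i$ and $X_j$ are conditionally independent given the rest, and Montagna et al.\ for passing to d-separation (equivalently, moral-graph non-adjacency) under Markov and faithfulness, which is exactly your two-step structure with the details filled in.
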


\subsection{When Score-Matching is Surprisingly Resilient and When it is Not}\label{sec:motivation}
\paragraph{How does score-matching behave under hard selection?} In this subsection, we show that for sink-node identification under truncation, Proposition~\ref{prop:var-test} and~\ref{prop:edge-test} still hold (modified as Proposition~\ref{prop:var-test-trunc} and ~\ref{prop:edge-test-trunc}), but Proposition~\ref{prop:res-test} fails in general. Unlike other causal discovery methods that rely on conditional independence tests that are susceptible to hidden selection variables, the score-matching based methods are surprisingly resilient to hard selection bias. However, the residual test relies not only on the score, but also on the structure underlying the density, which can be distorted by selection bias. This motivates us to leverage the score directly for causal discovery, without relying on the residual or any graphical assumptions.

\paragraph{\underline{The first fold: Variance test still holds}}
Under nonlinear ANM and Gaussian noise assumptions, the result in Proposition~\ref{prop:var-test} can be directly extended to truncated data, as stated in Proposition~\ref{prop:var-test-trunc}.
\begin{proposition}[Gaussian case: Variance test under truncation]\label{prop:var-test-trunc}Let $\X$ be generated by a restricted ANM~\eqref{eq:anm} with mutually independent Gaussian noise terms, and let $S = \mathbbm{1}(x \in \cB)$ denote
a deterministic selection  for a truncation region $\cB$.
    \begin{equation}
        \Var\left[\partial_{X_i} s_i(\mathbf{X}\mid {\color{blue}S=1})\right]=0 \Longleftrightarrow X_i \text { is a sink node, } \forall i=1, \ldots, d .
    \end{equation}
\end{proposition}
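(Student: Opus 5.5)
The plan is to reduce Proposition~\ref{prop:var-test-trunc} to the untruncated computation underlying Proposition~\ref{prop:var-test}. The reduction uses score preservation~\eqref{eq:score-preserved} and the remark that second derivatives are also preserved. On $\operatorname{int}(\cB)$ we have $\partial_{X_i} s_i(x\mid S=1) = \partial^2_{X_i}\log p(x)$ pointwise. The variance on the left is taken under $p(x\mid S=1)$, which is supported on $\cB$. I will assume that $\partial\cB$ has Lebesgue measure zero, so that the boundary is a null set. Then the random variable $\partial_{X_i} s_i(\X\mid S=1)$ agrees almost surely with $\partial^2_{X_i}\log p(\X)$ restricted to $\cB$. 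Hence the statement becomes: $\partial^2_{X_i}\log p$ is a.e.\ constant on $\cB$ if and only if $X_i$ is a sink.

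\emph{($\Leftarrow$)} Let $X_i$ be a sink. By~\eqref{eq:sink-score} with Gaussian noise, $\partial_{X_i}\log p(x) = -(x_i - f_i(x_{\PA_i}))/\sigma_i^2$. Differentiating in $x_i$ gives $-1/\sigma_i^2$ everywhere, so in particular on $\cB$. The variance under the selected distribution is therefore zero.

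\emph{($\Rightarrow$)} This direction is the main obstacle. Suppose $X_i$ is not a sink. Differentiate~\eqref{eq:score-decomp} with Gaussian scores $s_{n_k}(n)=-n/\sigma_k^2$:
\[
\partial^2_{X_i}\log p(x) = -\frac{1}{\sigma_i^2} - \sum_{k\in \CH_i}\frac{1}{\sigma_k^2}\Big[\Big(\frac{\partial f_k}{\partial x_i}\Big)^2 - \frac{\partial^2 f_k}{\partial x_i^2}\,\big(x_k - f_k(x_{\PA_k})\big)\Big].
\]
In Rolland et al.\ one argues that this function is nonconstant on $\bbR^d$ when $f_k$ is nonlinear in $x_i$. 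Here we need nonconstancy on the positive-volume set $\cB$.

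The plan is as follows. Pick a child $k$ with $f_k$ nonlinear in $x_i$; the restricted ANM condition excludes the linear Gaussian degeneracy. Observe that $x_k$ appears only through the term $\frac{1}{\sigma_k^2}\partial^2_{x_i}f_k\,(x_k-f_k)$, together with the terms of the children of $X_i$ that have $X_k$ as a parent. Take the simplest case, where $X_k$ is chosen as a child of $X_i$ that is not an ancestor of any other child of $X_i$ (possible since the graph is a DAG). Then $x_k$ enters only linearly, with coefficient $\partial^2_{x_i}f_k(x_{\PA_k})/\sigma_k^2$. If this coefficient is nonzero on a set of positive measure, then $\partial_{x_k}$ of the Hessian entry is nonzero on an open set. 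Intersecting with $\operatorname{int}(\cB)$ requires some care. I would argue that $\partial^2_{x_i}f_k$ is not identically zero on any open set, via a real-analyticity or genericity assumption on the $f_k$, and that $\cB$ has nonempty interior. Then the function is nonconstant on an open subset of $\operatorname{int}(\cB)$.

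Finally, $p(x\mid S=1)$ is strictly positive on $\cB$, because the noise densities are positive. A nonconstant continuous function on an open set carrying positive mass therefore has strictly positive variance, which gives the contradiction.

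The step I expect to be hardest is exactly this ``nonconstant on $\cB$'' claim. Without an analyticity-type assumption, $f_k$ could be linear locally inside $\cB$, and the test would fail there. I would state that assumption explicitly, and otherwise simply inherit the argument of Proposition~\ref{prop:var-test} restricted to $\cB$.
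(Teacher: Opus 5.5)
Your first step, using \eqref{eq:score-preserved} and its second-order analogue to replace $\partial_{X_i}s_i(\X\mid S=1)$ by $\partial^2_{x_i}\log p(\X)$ restricted to $\cB$, is the paper's entire proof. The paper only says the result ``directly relies on the score preservation property'', i.e.\ it imports Proposition~\ref{prop:var-test} as a black box. That is enough for the sink direction, since a function identically equal to $-1/\sigma_i^2$ on $\bbR^d$ is constant on $\cB$. It is not enough for the non-sink direction: positive variance under $p$ does not imply positive variance under $p(\cdot\mid S=1)$, because a function that is nonconstant on $\bbR^d$ can be constant on $\cB$.

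Your route re-runs the SCORE argument locally inside $\operatorname{int}(\cB)$: you differentiate the diagonal Hessian entry in $x_k$ for a child that enters only linearly. This is more complete than the paper's proof, and the hypotheses you add are genuinely necessary. For example, take $X_1\to X_2$ with Gaussian noises and $X_2=f(X_1)+N_2$, where $f\in C^2$ is the identity on $[-1,1]$ and nonlinear elsewhere; this is still a restricted ANM. With $\cB=(-\tfrac12,\tfrac12)^2$ one gets $\partial^2_{x_1}\log p\equiv-1/\sigma_1^2-1/\sigma_2^2$ on $\cB$, so the non-sink $X_1$ passes the test. In fact the density on this $\cB$ is exactly a bivariate Gaussian, which a reverse-direction model of the same kind also produces, so the obstruction is information-theoretic rather than an artifact of your proof. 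What the statement actually requires is analyticity (or simply $\partial^2_{x_i}f_k\not\equiv0$ on the relevant projection of $\operatorname{int}\cB$), together with $\operatorname{int}\cB\neq\emptyset$ and a null boundary.

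One justification needs replacing. The restricted ANM condition does not guarantee that the child you select (one that is not an ancestor of any other child of $X_i$) is nonlinear in $x_i$. Consider $X_i\to X_c\to X_k$ and $X_i\to X_k$, with Gaussian noises, $f_k(x_i,x_c)=a x_i+g(x_c)$, and $f_c,g$ nonlinear. This is a restricted ANM, because conditioning on $X_c$ makes $X_i$ non-Gaussian, so the affine edge $X_i\to X_k$ is allowed. Yet the child you would pick, $X_k$, has $\partial^2_{x_i}f_k\equiv0$, so your coefficient of $x_k$ vanishes. The conclusion still holds in this example via $X_c$, but your argument does not show it. The fix is to assume, as the SCORE argument behind Proposition~\ref{prop:var-test} effectively does, that $\partial^2_{x_i}f_k\not\equiv0$ for every edge $X_i\to X_k$; with analyticity, this derivative then vanishes identically on no open set. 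Under that assumption the child you select meets both of your requirements at once, and the rest of your argument goes through as written.
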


This implies that with data from $p(\X \mid S = 1)$, we can still identify the sink nodes by checking the variance of the Hessian log-density, which is an additional outcome not realized before. The proof directly relies on the score preservation property in ~\eqref{eq:score-preserved}.

{Similarly, the result from Proposition~\ref{prop:edge-test} naturally extend to testing $ \partial^2_{X_i X_j} \log p(x \mid S=1)$ for skeleton discovery, as shown in Appendix~\ref{sec:proofs}.}
\paragraph{\underline{The second fold: Residual test fail}}
    Under hard selection, the selected (observed) residual is:
\[
    R_l^{\mathrm{sel}} = X_l - \bbE[X_l \mid X_{\setminus l},\, S = 1].
\] 

{If the selection is only a function of a single node, then the biased residual $R_l^{\mathrm{sel}}$ is still a function of $X_l$. This is a special case where the residual test still holds, which is elaborated in Appendix~\ref {sec:proofs}. But in general, when we do not know whether the selection is only a function on a single node or not, the residual test fails as in Example~\ref{ex:fail}.}
\begin{example}[Failure case]\label{ex:fail}
    Consider a sink node $X_l$ with parent $X_{\PA_l}$. If the selection mechanism depends on both $X_l$ and some $X_k\in X_{\PA_l}\subseteq X_{\setminus l}$ (e.g., $S = \mathbbm{1}(|X_l + X_k| \leq c)$), conditioning on $S = 1$ induces dependence between $N_l$ and $X_{\PA_l}$, so $R_l^{\mathrm{sel}} = X_l - \bbE[X_l \mid X_{\setminus l},\, S = 1]\neq X_l - \bbE[X_l \mid X_{\setminus l}]= R_l = N_l-\bbE N_l$. Note that the score $\partial_{X_l} \log p(x) = s_l(N_l)$ is preserved, but the residual $ R_l^{\mathrm{sel}}$ is a biased proxy for $N_l$, and the residual test incorrectly rejects the true sink.
\end{example} 

\paragraph{How do we surpass the issue? Our proposal: exploit the score directly for causal asymmetry.} The residual test fails because it depends on the global structure of the density, while the score is a local property of the density at each data point, and is preserved under truncation~\eqref{eq:score-preserved}. Therefore, we need to exploit this property of the score directly for a novel method for causal discovery. Since we have seen that Proposition 3 remains true under selection, the remaining challenge is to identify the sink nodes given the skeleton.

\section{Causal Discovery Method Robust to Hard Selection}\label{sec:method}
\newcommand{\IntuiPic}{\begin{tikzpicture}[
    every node/.style={font=\large},
    arr/.style={->, thick, >=stealth}
]

\node (Np) at (3, 1.5)   {\textcolor{orange}{$N_p$}};
\node (Xp) at (2, 1)   {$X_p$};
\node (Xc) at (2, 0)   {$X_c$};
\node (Nc) at (3, 0.5)   {\textcolor{orange}{$N_c$}};
\node (dots1) at (1, 2) {$\cdots$};
\node (dots2) at (0.6, 1) {$\cdots$};

\draw[arr, orange]        (Np)    -- (Xp);
\draw[arr]                (dots1) -- (Xp);
\draw[arr]                (Xp)    -- (Xc);
\draw[arr, orange]        (Nc)    -- (Xc);
\draw[arr]                (dots2)    -- (Xp);

\end{tikzpicture}}
We present our main result in three-fold: we first introduce the intuition of the method in Section~\ref{sec:intuition}, then in Section~\ref{sec:chess} we give the main theoretical result under linear ANM assumptions (with non-Gaussian noise) in Proposition~\ref{prop:rank-cond-L} and the more general result for non-linear ANM in Proposition~\ref{prop:rank-cond-nL}. 


\subsection{Intuition and Definitions}\label{sec:intuition}
As shown in the example in Section~\ref{sec:motivation}, the residual test \citep{montagnaScoreMatchingRoof2025} fails under hard-selected data in general (in non-Gaussian noise cases). 

\vspace{-3mm}
\begin{wrapfigure}{r}{0.25\textwidth}
\vspace{-10pt}
    \centering
    \IntuiPic
    \caption{In the subgraph, the sink node $X_c$ is only d-connected to its own noise $N_c$ after conditioning on $X_{\PA_c}$, while $X_p$ is d-connected to $N_p$ and $N_c$ after conditioning on $X_{\PA_p} \cup X_{\CH_p}$.}\label{fig:graph_noise}
    \vspace{-2.5em}
\end{wrapfigure}

\looseness=-1\paragraph{Intuition: Asymmetry of Number of Independent Noise Sources.} The key intuition is that the score of a sink node has exactly \emph{one} independent noise term that contributes to it, after conditioning on all its neighbors\footnote{In practice (Algorithm~\ref{alg:main}), we perform skeleton discovery first to identify the neighbors of each node.}, while a non-sink node has $\geq 2$ independent noise terms. Consider the graph in Figure~\ref{fig:graph_noise}, the path between a non-sink node and the noise of its child is opened by conditioning on its child, while the argument does not apply to a sink node that does not have child terms.

\paragraph{Algebraic Connection: } To translate the number of independent noises into an algebraic criterion, we consider the score function $s(x)\coloneqq \nabla \log p(x)$. The main observation is that if we take the gradient of the score on node $X_i$ w.r.t all noise terms\footnote{Could be understood as taking the gradient w.r.t. all nodes with a change of basis}, we obtain a vector that has contributions from multiple source directions. The conditional rank (conditioning on all neighbors of $X_i$) of the variance of such a vector's outer product indicates the number of independently varying factors, e.g., for a sink node the rank is $1$ (for non-Gaussian noise), which we use as an identification rule to distinguish a sink node from a non-sink node. 

In light of the intuition and algebraic connection above, we formally define the Conditional Hessian ($M_i^{\mathrm{cond}}$) outer product matrices in Definition~\ref{def:HM}.  
\begin{definition}[Conditional Hessian outer product matrix]\label{def:HM}
For each node $X_i$, define the $i$-th Hessian row vector evaluated at $x$:
\begin{equation}\label{eq:hessian-row}
    H_i(x) := \left(\frac{\partial^2 \log p(x)}{\partial X_i \partial X_1},\; \ldots,\; \frac{\partial^2 \log p(x)}{\partial X_i \partial X_d}\right) \in \bbR^d.
\end{equation}

We define the \emph{conditional gradient outer product matrix} as:
\vspace{-3pt}
\begin{equation}\label{eq:Mjcon}
    M_i^{\mathrm{cond}} \rev{(z)}\coloneqq
  \Var\bigl(H_i(\mathbf{X}) \,\big|\, X_{\NB_i}\rev{=z}\bigr)
 \in \mathbb{R}^{d \times d},
\end{equation}
where $X_{\text{NB}_i}$ denotes the set of neighbors of $X_i$.

\end{definition}


\paragraph{Hessian Row Structure: sinks vs.\ Non-sinks}

The Hessian row $H_i$ is the gradient of the score component $s_i(x) = \partial_{X_i} \log p(x)$. By the chain rule applied to \eqref{eq:score-decomp}:
\begin{equation}\label{eq:Hj-decomp}
    H_i(x) = \underbrace{s_i'(N_i) \cdot \nabla_x N_i}_{\circled{1}} \;-\; \sum_{k \in X_{\CH_i}} \Big[\underbrace{s_k'(N_k) \cdot \frac{\partial f_k}{\partial x_i} \cdot \nabla_x N_k}_{\circled{2}\text{child noise direction}} + \underbrace{s_k(N_k) \cdot \nabla_x\!\left(\frac{\partial f_k}{\partial x_i}\right)}_{\circled{3}\text{mechanism curvature}}\Big],
\end{equation}
where $N_i  = x_i - f_i(x_{\PA_i})$ is the noise term at node $X_i$, so $\nabla_x N_i = e_i - \nabla_{x} f_i(x_{\PA_i})$ and $e_i$ is the $i$-th standard basis vector. For a sink node, the whole summation term over all children vanishes, so $H_i$ only has one noise term direction contributed by $N_i$:
\begin{equation}\label{eq:Hj-decomp-sink}
    H_l(x) = s_l'(N_l) \cdot \nabla_x N_l \quad (\text{only term } \circled{1} \text{ remains}).
\end{equation}
Such direction $\nabla_x N_i$ is a constant vector if the mechanism is linear, or a varying vector function of $x$. 

\begin{examplebox}
    In the following section, we will formalize the intuition for the linear ANM case first: under linearity, $\circled{3}$ vanishes and non-sink node $H_i$ has $|\CH_i| + 1$ noise term directions contributed by $N_i$ in \eqref{eq:Hj-decomp}, while sink node $H_l$ only has one noise term direction contributed by $N_l$ in \eqref{eq:Hj-decomp-sink}.
\end{examplebox}
\subsection{Main Results: from Linear to Nonlinear ANM}\label{sec:chess}
First, we present the result under linear ANM with non-Gaussian noise, which clearly illustrates the intuition of the method and the algebraic connection between the number of independent noise terms and the rank of $M_i^{\mathrm{cond}}$. Then we present the more general result for nonlinear ANM with both Gaussian and non-Gaussian noise.

For a linear ANM, the decomposition \eqref{eq:Hj-decomp} simplifies to a combination of noise term directions (only terms $\circled{1}$ and $\circled{2}$ remain, and $\nabla N_j, j \in \{i\}\cup \CH_i$ is a constant vector). The rank of $M_i^{\mathrm{cond}}$ then directly counts the number of independent noise terms that contribute to the variance of $H_i$ after conditioning on the neighbors of $X_i$: conditioning on the parents of $X_i$ blocks the contribution of the noise terms from the upstream (parents), while conditioning on the children of $X_i$ still keeps the contribution of the noise terms of the children.
\begin{propbox}
\begin{restatable}[Connection between Conditional Rank and Independent Noise Terms in Linear ANM]{proposition}{RANKCONDLIN}\label{prop:rank-noise-count}
Under a linear non-Gaussian ANM, the rank of $M_i^{\mathrm{cond}}\rev{(z)}$ equals
the number of independent noise terms contributes to $s_i$ by conditioning on $X_{\NB_i}$ \rev{for generic $z$\footnote{By genericity, we exclude the degenerate cases where the graph structure is not faithfully reflected in the rank property, similar to the cases where standard faithfulness is violated. See Remark. }}:
\[
  \rank\bigl(M_i^{\mathrm{cond}}\rev{(z)}\bigr) \;=\; |\mathcal{N}_i|,
\]
where the \emph{noise index set} $\mathcal{N}_i \;:=\; \{i\} \cup \CH_i$.

\end{restatable}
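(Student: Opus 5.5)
In the linear case $X = BX + N$ with $B$ strictly lower triangular after a topological reordering. Write $A := I - B$, so that $N = AX$ and $\nabla_x N_j = a_j$, where $a_j$ is the $j$-th row of $A$ and does not depend on $x$. By \eqref{eq:Hj-decomp}, term $\circled{3}$ vanishes and $\partial f_k/\partial x_i = b_{ki}$ is a constant, so
\[
H_i(x) \;=\; s_i'(N_i)\, a_i \;-\; \sum_{k \in \CH_i} b_{ki}\, s_k'(N_k)\, a_k ,
\]
with $b_{ki}\neq 0$ for every child $k$. The plan is to write $H_i(\X) = A_{\mathcal{N}_i}^\top D\,\xi$. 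Here $\xi := (s_j'(N_j))_{j\in\mathcal{N}_i}$ is a random vector, $D=\mathrm{diag}(1,-b_{ki})$ is invertible, and $A_{\mathcal{N}_i}$ stacks the rows $a_j$, $j\in\mathcal{N}_i$. This gives
\[
M_i^{\mathrm{cond}}(z) \;=\; A_{\mathcal{N}_i}^\top D\, \Var(\xi \mid X_{\NB_i}=z)\, D\, A_{\mathcal{N}_i}.
\]
Since $A$ is unit triangular and invertible, its rows are linearly independent. Hence $A_{\mathcal{N}_i}^\top D$ has full column rank $|\mathcal{N}_i|$, and $\rank M_i^{\mathrm{cond}}(z) = \rank \Var(\xi\mid X_{\NB_i}=z)$. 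The whole claim thus reduces to showing that the conditional covariance of $\xi$ is nonsingular for generic $z$.

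The second step is to describe the conditional law of $(N_j)_{j\in\mathcal{N}_i}$ given $X_{\NB_i}=z$. Conditioning on $X_{\PA_i}=z_{\PA}$ fixes $f_i(x_{\PA_i})$, so $N_i = X_i - f_i$ still varies with $X_i$. For each child $k$, $N_k = X_k - \sum_m b_{km}X_m$; conditioning fixes $X_k$, so $N_k = z_k - b_{ki}X_i - \sum_{m\neq i} b_{km}X_m$. Every noise $N_k$, $k\in\CH_i$, therefore depends on the free coordinate $X_i$ with nonzero coefficient $-b_{ki}$. It also depends on the other, non-neighbor, co-parents and ancestors of $k$. Those remain free unless they are neighbors of $i$, which can happen when the co-parent is a parent of $i$. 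The conditional joint density of the unconditioned variables is proportional to $p(x)$ restricted to the fiber $\{x_{\NB_i}=z\}$. This density is strictly positive on the whole fiber by the positivity of the noise densities, so the conditional distribution has full support on an affine space of dimension $d-|\NB_i|\ge 1$.

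The main obstacle is the third step: nonsingularity of $\Var(\xi\mid z)$. I would argue by contradiction. Suppose there is $c\neq 0$ with $\sum_j c_j s_j'(N_j)$ almost surely constant on the fiber. By full support and continuity, this identity holds pointwise as a function of the free coordinates. I would then differentiate it. First take directions that move only the free coordinates entering one child's noise, or only $X_i$. Non-Gaussianity says $s_j'$ is non-constant, so $s_j''\not\equiv 0$ on an open set. Taking the derivative in $X_i$ alone gives a relation $c_i s_i''(N_i) - \sum_k c_k b_{ki} s_k''(N_k)=0$. The hard part is separating the children's noises when they depend only on $X_i$ and on conditioned co-parents, since then all $N_k$ are affine functions of the single variable $X_i$. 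In that case the contradiction must come from non-Gaussianity in the form ``a nontrivial linear combination of $s_j'(\alpha_j t+\beta_j)$ is not constant in $t$''. This can fail only for non-generic $z$ or special noise densities. I would invoke the genericity assumption of the footnote to exclude exactly these degenerate shifts $\beta_j(z)$, treating them like faithfulness violations: the set of $z$ where the functions $t\mapsto s_j'(\alpha_j t+\beta_j(z))$ are linearly dependent modulo constants is declared non-generic. Linear independence of $\xi$ modulo constants then gives positive definiteness of $\Var(\xi\mid z)$, and the rank equals $|\mathcal{N}_i|$. For a sink, $\CH_i=\emptyset$, and the rank is $1$ exactly because $s_i'$ is non-constant (non-Gaussianity).
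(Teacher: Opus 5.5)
Your reduction is the same as the paper's. You write $H_i$ as a constant $d\times|\mathcal{N}_i|$ matrix times $\xi=(s_j'(N_j))_{j\in\mathcal{N}_i}$, which gives $M_i^{\mathrm{cond}}(z)=\tilde V_i\,\Sigma_i(z)\,\tilde V_i^\top$, and then everything rests on $\rank\Sigma_i(z)$. Your unit-triangularity argument for full column rank ($I-B$ is invertible and $b_{ki}\neq 0$) improves on the paper's ``in general''. You are also right that co-parents of the children that are not adjacent to $X_i$ stay free on the fiber.

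The gap is in step three, the nonsingularity of $\Var(\xi\mid X_{\NB_i}=z)$, which the paper also only asserts. You handle failures by declaring the bad $z$ non-generic, but the failure need not depend on $z$ at all.

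\textbf{Counterexample for every $z$.} Take $X_1=N_1$, $X_2=b_2X_1+N_2$, $X_3=b_3X_1+N_3$, with $N_j$ i.i.d.\ with density $\propto e^{-t^4}$. This density is smooth, strictly positive and non-Gaussian, and $s_j'(t)=-12t^2$. Given $(X_2,X_3)=(z_2,z_3)$, the only free variable is $N_1$, and
\[
\xi=-12\bigl(N_1^2,\;(z_2-b_2N_1)^2,\;(z_3-b_3N_1)^2\bigr)\in\mathrm{span}\{1,N_1,N_1^2\}.
\]
Modulo constants this span is two-dimensional, so some $c\neq 0$ makes $c^\top\xi$ constant. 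Hence $\rank M_1^{\mathrm{cond}}(z)\le 2<3=|\mathcal{N}_1|$ for \emph{every} $z$ and every choice of edge weights.

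\textbf{Even the weaker rank gap can fail for every $z$.} Take $X_1\to X_2$ with Gumbel noises of scales $\sigma_1,\sigma_2$ and $b=-\sigma_2/\sigma_1$. Then $s_2'(N_2)\propto s_1'(N_1)$ on every fiber. So even the bound $\rank\ge 2$ needed in Proposition~\ref{prop:rank-cond-L}(b) fails for all $z$.

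\textbf{What is missing.} Genericity in $z$ cannot carry this step. You need an explicit nondegeneracy hypothesis on the noise scores and edge weights. For example: the functions $x\mapsto s_j'(N_j(x))$, $j\in\mathcal{N}_i$, are linearly independent modulo constants on the fiber $\{x_{\NB_i}=z\}$. In the single-free-variable case this says that the shifted and rescaled copies $t\mapsto s_j'(\alpha_jt+\beta_j)$ are linearly independent modulo constants. This is the only reading under which the footnote's ``genericity'' makes the statement true.

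Under such a hypothesis, your full-support argument finishes the proof. Without it, the claimed equality $\rank M_i^{\mathrm{cond}}(z)=|\mathcal{N}_i|$ is false. Your differentiation sketch for the case with free co-parents has the same issue: children sharing one free co-parent give an affine map to $(N_j)_{j\in\mathcal{N}_i}$ of deficient rank.
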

\end{propbox}
\looseness=-1\textbf{Proof Sketch.} The proof consists of two main steps: (1) express $M_i^{\mathrm{cond}}$ as a sum of rank-1 outer products $\mathbf{v}_j \mathbf{v}_j^\top$ weighted by the variance of the noise terms; (2) show that the vectors $\mathbf{v}_j$ corresponding to different noise terms are linearly independent, so the rank of the sum equals the number of noise terms.

To illustrate the intuition of Proposition~\ref{prop:rank-noise-count}, we present the following example of a chain structure, where the sink node has only one independent noise term while the non-sink node has two independent noise terms after conditioning on its neighbors.
\begin{restatable}[Number of Independent Terms in Chain]{example}{CHAIN}\label{ex:chain}
Consider a chain $X_1\rightarrow X_2\rightarrow X_3$, where 
$X_1 = N_1$, $X_2 = aX_1 + N_2$, $X_3 = bX_2 + N_3$ with non-Gaussian noise.
\end{restatable}
\begin{proof}[Illustration of Example~\ref{ex:chain}]
For non-sink $X_1$ and $X_2$, both have two independent noise terms that contribute to their score function after conditioning on their neighbors, thus
the rank of the conditional outer product matrix $M_1^\mathrm{cond}$ and  $M_2^\mathrm{cond}$ (Eqn~\eqref{eq:Mjcon}) is $2$, while  $M_3^\mathrm{cond}$ has rank $1$. The reason is that, for source node $X_1$, conditioning on its neighbor (child in this case) $X_2=z_2$ induces dependency of $N_2$ on $N_1$, thus leading to the direction of $N_1$ and a non-affine transformation of $N_1$, while for a sink node $X_3$, conditioning on its neighbor (parent in this case) will only leave a single direction term contributed by $N_3$. A formal proof is provided in Appendix~\ref{app:example}
\end{proof}

\vspace{-5pt}
\begin{propbox}
\begin{restatable}[Conditional Rank Test for Sink Identification (Linear)]{proposition}{RANKCONDL}\label{prop:rank-cond-L}
    Let $\X$ be generated by a restricted \emph{linear} ANM~\eqref{eq:anm} with independent noise terms. Then \rev{for generic $z$}:
    \begin{enumerate}[label=(\alph*),nosep]
        \item \textbf{Sink.} If $X_l$ is a sink (i.e., $\CH_l = \emptyset$), then  $\rank \!\bigl(M_l^{\mathrm{cond}}\rev{(z)}\bigr)= 1$.
        \item \textbf{Non-sink.} If $X_i$ is not a sink (i.e., $|\CH_i| \geq 1$), then  $\rank \!\bigl(M_i^{\mathrm{cond}}\rev{(z)}\bigr) \geq  2$.
    \end{enumerate}
    Therefore, a node $X_i$ is a sink if and only if $\rank (M_i^{\mathrm{cond}}\rev{(z)}) = 1$. 
\end{restatable}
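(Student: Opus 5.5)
The plan is to reduce the statement to Proposition~\ref{prop:rank-noise-count} and then make the two rank counts explicit from the decomposition \eqref{eq:Hj-decomp}. In a linear ANM $X = BX + N$, every $\nabla_x N_j = e_j - b_j$ is a constant vector, where $b_j$ is the $j$-th row of $B$. Term \circled{3} vanishes, so
\[
H_i(x) = s_i'(N_i)\,\nabla_x N_i - \sum_{k\in \CH_i} B_{ki}\, s_k'(N_k)\,\nabla_x N_k .
\]
Writing $u_j := \nabla_x N_j$ and $W := (s_j'(N_j))_{j\in\mathcal{N}_i}$, we get $H_i = U c(W)$, where $U$ has columns $u_j$ for $j\in\mathcal{N}_i = \{i\}\cup\CH_i$ and $c$ carries the coefficients $1$ and $-B_{ki}$. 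It follows that
\[
M_i^{\mathrm{cond}}(z) = U\,D\,\Var\bigl(W \mid X_{\NB_i}=z\bigr)\,D\,U^\top ,
\]
with $D$ the diagonal matrix of these coefficients, which are nonzero because $k\in\CH_i$ means $B_{ki}\neq 0$. The vectors $u_j = e_j - b_j$ are linearly independent, since $I-B$ is invertible for a DAG. Hence $\rank M_i^{\mathrm{cond}}(z) = \rank \Var(W\mid X_{\NB_i}=z)$, and both parts reduce to counting how many of the $s_j'(N_j)$ genuinely vary under the conditioning.

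\textbf{(a) Sink.} Conditioning on $X_{\NB_l}=X_{\PA_l}=z$ fixes $f_l(z)$ and leaves $N_l = X_l - f_l(z)$ free. The other variables are irrelevant because $H_l$ depends on $x$ only through $N_l$. The matrix is therefore $\Var(s_l'(N_l)\mid z)\,u_lu_l^\top$, whose rank is at most one. It equals one exactly when $s_l'$ is non-constant on the support of $N_l$, which is precisely non-Gaussianity of $N_l$. Because $N_l$ is independent of $X_{\PA_l}$, this conditional variance equals the unconditional one, so the rank is $1$ for every $z$.

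\textbf{(b) Non-sink.} Take $X_i$ with some child $X_k$. By Proposition~\ref{prop:rank-noise-count}, the rank equals $|\mathcal{N}_i|\ge 2$ for generic $z$. To make this self-contained, I would show directly that, given $X_{\NB_i}=z$, the pair $(N_i,N_k)$ retains a one-dimensional degree of freedom along which $s_i'(N_i)$ and $s_k'(N_k)$ vary non-proportionally. Fixing $X_k$ together with the other parents of $k$ gives $N_k = z_k - B_{ki}X_i - \text{const}$. Fixing $\PA_i$ gives $N_i = X_i - \text{const}$. So $X_i$ is free, with conditional density proportional to $p_{N_i}(\cdot)\,\prod_k p_{N_k}(\cdot)$, which has positive density by the strict positivity assumption. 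Then $s_i'(N_i)$ and $s_k'(N_k)$ are $s_i'(t-a)$ and $s_k'(c-B_{ki}t)$ as functions of a single variable $t$. Their covariance matrix is singular only if some combination $\alpha s_i'(t-a)+\beta s_k'(c-B_{ki}t)$ is a.s.\ constant in $t$.

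The main obstacle is exactly this step. Conditioning collapses all the noises of $\mathcal{N}_i$ into functions of one free scalar $t$ (plus spouse noises), so linear independence of the $s_j'$ terms is not automatic. I would argue that for non-Gaussian noise at least one of $s_i'$, $s_k'$ is non-constant. A relation $\alpha s_i'(t-a) = -\beta s_k'(c-B_{ki}t) + \text{const}$ holding for all $t$ can then hold only for a measure-zero set of shifts $(a,c)$, that is, of values $z$, unless the noise densities satisfy a special functional identity. Such identities are excluded by the genericity/faithfulness-type assumption in the footnote to Proposition~\ref{prop:rank-noise-count}. This yields rank $\ge 2$ for generic $z$. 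Combining (a) and (b) gives the stated equivalence.
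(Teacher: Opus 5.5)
Your proposal is essentially the paper's argument, and it is correct once the three repairs below are made.

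\textbf{Where you match or improve on the paper.} You factor $M_i^{\mathrm{cond}}(z)=\tilde V_i\Sigma_i(z)\tilde V_i^\top$ as in Proposition~\ref{prop:rank-noise-count}, get the sink case from the rank-one form using $N_l\ind X_{\PA_l}$, and reduce the non-sink case to $\rank\Sigma_i(z)$. You are more careful than the paper in two places:
\begin{itemize}
\item You justify full column rank of $\tilde V_i$ through invertibility of $I-B$; the paper only says ``in general''.
\item You see that conditioning on a child $X_k$ ties $N_i$ and $N_k$ to a single free scalar. The paper's sketch instead asserts that the noise terms are independent of $X_{\NB_i}$, which is false for $N_i$ and $N_k$. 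Your resulting criterion is exactly the linear case of \eqref{eq:affine} in the paper's nonlinear proof.
\end{itemize}

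\textbf{Repairs needed in part (b).}
\begin{itemize}
\item You need \emph{both} $s_i'$ and $s_k'$ to be non-constant, not ``at least one''. If $N_k$ is Gaussian, a node whose only child is $X_k$ gets rank one. So non-Gaussianity of all noises must be assumed, as in Proposition~\ref{prop:rank-noise-count}.
\item $\PA_k\setminus\{i\}$ need not lie in $\NB_i$, so fixing the spouses is extra conditioning. Transfer the bound back using $\Var(W\mid z)\succeq\bbE[\Var(W\mid z,X_{\mathrm{sp}})\mid z]$; an average of PSD matrices of rank at least $2$ still has rank at least $2$.
\item The exceptional case is not a null set of $z$.
\begin{itemize}
\item After substituting $t\mapsto t+a$, the relation depends on $z$ only through $c'=c-B_{ki}a$.
\item The shifts $\tau$ with $s_k'(\cdot+\tau)\in\operatorname{span}\{1,s_k'\}$ form a subgroup of $\bbR$. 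By Steinhaus, the bad set of $c'$ is therefore either null or all of $\bbR$.
\item The latter forces $s_k'(v)=\mu e^{\kappa v}+\nu$ and $s_i'(u)=\mu' e^{-\kappa B_{ki}u}+\nu'$.
\item Gumbel noise, which the paper's own experiments use, is of this type: $s_j'(u)=-\sigma_j^{-2}e^{-u/\sigma_j}$. With $B_{ki}=-\sigma_k/\sigma_i$, a non-sink with a single child has $\rank M_i^{\mathrm{cond}}(z)=1$ for \emph{every} $z$.
\end{itemize}
Choosing a generic $z$ therefore cannot remove this case. What you, and the paper's footnote, must exclude is a coincidence of model parameters, not Gumbel-type densities as such.
\end{itemize}
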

\end{propbox}
\vspace{-7pt}

\begin{proof}[Proof sketch] This is a direct consequence of Proposition~\ref{prop:rank-noise-count} and the fact that the rank of $M_i^{\mathrm{cond}}$ is invariant to conditioning on $X_{\NB_i}$ (since the noise terms are independent of $X_{\NB_i}$, so the variance of $s'_j(N_j)$ does not change after conditioning).
\end{proof}

\vspace{-5pt}
More generally, for the nonlinear ANM,  \emph{the rank gap between sink and non-sink nodes still exists}, because the decomposition \eqref{eq:Hj-decomp} contains an additional mechanism curvature term, and this term is generally linearly independent of the noise term directions. Due to space limit, we present {the informal Proposition~\ref{prop:rank-cond-nL-informal} and the full proposition under nonlinear ANM in the Appendix (Proposition~\ref{prop:rank-cond-nL}).
\begin{proposition}[Informal Conditional Rank Test for Sink Identification (Nonlinear)]\label{prop:rank-cond-nL-informal}
    Let $\X$ be generated by a restricted \emph{nonlinear} ANM~\eqref{eq:anm} with independent noise terms. Then the rank gap of $\rank(M_i^{\mathrm{cond}}\rev{(z)})$ \rev{for generic $z$} between sink and non-sink node still exists {and is at least one}.
\end{proposition}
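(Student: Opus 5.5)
The proof compares the two sides of the decomposition \eqref{eq:Hj-decomp} after conditioning on $X_{\NB_i}=z$. The plan is to show that the sink's conditional Hessian row varies along a single direction, while a non-sink's row contains at least one additional, independently varying direction.

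\textbf{Sink side.} For a sink $X_l$, \eqref{eq:Hj-decomp-sink} gives $H_l(x)=s_l'(N_l)\,\nabla_x N_l$ with $\nabla_x N_l = e_l-\nabla_x f_l(x_{\PA_l})$. Because $\NB_l=\PA_l$, conditioning on $X_{\NB_l}=z$ freezes the vector $\nabla_x N_l$ at the constant $e_l-\nabla f_l(z)$. Only the scalar $s_l'(N_l)$ remains random, and $N_l=X_l-f_l(z)$ keeps a nondegenerate law. Hence $M_l^{\mathrm{cond}}(z)=\Var(s_l'(N_l))\,v v^\top$, which has rank at most $1$. The rank equals $1$ when the noise is non-Gaussian, and equals $0$ in the Gaussian case, where $s_l'$ is constant; this matches the variance test of Proposition~\ref{prop:var-test-trunc}. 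Writing $r_l\in\{0,1\}$ for this rank, it then suffices to show $\rank M_i^{\mathrm{cond}}(z)\ge r_l+1$ for every non-sink $X_i$, in each noise regime.

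\textbf{Non-sink side.} Take a child $k\in\CH_i$. Conditioning on $X_{\NB_i}=z$ fixes $x_{\PA_i}$ and $x_{\CH_i}$, but not $X_i$ itself, nor the non-neighbour co-parents of the children. $X_i$ therefore still varies, driven by $N_i$, and with it $N_k=x_k-f_k(x_{\PA_k})$ varies nontrivially, since restricted-ANM identifiability together with genericity of $z$ gives $\partial f_k/\partial x_i\neq 0$. The strategy is to exhibit two random scalars multiplying linearly independent vectors:
\begin{itemize}
\item the coordinate-$i$ direction, arising from $\circled{1}$, $\circled{2}$ and $\circled{3}$;
\item a direction with a nonzero component in $e_k$, or in the coordinates of $\PA_k\setminus\{i\}$.
\end{itemize}
The key observation is that $\nabla_x N_i$ has zero $k$-th coordinate, while $\nabla_x N_k$ has $k$-th coordinate $1$. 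Restricting $H_i$ to the $2$-dimensional coordinate subspace spanned by $e_i,e_k$, the conditional covariance is a $2\times2$ matrix. I would show it is nonsingular by proving that the pair $(H_{ii},H_{ik})$ is not supported on a line. Here
\[
H_{ik}=-s_k'(N_k)\,\partial_i f_k,
\]
and $H_{ii}$ contains $s_i'(N_i)$, $s_k'(N_k)(\partial_i f_k)^2$ and $s_k(N_k)\,\partial_i^2 f_k$. Since these are different functions of the single varying quantity $X_i$, an affine relation between them would force a functional (ODE-type) identity among $s_i', s_k', s_k, f_k$. That identity is exactly the excluded degenerate case, and it is tied to the restricted-ANM condition of \citet{buhlmannCAMCausalAdditive2014}.

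\textbf{The hard cases.} The main obstacle is the Gaussian nonlinear case, where $s_i', s_k'$ are constants, together with its linear non-Gaussian counterpart. In the Gaussian nonlinear case, variability must come from $s_k(N_k)\,\nabla_x(\partial_i f_k)$ and from $(\partial_i f_k)^2$ as functions of $X_i$. I would argue that, for generic $z$, these two functions are not affinely related, and that nonlinearity of $f_k$ makes $\partial_i^2 f_k\neq 0$ on a positive-measure set. The target is rank $\ge1=r_l+1$, so only one non-constant coordinate is needed; $H_{ik}$ or $H_{ii}$ being nonconstant suffices.

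For the non-Gaussian case the target is rank $\ge2$, which reduces to Proposition~\ref{prop:rank-noise-count}. Its linear-independence argument carries over, and the extra curvature term $\circled{3}$ cannot cancel the two directions except on a non-generic set.

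\textbf{Genericity and truncation.} I would formalize ``generic'' as: the set of $z$ for which the relevant $2\times2$ determinant, an analytic-type function of $z$, vanishes identically on the fibre has measure zero. Finally, invoke the preservation of $\nabla^2\log p$ on $\operatorname{int}(\cB)$ from the remark after \eqref{eq:score-preserved}. Conditioning further on $S=1$ only restricts the fibre to a positive-measure subset, and the non-affine relations persist on any such subset by analyticity/identity-theorem arguments. So the rank gap of at least one holds under truncation as well.
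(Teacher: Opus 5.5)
Your proposal follows essentially the paper's proof. The sink computation $M_l^{\mathrm{cond}}(z)=\Var(s_l'(N_l))\,c_l(z)c_l(z)^\top$ is identical, and for a non-sink the paper likewise looks at the entries in positions $i$ and $k$, arguing that rank one would force an affine relation, in the single free variable $N_i$, between $s_i'$ and $s_k'\circ q_k$, which it rules out for generic $z$.

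Your version is more careful in places: you keep $\partial_i f_k$, $\nabla N_k$ and non-neighbour co-parents random given $z$, and you get the Gaussian bound $\geq 1$ from one non-constant coordinate. One small fix: take $k$ topologically last in $\CH_i$, so that $H_{ik}=-s_k'(N_k)\,\partial_i f_k$ has no cross terms from other children. In both your argument and the paper's, the exclusion of the affine relation is asserted rather than derived, and it is not literally the restricted-ANM condition. It should therefore be read as the faithfulness-type genericity assumption built into the statement.
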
}
For a summary of the rank values for sink vs.\ non-sink nodes across different ANM settings, see Table~\ref{tab:rank-cond}. In particular, the rank gap between sink and non-sink nodes exists (thus sink nodes are identifiable) in all settings except for the linear Gaussian case, where all nodes have rank $0$.



\begin{table}[htbp]
\centering
\renewcommand{\arraystretch}{1.3}
\begin{tabular}{@{}lccl@{}}
\toprule
{Setting} & {Sink rank} & {Non-sink rank} & {Gap} \\
\midrule
\textcolor{black}{Linear  Gaussian} & \textcolor{black}{$0$} & \textcolor{black}{$0$} & \textcolor{black}{$0$ (not identifiable)} \\

Linear  non-Gaussian & $1$ & $\geq 2$ & $ \geq 1$\\
Nonlinear  Gaussian & $0$ & $\geq 1$ & $\geq 1$ \\
Nonlinear  non-Gaussian & $1$ & $\geq 2$ & $ \geq 1$ \\
\bottomrule
\end{tabular}
\vspace{5pt}
\caption{Rank of $M_i^{\mathrm{cond}}\rev{(z)}$ for sink vs.\ non-sink nodes across different ANM settings.}\label{tab:rank-cond}
\end{table}

{\begin{remark}[Practical Usage of Proposition~\ref{prop:rank-cond-L} and Table~\ref{tab:rank-cond}] Without knowledge about the type of ANM and type of noise one is dealing with, a $\rank(M_i^\text{cond})=1$ result could either imply a sink node in a linear non-Gaussian model, or a non-sink node in a nonlinear Gaussian model. In practice, for either case, we do not bother to distinguish the model / noise types, but only take the node that has the minimal rank to be the current sink node (and we perform this procedure iteratively), i.e., as long as a gap exists.\end{remark} }

\vspace{-5pt}

\paragraph{Discussion of Unconditional Hessian Rank Test}
Why do we need the conditioning on the neighbors? Can we just use the unconditional Hessian outer product matrix $M_i = \bbE[H_i(\X) H_i(\X)^\top]$ to identify sink nodes? The short answer is yes, but it requires strong structural constraints. {In Appendix~\ref{sec:proofs:uncond}, we introduce the \emph{Unconditional Rank Test}. However, it requires stronger structural assumptions (involving the number of $|\PA_i|$ and  $|\CH_i|$) since the noise coming from the parents is not blocked by conditioning on the neighbors anymore. On the other hand, the Conditional Hessian Rank Test proposed in Proposition~\ref{prop:rank-cond-L} is free from the structural assumptions and thus more practical.}



\section{Estimation}\label{sec:estim}
In this section, we present a practical estimation procedure for causal discovery based on the conditional Hessian rank test. This procedure involve $4$ main steps:(1) estimation of Hessian log-density ($H_j(x^{(i)})$), (2) use the estimated Hessian $H_j(x^{(i)})$ to estimate the conditional variance matrix $\widehat{M}_j^{\,\mathrm{cond}}$ , (3) compute the  rank of $\widehat{M}_j^{\,\mathrm{cond}}$ to identify sinks, and (4) . The overall algorithm is provided in Algorithm~\ref{alg:main} and more details on the estimation procedure in Appendix~\ref{sec:score-estim}.
\paragraph{Score and Hessian Estimation} 
Estimation of the score function (and its Jacobian) is a well-studied problem in statistics and machine learning, e.g., the Stein estimator~\citep{stein1972bound,rollandScoreMatchingEnables2022,montagnaScoreMatchingRoof2025}, the kernel density estimator (KDE)~\citep{sasaki2016direct}, and most recently a diffusion model based estimator~\citep{sanchezDiffusionModelsCausal2023}. In this paper, we apply the commonly used Stein-estimator, and also a diffusion model based estimator, because it does not rely on a strictly positive density $p(x)$ while both Stein and KDE estimators suffer when $p(x)\approx 0$ (See Remark~\ref {rem:boundary}).

\begin{remark} (Practical Estimation of $M_i^\text{cond}$)
    In practice, we use a residual estimator $\widehat{M}_i^{\text{cond}} := \frac{1}{n}\sum_{t=1}^n r_i(x^{(t)}) r_i(x^{(t)})^\top$ as a proxy, where $r_i(x) := H_i(x) - \hat{m}_i(x_{\text{NB}_i})$ is the residual of using  $\hat{m}_i(\cdot)$ to estimate $H_i$. The residual estimator $\widehat{M}_i^{\text{cond}}$ shares a similar spirit with the estimation method in NoGAM~\citep{montagnaScoreMatchingRoof2025}, which also uses the residual of a regression to estimate the noise. Further details are discussed in the Appendix.~\ref{sec:app:estim}
\end{remark}

\section{Experiments}\label{sec:exp}
Given the proposed method, we conduct experiments to evaluate its performance on both synthetic (Section~\ref{sec:synthetic}) and real-world (Section~\ref{sec:realdata}) datasets. The goal is to show that ($1$) for general causal discovery, the proposed method performs competitively with state-of-the-art baselines, and ($2$) in the presence of hard selection bias, the proposed method outperforms all baselines that do not explicitly account for selection bias.

\subsection{Synthetic Experiments}\label{sec:synthetic}
\paragraph{Data generation.}
We generate Erd\H{o}s--R\'enyi DAGs with $d \in \{5, 10\}$ nodes and target average degree $2.0$. 
We consider three SEM regimes: ($1$)Linear non-Gaussian (Linear-nG).
($2$){Nonlinear Gaussian (MLP-G): }$X_i = f_i(\mathrm{Pa}(X_i)) + \epsilon_i$, with $\epsilon_i\sim\mathcal{N}(0,\sigma_i^2)$ and $f_i$ a randomly initialised $1$-hidden-layer MLP.
 ($3$){Nonlinear non-Gaussian (MLP-nG).} Same MLP mechanism as MLP-G but with Gumbel or log-normal noise as in Linear-nG.

\paragraph{Selection mechanism.}
We add $K$ latent selection variables $S_1,\dots,S_K$. We simulate the hardest case of selection bias where the selection mechanism depends on \emph{all} observed variables. Each $S_k$ is a random linear combination $S_k= \sum_{j=1}^d \alpha_{kj}\, X_j$ with coefficients $\alpha_{kj}\!\sim\!\mathcal{U}(-1, 1)$, and we keep the samples where $S_k \geq q_k$ for all $k$ simultaneously, where $q_k$ is the $1 - \rho$ empirical quantile, $\rho$ being the retaining fraction. After applying the selection mechanism, we keep sample sizes $n=1000$ (from an initial pool of $10^5$ samples) \rev{to avoid the interference of different sample sizes on the performance}.


\looseness=-1\textbf{Baselines. \ }
We compare against constraint-based methods PC and FCI~\citep{spirtes2000causation,spirtesCausalInferencePresence2013a}, score-based methods GES~\citep{chickering03_ges}, permutation-based method GRaSP~\citep{lam2022greedy}, and LiNGAM-family estimators ICA-LiNGAM~\citep{shimizu2006linear}
and DirectLiNGAM~\citep{shimizu2011directlingam}, together with score-matching baselines SCORE~\citep{rollandScoreMatchingEnables2022}, DAS~\citep{montagnaScalableCausalDiscovery2023a}, NoGAM~\citep{montagnaScoreMatchingRoof2025},
CAM~\citep{buhlmannCAMCausalAdditive2014}, and DiffAN~\citep{sanchezDiffusionModelsCausal2023}. We report F1 of the arrow on the estimated DAG with part of the baselines due to space limit\footnote{For details of the evaluation metrics (SHD, F1 adjacency, ordering accuracy, FPR), and results on all of the baselines, see Appendix~\ref{sec:expresults}.}, with one selection variables as a function of all observed variables. We also include sensitivity analysis with varying selection fraction $\rho \in \{1.0,0.1, 0.05\}$, selection shape $\{$\verb+halfspace+, \verb+annulus+, \verb+two boxes+ $\}$ in Appendix~\ref{sec:expresults}.

\paragraph{Results.} In the absence of selection bias ($\rho=1.0$), the proposed method performs competitively with the baselines across all SEM regimes, and outperforms all baselines when hard selection bias is present (in the linear-nG regime, the LiNGAM baselines sometimes outperform even when selection bias is present, which is discussed in Appendix~\ref{sec:expresults}). 

\begin{table}[htbp]
\centering
\caption{F1 (Arrow) for Linear-nG (gumbel), $d=5$ and $d=10$}
\label{tab:f1_arrow_gumbel_combined}
\renewcommand{\arraystretch}{1.2}
\setlength{\tabcolsep}{3pt}
\begin{tabular}{l|ccc|ccc}
\toprule
 & \multicolumn{3}{c|}{$d=5$, gumbel} & \multicolumn{3}{c}{$d=10$, gumbel} \\
fraction & $\rho=1.0$ & $\rho=0.1$ & $\rho=0.05$ & $\rho=1.0$ & $\rho=0.1$ & $\rho=0.05$ \\
\midrule
SCORE & $0.57$ \var{0.25} & $0.37$ \var{0.17} & $0.41$ \var{0.14} & $0.65$ \var{0.12} & $0.46$ \var{0.11} & $0.45$ \var{0.10} \\
DAS & $0.57$ \var{0.25} & $0.37$ \var{0.17} & $0.41$ \var{0.14} & $0.66$ \var{0.11} & $0.48$ \var{0.12} & $0.49$ \var{0.10} \\
NoGAM & $0.81$ \var{0.21} & $0.40$ \var{0.16} & $0.40$ \var{0.14} & $0.78$ \var{0.16} & $0.47$ \var{0.14} & $0.45$ \var{0.11} \\
CAM & $0.31$ \var{0.20} & $0.18$ \var{0.14} & $0.19$ \var{0.12} & $0.23$ \var{0.08} & $0.22$ \var{0.11} & $0.19$ \var{0.12} \\
DirectLiNGAM & $\mathbf{0.99}$ \var{0.03} & $0.64$ \var{0.14} & $0.65$ \var{0.16} & $\mathbf{0.98}$ \var{0.03} & $0.56$ \var{0.05} & $0.54$ \var{0.07} \\
DiffAN & $0.40$ \var{0.28} & $0.31$ \var{0.24} & $0.29$ \var{0.23} & $0.33$ \var{0.15} & $0.25$ \var{0.16} & $0.25$ \var{0.14} \\
CHess (Stein) & $0.90$ \var{0.14} & $\mathbf{0.76}$ \var{0.11} & $0.70$ \var{0.16} & $0.72$ \var{0.19} & $\mathbf{0.59}$ \var{0.14} & $0.51$ \var{0.09} \\
CHess (DiffAN) & $0.81$ \var{0.25} & $0.71$ \var{0.17} & $\mathbf{0.72}$ \var{0.11} & $0.64$ \var{0.15} & $0.49$ \var{0.15} & $\mathbf{0.64}$ \var{0.09} \\
\bottomrule
\end{tabular}
\end{table}

\subsection{Real-world experiments}
\label{sec:realdata}

\paragraph{Dataset.}
We use the protein-signalling dataset of \citet{sachs2005causal},
loaded through the \texttt{cdt} package, which yields $7466$ samples over $d=11$ proteins and a verified ground-truth DAG.  To probe robustness to selection bias in severe cases, we apply the selection mechanism as in Section~\ref{sec:synthetic} with $K=1, \rho=0.1$. \rev{After selection, there are $373$ samples, so we apply random downsampling to $n=373$ samples for the non-selection bias setting $\rho=1.0$ to avoid the interference of different sample sizes with pruning pamareter for a fair comparison.}



\begin{wraptable}{r}{0.5\textwidth}
\vspace{-2em}
\centering
\caption{F1 (Arrow) on Sachs Dataset for $\rho=1$ (downasmpled) and $\rho=0.1$}
\vspace{-3pt}
\label{tab:f1_arrow_extracted}
\renewcommand{\arraystretch}{1.2}
\setlength{\tabcolsep}{3pt}
\begin{tabular}{l c c}
\toprule
Method & $\rho=1$ ($n=373$) & $\rho=0.1$\\
\midrule
SCORE & $0.28$ \var{0.05} & $0.29$ \var{0.05} \\
DAS & $0.19$ \var{0.10} & $0.23$ \var{0.08} \\
NoGAM & $0.29$ \var{0.04} & $0.30$ \var{0.06} \\
CAM & $0.40$ \var{0.08} & $0.28$ \var{0.09} \\
PC & $0.25$ \var{0.09} & $0.23$ \var{0.07} \\
FCI & $0.13$ \var{0.04} & $0.14$ \var{0.11} \\
GES & $0.25$ \var{0.06} & $0.22$ \var{0.08} \\
GRaSP & $0.28$ \var{0.07} & $0.22$ \var{0.10} \\
ICA-LiNGAM & $0.27$ \var{0.08} & $0.30$ \var{0.07} \\
DirectLiNGAM & $0.30$ \var{0.07} & $0.33$ \var{0.09} \\
DiffAN & $0.30$ \var{0.08} & $0.19$ \var{0.05} \\
CHess (Stein) & $\mathbf{0.42}$ \var{0.09} & $0.32$ \var{0.12} \\
CHess (DiffAN) & $0.41$ \var{0.10} & $\mathbf{0.38}$ \var{0.12} \\
\bottomrule
\end{tabular}
\vspace{-3em}

\end{wraptable}


\paragraph{Results.} 
In both cases with and without selection bias, our method outperforms the baselines. \rev{Because Sachs essentially provide perturbation data and we aggregate all the data, the result should be interpreted with care. In addition, we provide results on separate sub-datasets in Appendix~\ref{sec:sachsexp}, with varying selection strengths and shapes}.
 Overall, these experiments consistently demonstrate that the proposed method achieves competitive performance compared with other baselines. 

\vspace{-2mm}
\section{Conclusion}
\vspace{-2mm}
In this paper, we have presented a novel approach to address the challenges of hard selection bias in causal discovery. Our method leverages the strengths of the score function, which is preserved under hard selection, allowing us to effectively identify causal relationships even in the presence of selection bias, while all baseline methods suffer under selection. We have demonstrated the effectiveness of our approach through experiments on both synthetic and real-world datasets.

\paragraph{Limitations and Future Work.} First, our method assumes hard selection bias, which may not always be the case in real-world scenarios. Future work could explore extensions of our method to handle soft (non-deterministic) selection bias. Additionally, although we provide the theoretical result that the score function is preserved under hard selection,  in practice, we found that it is hard to estimate when the truncation is severe. Therefore, further research is needed to develop more robust estimation techniques for the score-matching based causal discovery.




\bibliographystyle{plainnat}
\bibliography{ref}

\appendix
\section{Examples and Definitions}\label{sec:example}

\subsection{Definitions}\label{sec:define}
\begin{definition}\label{def:markov} (Markov Condition (\citet{spirtes2000causation,pearl2009causality}))
    Given a DAG $\cG$ and distribution $P$ over the variable set $\X$, every variable $X$ in $\X$ is probabilistically independent of its non-descendants given its parents in $\cG$.
\end{definition}

\begin{definition}\label{def:faith} (Faithfulness Assumption (\citet{spirtes2000causation,pearl2009causality}))
    Given a DAG $\cG$ and distribution $P$ over the variable set $\X$, there are no independencies between variables that are not entailed by the Markov Condition.
\end{definition}

\subsection{Illustrative Example: Linear Chain \texorpdfstring{$X_1 \to X_2 \to X_3$}{X1 -> X2 -> X3}}\label{app:example}
\CHAIN*
 The noise gradients are
\[
    \nabla N_1 = (1, 0, 0), \qquad \nabla N_2 = (-a, 1, 0), \qquad \nabla N_3 = (0, -b, 1).
\]

\paragraph{Non-sink $X_1$  has $\rank(M_1^{\mathrm{cond}})=2$}: $\NB_1 = \{X_2\}$.
$H_1(x) = (s'_{n_1}(N_1) + a^2 s'_{n_2}(N_2),\; -a\,s'_{n_2}(N_2),\; 0)$. Conditioning on $X_2 = z_2$ gives $N_2 = z_2 - aN_1$:
\[
    H_1 \mid z_2 = \bigl(\underbrace{s'_{n_1}(N_1) + a^2 s'_{n_2}(z_2 - aN_1)}_{q_1(N_1)},\;\; \underbrace{-a\,s'_{n_2}(z_2 - aN_1)}_{q_2(N_1)},\;\; 0\bigr).
\]
In general, $q_1$ and $q_2$ are distinct nonlinear functions of $N_1$, i.e. $q_1(N_1)$ is not an affine transformation of $q_2(N_1)$ under non-Gaussian noise assumption ($s'_{n_1}(t) \neq \mathrm{const}$). Then the conditional matrix $M_1^{\mathrm{cond}}(z_2)$ has the form
\[
    M_1^{\mathrm{cond}}(z_2) = \begin{pmatrix} \Var(q_1) & \Cov(q_1, q_2) & 0 \\ \Cov(q_1, q_2) & \Var(q_2) & 0 \\ 0 & 0 & 0 \end{pmatrix}.
\]
i.e. the components of $H_1 | z_2$ span a 2-dimensional function space, thus $\rank(M_1^{\mathrm{cond}}) = 2$. 

\paragraph{Non-sink $X_2$ has $\mathrm{rank}(M_2^{\mathrm{cond}})=2$}: $\NB_2 = \{X_1, X_3\}$.
Conditioning on $(X_1, X_3) = (z_1, z_3)$ gives $N_3 = z_3 - b(az_1 + N_2) = c - wN_2$:
\[
    H_2 \mid (z_1, z_3) = s'_{n_2}(N_2) \cdot (-a, 1, 0) + b\,s'_{n_3}(c - wN_2) \cdot (0, b, -1).
\]
Components involve $s'_{n_2}(N_2)$ and $s'_{n_3}(c - wN_2)$ which are generally independent functions. Following the same argument as for $X_1$, $M_2^{\mathrm{cond}}(z_1, z_3)$ also has $\rank(M_2^{\mathrm{cond}}) = 2$ .

\paragraph{Sink $X_3$ has $\mathrm{rank}(M_3^{\mathrm{cond}})=1$}: $\NB_3 = \{X_2\}$.
$H_3(x) = s'_{n_3}(N_3) \cdot (0, -b, 1)$. Therefore:
\[
    M_3^{\mathrm{cond}}(z_2) = \Var(s'_{n_3}) \begin{pmatrix} 0 & 0 & 0 \\ 0 & b^2 & -b \\ 0 & -b & 1 \end{pmatrix}.
\]
\[\text{so,  } \rank(M_3^{\mathrm{cond}}) = 1\]

Therefore, the sink $X_3$ is uniquely identified as the only node with $\rank(M_i^{\mathrm{cond}}) = 1, i\in \{1, 2, 3\}$.

\section{Proofs}\label{sec:proofs}

\subsection{Preservation of Skeleton Discovery}\label{sec:proofs:edge}
\begin{proposition}\label{prop:edge-test-trunc}
        Let $\X$ be generated by a restricted ANM~\eqref{eq:anm} with mutually independent noise terms and let $S=\mathbbm{1}(x\in \cB)$ denote a deterministic selection  for a truncation region $\cB$. For any $i, j \in [d]$,
        \begin{equation}
            \partial^2_{X_i X_j} \log p(x{\color{blue}{\mid S=1}}) = 0 \Longleftrightarrow X_i \ind X_j \mid \X_{\setminus \{i, j\}} \text{in } \cG.
        \end{equation}
\end{proposition}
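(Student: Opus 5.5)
The plan is to reduce Proposition~\ref{prop:edge-test-trunc} to the untruncated result, Proposition~\ref{prop:edge-test}, using second-order score preservation. Fix $i,j\in[d]$. For $x\in\operatorname{int}(\cB)$, Bayes' rule gives
\[
\log p(x\mid S=1)=\log p(x)+\log\mathbbm{1}(x\in\cB)-\log p(S=1).
\]
The indicator term vanishes on $\operatorname{int}(\cB)$, and the last term is a constant. Since $p$ is $C^2$ (the $f_k$ are $C^2$ and, as implicitly assumed by the paper, the noise densities are twice differentiable), differentiating twice gives $\partial^2_{X_iX_j}\log p(x\mid S=1)=\partial^2_{X_iX_j}\log p(x)$ on $\operatorname{int}(\cB)$. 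This is exactly the Hessian version of \eqref{eq:score-preserved} noted in the remark after it. The statement should therefore be read as ``$\partial^2_{X_iX_j}\log p(x\mid S=1)=0$ for all $x$ in the interior of the observed support'', which is the only region where the truncated density has a well-defined smooth log.

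\emph{Direction ($\Leftarrow$).} If $X_i\ind X_j\mid \X_{\setminus\{i,j\}}$, then Proposition~\ref{prop:edge-test} gives $\partial^2_{X_iX_j}\log p\equiv 0$ on $\bbR^d$. In particular it vanishes on $\operatorname{int}(\cB)$, and by the identity above so does the truncated cross-derivative.

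\emph{Direction ($\Rightarrow$).} This is the main obstacle: vanishing is now only known on $\operatorname{int}(\cB)$, whereas Proposition~\ref{prop:edge-test} needs it globally. I would use the explicit form from the ANM factorization. Write $\log p(x)=\sum_k \log p_{N_k}(x_k-f_k(x_{\PA_k}))$. The cross-derivative $\partial^2_{X_iX_j}$ is then a sum of terms coming only from factors $k$ with $\{i,j\}\subseteq\{k\}\cup\PA_k$, i.e. from an edge between $i$ and $j$ or from a common child (moralization). If $X_i$ and $X_j$ are not conditionally independent, such a factor exists and the cross-derivative is not identically zero as a function on $\bbR^d$.

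The key step is to upgrade ``not identically zero'' to ``not identically zero on the open set $\operatorname{int}(\cB)$'', which is nonempty since $\operatorname{vol}(\cB)>0$. I would first try a genericity/analyticity argument: if the mechanisms and noise log-densities are real analytic, the cross-derivative is real analytic, and vanishing on a nonempty open set forces it to vanish everywhere, contradicting Proposition~\ref{prop:edge-test}. Without analyticity, I would instead impose a local faithfulness condition in the spirit of the genericity footnote used for Proposition~\ref{prop:rank-noise-count}: the cross-derivative does not vanish identically on any open set when an edge or moral link is present. Such accidental local cancellation is a measure-zero, faithfulness-violating coincidence, like those the paper already excludes. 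With either route, $\partial^2_{X_iX_j}\log p(\cdot\mid S=1)\equiv 0$ on $\operatorname{int}(\cB)$ implies $\partial^2_{X_iX_j}\log p\equiv0$ globally. Proposition~\ref{prop:edge-test} then yields the conditional independence, completing the equivalence.
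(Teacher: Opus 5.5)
Your proof uses the same reduction as the paper. The paper's whole argument is that the result ``directly follows'' from preservation of the score, and hence of the Hessian, on $\operatorname{int}(\cB)$, combined with Proposition~\ref{prop:edge-test}; your ($\Leftarrow$) direction is exactly this. Where you go beyond the paper is in noticing that ($\Rightarrow$) does not follow from preservation alone. Vanishing is known only on $\operatorname{int}(\cB)$, whereas Proposition~\ref{prop:edge-test} needs it on all of $\bbR^d$.

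That extra step is genuinely necessary. Take $X_1 = N_1$ and $X_2 = f(X_1) + N_2$ with $N_2 \sim \mathcal{N}(0,\sigma_2^2)$, $f(t) = \max(|t|-1,0)^3$ (nonlinear, $C^2$, flat on $[-1,1]$), and $\cB = (-1,1)^2$. Then $\partial^2_{X_1X_2}\log p(x\mid S=1) = f'(x_1)/\sigma_2^2 = 0$ on $\operatorname{int}(\cB)$, even though $X_1 \to X_2$ is an edge. In fact the truncated density factorizes, so no method could detect the edge from these data. The statement as written therefore needs an extra hypothesis, and either of your routes supplies one:
\begin{itemize}
\item real analyticity of the mechanisms and noise log-densities, which covers e.g.\ linear or sigmoid-MLP mechanisms with Gaussian or Gumbel noise;
\item a local faithfulness condition.
\end{itemize}
State the latter as an explicit added assumption rather than as a ``measure-zero'' consequence of the paper's hypotheses, since nothing in the paper rules out locally flat $C^2$ mechanisms.

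One small correction: $\operatorname{vol}(\cB)>0$ does not imply $\operatorname{int}(\cB)\neq\emptyset$. A fat Cantor-type set, or $[0,1]^d$ minus a dense open set of small measure, has positive volume and empty interior. So nonemptiness of the interior must also be assumed. The paper assumes this implicitly, since it asserts score preservation only on $\operatorname{int}(\cB)$.
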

The proof of Proposition~\ref{prop:edge-test-trunc} directly follows of the preservation of score function under hard selection (truncation).

\subsection{Proofs of Conditional Hessian Rank Characterization}\label{sec:proofs:chess}
\begin{propbox}
\RANKCONDLIN*    
\end{propbox}

\begin{proof}
Let $\mathbf{X} = (X_1, \ldots, X_d)$ be generated by a linear additive noise model
\[
  X_i \;=\; \sum_{p \in X_{\PA_i}} a_{ip}\,X_{p} \;+\; N_i, \qquad i = 1, \ldots, d,
\]
where $N_1, \ldots, N_d$ are mutually independent, non-Gaussian, with strictly
positive smooth densities $p_{N_k}$. Let
$s_k(n) := (\log p_{N_k})'(n)$ and $s'_k(n) := (\log p_{N_k})''(n)$, and assume
$\Var(s'_k(N_k)) > 0$ for all $k$ (equivalent to non-Gaussianity).

\paragraph{Compute Hessian row $H_i$.}
Differentiating the score function with respect to $\mathbf{x}$ gives:
\begin{equation}
  H_i(\mathbf{x})
  \;=\; s'_i(N_i)\,\mathbf{v}_i \;-\; \sum_{k \in \CH_i} a_{k i}\,s'_k(N_k)\,\mathbf{v}_k,
  \label{eq:Hj_pf}
\end{equation}
where we define
$\mathbf{v}_k \coloneqq \nabla_{\mathbf{x}} N_k = \mathbf{e}_k - \sum_{\ell \in \PA_k} a_{k\ell}\mathbf{e}_{\ell}$
(a fixed vector in $\bbR^d$ because of linearity).
Thus $H_i$ depends on exactly the noise indices $\{i\} \cup \CH_i$.

By rewriting the coefficients as $c_{ij} := 1$ if $j = i$, and $c_{ij} := -a_{ji}$ if $j \in \CH_i$, we have:
$$H_i(x)=\sum_{j\in\mathcal Ni}c_{ij} s'_j(N_j)\,v_j=\tilde V_i g_i(x),$$

where $\tilde V_i=\left[c_{ij}v_j\right]_{j\in\mathcal{N}i}\in\mathbb{R}^{d\times|\mathcal{N}i|}, g_i=\big(s_j'(N_j)\big)_{j\in\mathcal{N}i}$ and $\tilde V_i$ is deterministic, constant in $x$, independent of $\mathcal{B}$. 

So the target matrix can be expressed as:
\begin{equation}
M_{i}^{\mathrm{cond}}(z)=\Var\left(H_i\mid X_{NB_i}=z\right)=\tilde V_i\Sigma_i(z)\tilde V_i^{\top},\label{eq:Mcond_final}
\end{equation}
where $\Sigma_i(z):=\operatorname{Cov}\big(g_i\mid X_{\mathrm{NB}i}=z\big)\in\mathbb R^{|\mathcal N_i|\times|\mathcal N_i|}.$

For the selected data, it is 
\begin{equation}
M_{i, \mathrm{sel}}^{\mathrm{cond}}(z)=\tilde V_i\Sigma_i^{\mathrm{sel}}(z)\tilde V_i^{\top},\label{eq:Mcond_final_sel}
\end{equation}
and 
$\Sigma_i^{\mathrm{sel}}(z):=\operatorname{Cov}\big(g_i\mid X_{\mathrm{NB}i}=z,\ {\color{blue}{S=1}}\big)\in\mathbb R^{|\mathcal N_i|\times|\mathcal N_i|}.$

Note that in ~\eqref{eq:Mcond_final_sel}, selection acts only inside $\Sigma_i^{\mathrm{sel}}(z)$, and the eigenvalues of $M_{i, \mathrm{sel}}^{\mathrm{cond}}(z)$ do differ from the untruncated ones, but it does not influence the rank. 




So $\Tilde{V}_i^\top$ has full column rank in general, so  $\rank M_i^\mathrm{cond}(z) = \rank \Sigma_i(z)=|\mathcal{N}_i|$.

\end{proof}

\begin{propbox}
    \RANKCONDL*
\end{propbox}

\begin{proof}
\textbf{Part (a): Sink}
Since $X_l$ is a sink, $H_l({x}) = s'_{n_l}(N_l)\cdot{v}_l$ where
${v}_l = \nabla N_l$ is a fixed vector under the linear model. Therefore:
\[
  M_l^{\mathrm{cond}}(z)
  = \mathrm{Var}(s'_{n_l}(N_l) \mid X_{\mathrm{NB}_l} = z)\cdot {v}_l{v}_l^\top,
\]
which is a scalar multiple of the \emph{same} rank-1 matrix ${v}_l{v}_l^\top$
for every $z$. Taking the expectation over $z$ and applying the law of total variance:
\[
  M_l^{\mathrm{cond}}
  = \mathbb{E}_z\bigl[\mathrm{Var}(s'_{n_l}(N_l)\mid X_{\mathrm{NB}_l}=z)\bigr]
    \cdot{v}_l{v}_l^\top
  = \mathrm{Var}(s'_{n_l}(N_l))\cdot{v}_l{v}_l^\top.
\]
Since $\mathrm{Var}(s'_{n_l}(N_l)) > 0$ by non-Gaussianity,
$\rank(M_l^{\mathrm{cond}}) = 1$.

\end{proof}

\begin{propbox}
\begin{restatable}[Conditional Rank Test for Sink Identification (Nonlinear)]{proposition}{RANKCOND}\label{prop:rank-cond-nL}
    Let $\X$ be generated by a restricted \emph{nonlinear} ANM~\eqref{eq:anm} with independent noise terms. Then:
    \begin{enumerate}[label=(\alph*),nosep]
        \item \textbf{Sink.} If $X_l$ is a sink (i.e., $\CH_l = \emptyset$), then:
        \begin{equation}\label{eq:sink_rank}
            \rank \!\bigl(M_l^{\mathrm{cond}}\bigr) = \begin{cases}1, &  \text{if } N_i \text{ is  non-Gaussian},\forall i\\
            0, &  \text{if } N_i \text{ is  Gaussian},\forall i\end{cases}.
        \end{equation}
        \item \textbf{Non-sink.} If $X_i$ is not a sink (i.e., $|\CH_i| \geq 1$), then: 
        \begin{equation}\label{eq:nonsink_rank}
            \rank \!\bigl(M_i^{\mathrm{cond}}\bigr) \geq  \begin{cases}
			2, & \text{if } N_i \text{ is  non-Gaussian},\forall i\\
            1, & \text{if } N_i \text{ is  Gaussian},\forall i
		 \end{cases}
        \end{equation}
    \end{enumerate}
    Therefore, a node $X_i$ is a sink if and only if $\rank(M_i^{\mathrm{cond}}) = 1$  when the noise is non-Gaussian, and $\rank(M_i^{\mathrm{cond}}) =0$ when the noise is Gaussian. 
\end{restatable}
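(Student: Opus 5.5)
The plan is to start from the chain-rule decomposition \eqref{eq:Hj-decomp} of the Hessian row $H_i$ and work at a fixed generic conditioning value $z$ of $X_{\NB_i}$. Since $M_i^{\mathrm{cond}}(z)=\Var(H_i\mid X_{\NB_i}=z)$ is a covariance matrix, its rank equals the dimension of the linear span of the centered random vector $H_i-\bbE[H_i\mid z]$. Equivalently, it equals the number of linearly independent (modulo constants) random scalar coefficients multiplying fixed vectors, once $H_i$ is written in a suitable basis. By the score-preservation identity \eqref{eq:score-preserved} and the accompanying remark, $H_i$ is unchanged on $\operatorname{int}(\cB)$, so selection only changes the conditional law of the underlying noises and not their functional form. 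The rank claims therefore carry over to $p(\X\mid S=1)$, provided that conditional law is nondegenerate. This nondegeneracy holds because $\cB$ has positive volume and all densities are positive.

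\textbf{Sink case (a).} By \eqref{eq:Hj-decomp-sink}, $H_l(x)=s_l'(N_l)\,(e_l-\nabla f_l(x_{\PA_l}))$. Conditioning on $X_{\NB_l}=X_{\PA_l}=z$ fixes the direction $v_l(z)=e_l-\nabla f_l(z)$, so $H_l\mid z = s_l'(N_l)\,v_l(z)$ with $N_l$ still random. Hence $M_l^{\mathrm{cond}}(z)=\Var(s_l'(N_l)\mid z)\,v_l(z)v_l(z)^\top$, which has rank at most $1$.
\begin{itemize}
\item In the Gaussian case, $s_l'$ is constant, so the variance vanishes and the rank is $0$.
\item In the non-Gaussian case, $s_l'$ is nonconstant on a set of positive measure, and $N_l$ given $z$ has positive density on an open set (also under truncation). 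The variance is therefore positive and the rank is exactly $1$.
\end{itemize}

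\textbf{Non-sink case (b).} Fix a child $k\in\CH_i$ and condition on $X_{\NB_i}=z$, which includes $X_k$ and $X_{\PA_i}$. The free randomness then essentially lies in $N_i$ (and possibly in other co-parents of children, which are not neighbors). Each child noise satisfies $N_k=z_k-f_k(x_{\PA_k})$ and becomes a nonlinear function of $N_i$ through $x_i=f_i(z_{\PA_i})+N_i$. To lower-bound the rank, I would restrict attention to coordinates $i$ and $k$, i.e. the $2\times 2$ principal submatrix, using the direction vectors in \eqref{eq:Hj-decomp}.

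The $k$-th coordinate of $H_i$ is $-s_k'(N_k)\,\partial_i f_k \cdot 1$, plus curvature term \circled{3} contributions $-s_k(N_k)\,\partial_{x_k}\partial_{x_i}f_k=0$, since $f_k$ does not depend on $x_k$. This gives $-\partial_i f_k(x_{\PA_k})\,s_k'(N_k)$. The $i$-th coordinate is
\[
s_i'(N_i)+\sum_{k}\Big[(\partial_i f_k)^2 s_k'(N_k)-s_k(N_k)\,\partial_i^2 f_k\Big].
\]
In the Gaussian case, the $k$-th coordinate is $(\partial_i f_k)/\sigma_k^2$. This is a nonconstant function of $N_i$ because $f_k$ is nonlinear in $x_i$ (restricted ANM), so its variance is positive and the rank is $\ge1$. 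In the non-Gaussian case, I must show the two coordinates are not affinely dependent as functions of $N_i$. The $i$-th coordinate contains $s_i'(N_i)$, which is an independent nonconstant function entering only there, together with the curvature term $s_k(N_k)\partial_i^2 f_k$. The $k$-th coordinate involves only child quantities.

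The main obstacle is making this affine-independence rigorous. An exact cancellation $s_i'(N_i)+\dots=\alpha\,\partial_if_k\,s_k'(N_k)+\beta$, holding identically in $N_i$ for generic $z$, would give a differential equation linking $p_{N_i}$, $p_{N_k}$ and $f_k$. I would argue, in the spirit of the restricted-ANM identifiability condition of \citet{buhlmannCAMCausalAdditive2014} and the genericity convention of Proposition~\ref{prop:rank-noise-count}, that this equation holds only on a non-generic set of $z$. The argument would vary $z_k$: the left side shifts as a function of $N_i$, while $s_i'(N_i)$ does not shift. The degenerate cases are then excluded, giving rank $\ge2$. The final ``iff'' statement follows by comparing (a) and (b) within each noise regime.
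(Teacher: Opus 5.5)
Your route is the paper's. The sink computation is identical, and for a non-sink you likewise condition on $X_{\NB_i}$, regard $H_i\mid z$ as a vector function of the free $N_i$, and reduce rank $\ge 2$ to excluding an affine relation between the own-noise and child-noise components.

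Your coordinate formulas are more careful than the paper's. The paper treats $\partial f_k/\partial x_i$ and $\nabla N_k$ as fixed once $z$ is fixed, although both depend on the free $x_i$. Your Gaussian argument, via the $k$-th coordinate $\partial_i f_k/\sigma_k^2$, is a cleaner proof of rank $\ge 1$ than the paper's unsupported claim of rank exactly $1$.

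One repair is needed. The $k$-th coordinate equals $-\partial_i f_k\,s_k'(N_k)$ only if no other child $k'$ of $X_i$ has $X_k$ as a parent. Otherwise the term $-\partial_k\partial_i f_{k'}\,s_{k'}(N_{k'})+\partial_i f_{k'}\,\partial_k f_{k'}\,s'_{k'}(N_{k'})$ is added. So choose $k$ to be the last child of $X_i$ in a topological order.

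The genuine gap is the non-Gaussian step you flag as the main obstacle. The paper's proof has the same hole: it merely asserts that the bad $z$ lie in the zero set of some analytic function. Your proposed closure (vary $z_k$, invoke the restricted-ANM condition) cannot succeed, because the affine relation can hold for \emph{every} $z$ in an identifiable restricted ANM with a nonlinear mechanism and non-Gaussian noises. Take $\gamma,\delta<0$, $N_2$ standard Gumbel ($\log p_{N_2}(u)=-u-e^{-u}$), and
\[
\log p_{X_1}(x)=\gamma e^{-x}+\tfrac{\delta}{2}x^{2}-\log\bigl(1+e^{-x}\bigr)+\mathrm{const},\qquad X_2=\log\bigl(1+e^{-X_1}\bigr)+N_2 .
\]
A direct computation gives
\[
H_1(x)=\Bigl(\bigl(\gamma-e^{-x_2}\bigr)e^{-x_1}+\delta,\;-e^{-x_1-x_2}\Bigr).
\]
So given $X_2=z_2$, the centered row is a scalar multiple of the fixed vector $(\gamma-e^{-z_2},\,-e^{-z_2})$. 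Hence $\rank M_1^{\mathrm{cond}}(z_2)=1$ for all $z_2$, the same as for the sink $X_2$.

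Yet the model is identifiable. We have $\partial_1\bigl(\partial_1^2\log p/\partial_1\partial_2\log p\bigr)=-\delta e^{x_1+x_2}\neq 0$. Any backward ANM $X_1=g(X_2)+\tilde N$ would make this ratio equal to $-1/g'(x_2)$, which is free of $x_1$. This is exactly a violation of the bivariate non-identifiability equation underlying the restricted-ANM condition.

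Hence part (b) in the non-Gaussian case does not follow from the stated hypotheses. You need an explicit genericity (rank-faithfulness) assumption that the $i$-th and $k$-th coordinates of $H_i\mid z$ are affinely independent for generic $z$. This is what the paper's ``generic'' footnote tacitly imposes. With that assumption the rest of your argument goes through; without it, neither yours nor the paper's does.
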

\end{propbox}

\begin{proof}
\textbf{Part (a): Sink.}
Let $X_l$ be a sink, so $X_{\CH_l} = \emptyset$ and $X_{\NB_l} = X_{\PA_l}$. By ~\eqref{eq:Hj-decomp}, the Hessian row reduces to
\begin{equation}\label{eq:sink_H}
    H_l(x) = s'_{n_l}(N_l) \cdot \nabla N_l, \quad \text{where } \nabla N_l = e_l - \nabla_x f_l(x_{\PA_l}).
\end{equation}
There are two terms: $s_l'(N_l) $ and $\nabla N_l$ in this equation.  For $\nabla N_l$, conditioning on $X_{\NB_l} = X_{\PA_l} = z$ fixes $\nabla N_l$ to a constant vector
\[
    c_l(z) := e_l - \nabla_x f_l(z),
\]
where $e_i$ is the $i$-th standard basis vector. 
The exogenous noise $N_l$ remains free (not determined by the conditioning). Thus the only remaining random quantity is the scalar $s'_{n_l}(N_l)$, and
\[
    H_l \mid X_{\PA_l} \;=\; s'_{n_l}(N_l) \cdot c_l(z). \quad (X_{\PA_l}=z)
\]
The conditional covariance is
\[
    M_l^{\mathrm{cond}}(z) = \Var\!\bigl(s'_{n_l}(N_l)\bigr) \cdot c_l(z)\, c_l(z)^\top.
\]
When noise is non-Gaussian, $s'_{n_l}$ is non-constant, so $\Var(s'_{n_l}(N_l)) > 0$, and $c_l(z) \neq 0$ (as its $l$-th entry is 1). Therefore $\rank(M_l^{\mathrm{cond}}(z)) = 1$.

When noise is Gaussian, $s'_{n_l}(N_l) = -1/ \sigma^2$ is constant, so $\Var(s'_{n_l}(N_l)) = 0$ and $\rank(M_l^{\mathrm{cond}}(z)) = 0$.

\medskip

\textbf{Part (b): Non-sink.}
Let $X_i$ be a non-sink with at least one child $X_k \in \CH_i$. We condition on $X_{\NB_i} = z$, which partitions as $z = (z_{\PA_i}, z_{\CH_i})$.

Remind that the Hessian row for a non-sink $i$ is computed as:
\begin{equation}\label{eqn:Hj}
    H_i(x) = s_i'(N_i) \cdot \nabla N_i \;-\; \sum_{k \in \CH_i} \Big[\underbrace{s_k'(N_k) \cdot \frac{\partial f_k}{\partial x_i} \cdot \nabla N_k}_{\text{child noise direction}} + \underbrace{s_k(N_k) \cdot \nabla\!\left(\frac{\partial f_k}{\partial x_i}\right)}_{\text{mechanism curvature}}\Big],
\end{equation}

\emph{First term in ~\eqref{eqn:Hj}}: 
conditioning on $X_{\PA_i} = z_{\PA_i}$ fixes
\[
    \nabla N_i = e_i - \nabla_x f_i(z_{\PA_i}) =: c_i \quad (\text{constant vector} ),
\]
The noise $N_i = X_i - f_i(z_{\PA_i})$ remains free.

\emph{Child terms in ~\eqref{eqn:Hj}}: 
For child $X_k$, the structural equation reads $X_k = f_k(X_i, X_{\PA_k \setminus \{i\}}) + N_k$. Under the conditioning $X_k = z_k$ and $X_{\PA_k \setminus \{i\}} = z_{\PA_k \setminus \{i\}}$, the child noise becomes a function of $N_i$:
\begin{equation}\label{eq:child_noise}
    N_k = z_k - f_k\!\bigl(f_i(z_{\PA_i}) + N_i,\; z_{\PA_k \setminus \{i\}}\bigr) \coloneqq q_k(N_i;\, z).
\end{equation}

\emph{Overall},
substituting into~\eqref{eqn:Hj} and using that all $\nabla N_k$, $\partial f_k / \partial x_i$, and $\nabla(\partial f_k / \partial x_i)$ are deterministic functions once $z$ is fixed, the conditional Hessian row has the form
\begin{equation}\label{eq:cond_H}
    H_i \mid X_{\NB_i} = z \;=\; {\color{blue} {s'_i(N_i)}} \cdot c_i - \sum_{k \in \CH_i} {\color{purple}s'_k\!\bigl(q_k(N_i; z)\bigr)} \cdot \alpha_k(z) \cdot d_k(z) + R(N_i; z),
\end{equation}
where $\partial f_k / \partial x_i \coloneqq \alpha_k(z), \nabla N_k\coloneqq d_k(z)$ are deterministic scalars/vectors and $R(N_i; z)$ collects the mechanism curvature terms (also deterministic functions of $N_i$ through $q_k$).

\emph{Finally, consider the number of independent functions.}
The conditional Hessian row ~\eqref{eq:cond_H} is a vector-valued function of the single scalar $N_i$. Its components include at least two functionally distinct terms:
\begin{itemize}[nosep]
    \item ${\color{blue} {s'_i(N_i)}}$, arising from the own-noise contribution, and
    \item  ${\color{purple}s'_k\!\bigl(q_k(N_i; z)\bigr)}$, arising from the $k$-th child noise.
\end{itemize}

For the conditional rank to equal 1, every component of $H_i | z$ must be an affine function of a single ``base function'' $\psi(N_i)$. In particular, restricting to the entries at positions $i$ and $k$, this requires the existence of constants $\alpha, \beta$ such that
\begin{equation}\label{eq:affine}
    s'_i(t) = \alpha \cdot s'_k\!\bigl(q_k(t; z)\bigr) + \beta \quad \text{for all } t \in \mathrm{supp}(N_i).
\end{equation}
Since $q_k(t; z) = z_k - f_k(f_i(z_{\PA_i}) + t, z_{\PA_k \setminus \{i\}})$, this is a smooth invertible (generically) reparametrization of $t$. Equation~\eqref{eq:affine} then asserts that $s'_i$ and $s'_k \circ q_k(\cdot; z)$ are affinely related.

\rev{For non-Gaussian noise, under the restricted ANM identifiability condition~\citep{peters2014causal}, the noise score functions $s'_i$ and $s'_k$ are non-constant nonlinear functions of their arguments}. The composition $s'_k \circ q_k$ introduces an additional nonlinearity through the mechanism $f_k$. 
Formally, the set of $z$ for which the affine relation~\eqref{eq:affine} holds is contained in the zero set of the analytic function and has measure zero.

Therefore, for Lebesgue-almost every $z$, the components of $H_i | z$ span at least a 2-dimensional function space, and $\rank(M_i^{\mathrm{cond}}(z)) \geq 2$.

For Gaussian noise, $s'_i$ and $s'_k$ are constant, so  $H_i | z$ is composed of a linear combination of the base functions  $\psi(N_i)$, so  $\rank(M_i^{\mathrm{cond}}(z)) = 1$.
\end{proof}

\subsection{Proofs of Unconditional Hessian Rank Test}\label{sec:proofs:uncond}

The following method introduces the rank test of unconditional Hessian outer product matrix $M_i$ for sink identification, but no conditioning on the neighbors means that we are not blocking the contribution of noise terms from the upstream (parents) of $X_i$, thus the rank of $M_i$ is determined by the number of parents and children of $X_i$ instead of just the number of children. A particular challenge is to distinguish a sink node with $m$ parents from a source node with the same number of  $m$ children, as in Example~\ref{ex:uncondition}.
\begin{example}[Unconditional Rank Test Failure]\label{ex:uncondition}
        Consider a chain $X_1 \rightarrow X_2 \rightarrow X_3$, where the unconditional rank of $M_1$ is $2$, but the rank of $M_3$ is also $2$. Therefore, we cannot distinguish the sink node $X_3$ from the non-sink node $X_1$ by just looking at the rank of $M_i$.
    \end{example}

    More generally, the following result (Proposition~\ref{prop:rank}) characterizes the rank of $M_i$ for sink vs.\ non-sink nodes under both linear and nonlinear ANM.
    
\begin{restatable}[Unconditional Rank]{proposition}{RANK}\label{prop:rank}
Let $\X$ be generated by a restricted ANM~\eqref{eq:anm} with mutually independent non-Gaussian noise terms (i.e., $s_i'$ is non-constant for each $i$).
\begin{enumerate}[label=(\alph*)]
    \item \textbf{sink:} If $X_l$ is a sink with $|\PA_l|$ parents, then for linear mechanisms, $\rank(M_l) = 1$. For nonlinear mechanisms, $\rank(M_l) \leq |\PA_l| + 1$.
    \item \textbf{Non-sink:} If $X_i$ has $|\PA_i|$ parents and $ |\CH_i| \geq 1$ children, then for linear mechanisms, $\rank(M_i) \geq 2$. For nonlinear mechanisms, $\rank(M_i) \leq |\PA_i| + |\CH_i| + 1$ and generically $\rank(M_i) > |\PA_i| + 1$.
\end{enumerate}
If $X_l$ is a sink node, it is identifiable if and only if for every non-sink node $X_i$,
$|\PA_i| + |\CH_i| - |\PA_l| \geq 1$, 
(i.e. there exists a rank gap between a sink node and all non-sink nodes). 
\end{restatable}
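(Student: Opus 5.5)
We analyse $M_i=\bbE[H_i(\X)H_i(\X)^\top]$ through the decomposition \eqref{eq:Hj-decomp}, as in the proof of Proposition~\ref{prop:rank-noise-count}. We write $H_i(\X)$ as a vector-valued function of the full noise vector $N=(N_1,\dots,N_d)$. The rank of the second-moment matrix equals the dimension of the linear span of the scalar random variables $u^\top H_i(\X)$, $u\in\bbR^d$, in $L^2$. Equivalently, it is $d$ minus the dimension of the set of $u$ with $u^\top H_i=0$ almost surely. So each bound below reduces to counting how many functionally independent random sources appear in $H_i$, and in how many directions they enter.

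\textbf{Linear case.} For a sink, \eqref{eq:Hj-decomp-sink} gives $H_l=s'_l(N_l)v_l$ with $v_l$ a fixed vector. Hence $M_l=\bbE[s'_l(N_l)^2]\,v_lv_l^\top$. This has rank $1$ because $v_l\neq 0$ (its $l$-th entry is $1$) and $s'_l(N_l)$ is not almost surely zero. For a non-sink, $H_i=\tilde V_i g_i$ exactly as in the proof of Proposition~\ref{prop:rank-noise-count}, so $M_i=\tilde V_i\,\bbE[g_ig_i^\top]\,\tilde V_i^\top$.
\begin{itemize}
\item The matrix $\bbE[g_ig_i^\top]$ is positive definite, because its entries $s'_j(N_j)$ are independent and non-degenerate (non-constant).
\item The columns of $\tilde V_i$ are linearly independent. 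The vectors $v_j=e_j-\sum_{\ell\in\PA_j}a_{j\ell}e_\ell$ are the rows of $I-A$, which is invertible, and each $c_{ij}\neq 0$.
\end{itemize}
Therefore $\rank M_i=|\mathcal N_i|=1+|\CH_i|\ge 2$.

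\textbf{Nonlinear case.} For a sink, $H_l=s'_l(N_l)\,(e_l-\nabla f_l(X_{\PA_l}))$. Every such vector lies in $\mathrm{span}\{e_l\}\cup\{e_p:p\in\PA_l\}$, which gives $\rank M_l\le|\PA_l|+1$. For a non-sink, each of the terms \circled{1}, \circled{2} and \circled{3} is supported on the coordinates $\{i\}\cup\PA_i\cup\CH_i\cup\bigcup_{k\in\CH_i}\PA_k$. This crude support count is too large, and we expect it to be the main obstacle. Our plan for the upper bound $|\PA_i|+|\CH_i|+1$ is to count random sources instead of coordinates. The terms split into
\begin{itemize}
\item $s'_i(N_i)\nabla N_i$, which varies in $\mathrm{span}\{e_i,e_p:p\in\PA_i\}$, and
\item one aggregated child contribution per $k\in\CH_i$, namely $s'_k(N_k)\,\partial_if_k\,\nabla N_k+s_k(N_k)\nabla\partial_if_k$.
\end{itemize}
We would then argue, under the paper's implicit assumption that each child contributes effectively a single new direction, that each $k$ adds at most one dimension. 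If the bookkeeping forces more, we would state the bound with this caveat.

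For the generic lower bound $\rank M_i>|\PA_i|+1$, we use the fact that the child-noise direction $\nabla N_k$ has a nonzero $k$-th coordinate, while the own-noise term $s'_i(N_i)\nabla N_i$ has zero $k$-th coordinate. The coefficient $s'_k(N_k)\partial_if_k$ is non-degenerate, so $e_k^\top H_i$ is not identically zero. We then show that $e_k^\top H_i$ is not in the $L^2$-span of the remaining coordinates, restricted to $\{i\}\cup\PA_i$. This follows by the same non-affine-relation argument used in the proof of Proposition~\ref{prop:rank-cond-nL}, applied with genericity over the mechanisms.

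The final identifiability claim compares ranks. We use $\rank M_l$ for the sink (namely $|\PA_l|+1$, generically attained) against the generic range $\rank M_i\in(|\PA_i|+1,\,|\PA_i|+|\CH_i|+1]$ for non-sinks. The sink is separated from every non-sink exactly when $|\PA_i|+|\CH_i|+1>|\PA_l|+1$. We would be careful to note that this "only if" direction relies on the upper bounds being attained generically. Example~\ref{ex:uncondition} illustrates the failure when the gap condition is violated.
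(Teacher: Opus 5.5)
The genuine gap is the nonlinear non-sink upper bound $\rank(M_i)\le|\PA_i|+|\CH_i|+1$. Before that: your sink arguments coincide with the paper's sketch. Your linear non-sink argument (positive definiteness of $\bbE[g_ig_i^\top]$ plus invertibility of $I-A$) is correct and sharper than the paper's, giving $\rank(M_i)=1+|\CH_i|$ exactly.

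You do not prove the upper bound, and the route you propose (one new direction per child) cannot work. The child term $s'_k(N_k)\,\partial_i f_k\,\nabla N_k+s_k(N_k)\,\nabla(\partial_i f_k)$ has entries at every $m\in\PA_k$. This includes co-parents $m\notin\{i\}\cup\PA_i\cup\CH_i$, and those entries are generically new random variables. Concretely, let $X_i=N_i$, $X_m=N_m$, $X_k=X_i+h(X_m)+N_k$ with $h$ nonlinear. In coordinates $(i,k,m)$,
\[
H_i=\bigl(s'_i(N_i)+s'_k(N_k),\;-s'_k(N_k),\;s'_k(N_k)\,h'(N_m)\bigr).
\]
Take a vanishing linear combination of these coordinates. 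Varying $N_m$ kills the third coefficient, varying $N_i$ kills the first, and then the second must vanish. So $\rank(M_i)=3>2=|\PA_i|+|\CH_i|+1$, and the stated bound is false.

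Your ``crude'' support count is in fact the sharp one: in general $\rank(M_i)\le\bigl|\{i\}\cup\PA_i\cup\CH_i\cup\bigcup_{k\in\CH_i}\PA_k\bigr|$, which is the support of the Hessian row given by Proposition~\ref{prop:edge-test}. The stated bound holds only under an extra assumption, such as $\PA_k\subseteq\{i\}\cup\PA_i\cup\CH_i$ for all $k\in\CH_i$. The paper's sketch argues only lower bounds for non-sinks, so it does not close this step either. Your fallback of stating the bound with a caveat is therefore necessary, and the assumption has to be made explicit.

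This also breaks your final paragraph. Your ``exactly when'' relies on the non-sink upper bound being generically attained; with co-parents the generic non-sink rank is larger, so the degree condition is not necessary. For linear mechanisms the ``only if'' fails outright by your own parts (a)--(b): sinks have rank $1$ and non-sinks rank $\ge 2$ regardless of degrees. For example, the linear chain $X_1\to X_2\to X_3$ violates the condition at $i=1$, $l=3$, yet $X_3$ is the unique rank-one node. So the equivalence can only be meant for nonlinear mechanisms without external co-parents.

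Finally, in your lower bound, showing $e_k^\top H_i\notin\mathrm{span}\{e_j^\top H_i:j\in\{i\}\cup\PA_i\}$ only gives $\rank(M_i)\ge 1+\dim$ of that span. You must also argue that the $|\PA_i|+1$ coordinates indexed by $\{i\}\cup\PA_i$ are themselves linearly independent, which holds generically. You should also choose $k$ topologically last in $\CH_i$, so that no other child contributes to the $k$-th entry.
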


\begin{proof}[Proof sketch]
For a sink $X_l$: the decomposition \eqref{eq:Hj-decomp} reduces to $H_l(x) = s_l'(N_l) \cdot \nabla N_l$. If $f_l$ is linear, then $\nabla N_l$ is a constant vector $c_l$, and every $H_l(x)$ is parallel to $c_l$. Thus $M_l = \bbE[s_l'^2] \cdot c_l c_l^\top$, which has rank $1$. If $f_l$ is nonlinear, $\nabla N_l$ varies with $x^{\PA_l}$ and spans at most $m + 1$ dimensions (the entries corresponding to $\{l\} \cup \PA_l$).

For a non-sink $X_i$ with child $X_k$: $H_i$ contains an additional term with coefficient $s_k'(N_k)$ (independent of $s_i'(N_i)$ since $N_i \ind N_k$) multiplying the direction $\nabla N_k$. Since $\nabla N_k$ has a non-zero entry at position $k \notin \{i\} \cup \PA_i$, it is linearly independent of $\nabla N_i$. The two independent random coefficients times two linearly independent directions produce a matrix of rank $\geq 2$ (linear case), or of rank at least $\rank(M_l) + 1$ (nonlinear case).

Under non-Gaussianity, $s_i'$ is non-constant, preventing the random coefficients from collapsing to scalars (which would happen for Gaussian noise where $s'(N) = -1/\sigma^2$ is constant).

Under Gaussianity but with non-linear mechanisms, $s_i'$ is constant, but the mechanism curvature term $\nabla\!\left(\frac{\partial f_k}{\partial x_j}\right)$ is generally non-zero and linearly independent of $\nabla N_i$, so the rank gap still exists.
\end{proof}

\begin{table}[h]
\centering
\caption{Rank of $M_j$ for sink vs.\ non-sink nodes across different ANM settings.}
\label{tab:rank}
\renewcommand{\arraystretch}{1.3}
\begin{tabular}{@{}lccl@{}}
\toprule
\textbf{Setting} & \textbf{sink rank} & \textbf{Non-sink rank} & \textbf{Gap} \\
\midrule
\textcolor{black}{Linear + Gaussian} & \textcolor{black}{$1$} & \textcolor{black}{$1$} & \textcolor{black}{$0$ (not identifiable)} \\

Linear + non-Gaussian & $1$ & $\geq 1 + |\CH_l|$ & $|\CH_j|\geq 1$ \\
Nonlinear + Gaussian & $\leq |\PA_l| + 1$ & $\leq |\PA_l| + |\CH_l| + 1$ & $|\PA_j| + |\CH_j| - |\PA_l|$ \\
Nonlinear + non-Gaussian & $\leq |\PA_l| + 1$ & $\leq |\PA_l| + |\CH_l| + 1$ & $|\PA_j| + |\CH_j| - |\PA_l|$ \\
\bottomrule
\end{tabular}
\end{table}

\begin{remark}[Comparing Conditional and Unconditional Rank Test]
The unconditional rank test in Proposition~\ref{prop:rank} imposes stronger conditions on the graph structure, since it requires that $|\PA_j| + |\CH_j| - |\PA_l| \geq 1$ for all non-sink nodes $X_i$ for sink-node identification. A tree-like structure (where a non-sink node has more than one child and a sink node has at most one parent) satisfies this condition, but more general graphs may not. In contrast, the conditional rank test in Proposition~\ref{prop:rank-cond-L} and~\ref{prop:rank-cond-nL} does not require such a degree constraint, since conditioning on the neighbors blocks the contribution of the parents, thus only counting the number of noise terms that directly feed into the score of $X_i$.
\end{remark}

\subsection{Proofs of Consistent Estimator of $M_i^\text{cond}$}\label{sec:app:estim}
\paragraph{Conditional Variance Matrix Estimation}
     In Algorithm~\ref{alg:estim}, we directly estimate $M_i^\text{cond}(z)$ via a local kernel-weighted covariance estimator. In practice, when the ANM is linear non-Gaussian or nonlinear Gaussian, we find that the residual-based estimator is more robust to estimation errors in the Hessian, as a direct estimator of $M_i^\text{cond}$ relies on selecting the number of conditional neighbors $x_{\NB_i}$ and the kernel bandwidth, thus suffers more from estimation errors.

     Specifically, we define the test statistic as $\widehat{M}_i^{\text{cond}} := \frac{1}{n}\sum_{t=1}^n r_i(x^{(t)}) r_i(x^{(t)})^\top$ as a proxy, where $r_i(x) := H_i(x) - \hat{m}_i(x_{\text{NB}_i})$ is the residual of using  $\hat{m}_i(\cdot)$ to estimate $H_i$, i.e. $\hat{m}_i(\cdot)$ is a regressor of $H_i$ on $X_{\text{NB}_i}$ approximating $m_i \coloneqq \bbE[H_i \mid X_{\text{NB}_i}]$. 

\begin{restatable}[Consistent Estimator of $M_i^\text{cond}$]{proposition}{ESTIM}\label{prop:estimM}
    Under a linear non-Gaussian or nonlinear Gaussian ANM, $\widehat{M_i}^{\mathrm{cond}}$ is a consistent estimator of $M_i^\text{cond}$ if $\hat{m}_i$ is a consistent estimator of $m_i$.
\end{restatable}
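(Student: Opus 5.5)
Throughout, I read $M_i^{\mathrm{cond}}$ as the population quantity targeted by a sample average, namely $\bbE_z\bigl[M_i^{\mathrm{cond}}(z)\bigr]=\bbE\bigl[\Var(H_i(\X)\mid X_{\NB_i})\bigr]$. Expectations are taken under the distribution that generated the data, i.e.\ $p(\cdot\mid S=1)$ under selection. By \eqref{eq:score-preserved} and the subsequent remark, $H_i$ is the same function in the interior of $\cB$.

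The plan is a standard residual decomposition. Write $r_i(x)=e_i(x)+\delta_i(x)$, where $e_i(x):=H_i(x)-m_i(x_{\NB_i})$ is the population residual and $\delta_i(x):=m_i(x_{\NB_i})-\hat m_i(x_{\NB_i})$ is the regression error. Then
\[
\widehat M_i^{\mathrm{cond}}=\frac1n\sum_{t} e_i e_i^\top+\frac1n\sum_t\bigl(e_i\delta_i^\top+\delta_i e_i^\top\bigr)+\frac1n\sum_t \delta_i\delta_i^\top ,
\]
with every term evaluated at $x^{(t)}$.

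For the first term, the summands are i.i.d. Assuming $\bbE\|H_i\|^2<\infty$, the law of large numbers gives convergence to $\bbE[e_ie_i^\top]=\bbE\bigl[\Var(H_i\mid X_{\NB_i})\bigr]$, using $\bbE[e_i\mid X_{\NB_i}]=0$.

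For the remaining terms, I interpret consistency of $\hat m_i$ as $\frac1n\sum_t\|\delta_i(x^{(t)})\|^2\to 0$ in probability (in-sample $L^2$ consistency). The quadratic term then vanishes directly. The cross terms vanish by Cauchy--Schwarz: their norm is bounded by $2\bigl(\frac1n\sum\|e_i\|^2\bigr)^{1/2}\bigl(\frac1n\sum\|\delta_i\|^2\bigr)^{1/2}$, where the first factor is $O_P(1)$ by the law of large numbers and the second is $o_P(1)$. Slutsky's lemma then completes the argument.

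The main obstacle, and the reason for the restriction to linear non-Gaussian and nonlinear Gaussian ANMs, is that the averaged matrix must carry the same rank information as the pointwise $M_i^{\mathrm{cond}}(z)$ used in Propositions~\ref{prop:rank-cond-L} and~\ref{prop:rank-cond-nL}. Averaging positive semidefinite matrices gives a matrix whose range is the span of the ranges of the $M_i^{\mathrm{cond}}(z)$, so the rank can grow when these ranges rotate with $z$.

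In the linear non-Gaussian case this does not happen. By \eqref{eq:Mcond_final}, $M_i^{\mathrm{cond}}(z)=\tilde V_i\Sigma_i(z)\tilde V_i^\top$ with $\tilde V_i$ constant in $z$, so the average equals $\tilde V_i\,\bbE[\Sigma_i]\,\tilde V_i^\top$. Its rank stays $1$ for a sink, exactly as in Part (a) of the proof of Proposition~\ref{prop:rank-cond-L}, and stays $|\mathcal N_i|$ otherwise.

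In the nonlinear Gaussian case, a sink has $M_l^{\mathrm{cond}}(z)=0$ for every $z$, so the average is $0$. For a non-sink, the average is nonzero whenever $M_i^{\mathrm{cond}}(z)\ne0$ on a set of $z$ of positive measure. The rank gap is therefore preserved in both cases.

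I expect the delicate points to be twofold: justifying the moment condition near the truncation boundary, and, if the Hessian itself is estimated, adding a third error term $\hat H_i-H_i$. I would handle that extra term with the same Cauchy--Schwarz argument, under the assumption that the Hessian estimator is $L^2$-consistent.
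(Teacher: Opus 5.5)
Your proposal is correct and follows the same route as the paper. Both arguments identify $\bbE[r_i r_i^\top]$, taken with the true $m_i$, with the averaged conditional covariance $\bbE_z[M_i^{\mathrm{cond}}(z)]$: the paper does this via the law of total variance, and you do it via $\bbE[e_i \mid X_{\NB_i}] = 0$. Both then pass to the limit using consistency of $\hat m_i$, and finally argue that pooling over $z$ preserves the sink/non-sink rank gap, because the directions are constant in the linear non-Gaussian case and the sink matrix vanishes identically in the nonlinear Gaussian case.

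Your explicit split $r_i = e_i + \delta_i$, with Cauchy--Schwarz and Slutsky, is a tidier version of the paper's continuous-mapping step, which blurs $\hat m_i$ and $m_i$. Your claim that only the rank, not the value, of $M_i^{\mathrm{cond}}(z)$ survives averaging is also more accurate than the paper's statement that the local and averaged matrices coincide.
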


\begin{proof}
The proof of Proposition~\ref{prop:estimM} relies on the law of total variance: $\bbE M_i^{\text{cond}} = \text{Var}(H_i) - \text{Var}(m_i(X_{\text{NB}_i})).$ We first show that $M_i^\mathrm{cond}  = \bbE[r_i r_i^\top]$, and then by the continuous mapping theorem, If $\hat{m}_i \to m_i$ in $L^2$ as $n \to \infty$, then $\widehat{M}_i^{\text{cond}} \to M_i^{\text{cond}}$.
    \[
    \begin{aligned}
        \bbE[r_i r_i^\top] 
            &= \bbE(H_i(x) - \hat{m}_i)(H_i(x) - \hat{m}_i)^\top\\
            &= \bbE[H_i H_i^\top] - \bbE [\hat{m}_i H_i^\top] - \bbE [H_i \hat{m}_i^\top] + \bbE \hat{m}_i\hat{m}_i^\top
    \end{aligned}
    \]

    By the towering property, we can expand $H_i =  \bbE[H_i \mid X_{\text{NB}_i}] = m_i$, so
    \[
    \begin{aligned}
           \bbE[r_i r_i^\top] 
            &= \bbE[H_i H_i^\top] - \bbE [\bbE[H_i \mid X_{\text{NB}_i}] H_i^\top] - \bbE [H_i  \bbE[H_i \mid X_{\text{NB}_i}]^\top] + \bbE \hat{m}_i\hat{m}_i^\top\\
            &= \bbE[H_i H_i^\top] -\bbE [\hat{m}_i\hat{m}_i^\top] - \bbE [\hat{m}_i\hat{m}_i^\top]+ \bbE [\hat{m}_i\hat{m}_i^\top] \\
            &= \bbE[H_i H_i^\top] - \bbE [\hat{m}_i\hat{m}_i^\top].
    \end{aligned}
    \]
    On the other hand, 
    \[
    \begin{aligned}
        \bbE M_i^\text{cond} &= \text{Var}(H_i) - \text{Var}(m_i(X_{\text{NB}_i}))\\
        &= \bbE [H_i H_i^\top] - \bbE [\hat{m}_i\hat{m}_i^\top] - \bbE [\hat{m}_i\hat{m}_i^\top] + \bbE [\hat{m}_i\hat{m}_i^\top]\\
        &= \bbE[H_i H_i^\top] - \bbE [\hat{m}_i\hat{m}_i^\top]\\
        &= \bbE[r_i r_i^\top]
    \end{aligned}
    \]

    When the ANM is linear non-Gaussian or nonlinear Gaussian, the local and averaged conditional matrix coincide, i.e. $M_i^\mathrm{cond}(z) = \bbE_z M_i^\mathrm{cond}$ for generic z. As is in the proof of Proposition~\ref{prop:rank}, for a sink node $l$, the geometrical factorization of the target matrix $M_l^{\mathrm{cond}}(z)=\operatorname{Var}\left(s_{n_l}^{\prime}\left(N_l\right)\right) \cdot c_l(z) c_l(z)^{\top}$, where $c_l(z)=e_k -\nabla_x f_l(z)$. In the linear case, $c_l(z)$ is a constant vector that does not rotate with $z$. Therefore the aggregation $\mathbb{E}_zM_l^{\mathrm{cond}}(z)$ is an aggregation of rank $1$ matrices that has the same eigenvector,  therefore its value is preserved after pooling for linear ANM.     
    In the nonlinear-Gaussian case, $\operatorname{Var}\left(s_{n_l}^{\prime}\right)=0$ thus gives 0 rank of the matrix, and the rotation of $c_l(z)$ also does not matter.

\end{proof}
\section{Related Works}\label{sec:related}
\paragraph{Causal Discovery and Inference under Selection Bias}
There are limited works on causal discovery under selection bias, and they either focus on local causal structure learning~\citep{versteegLocalConstraintBasedCausal2022a}, or assuming access to interventional data~\citep{daiWhenSelectionMeets2025,luo2025gene}, or assuming sequential data~\citep{zhengDetectingIdentifyingSelection2024b}, or performing discovery at latent variable level~\citep{daiLatentVariableCausal2025} with linear Gaussian assumption. In contrast, we focus on learning the full causal structure among observed variables under selection bias, without requiring interventional or structural assumptions on the selection mechanism, and allowing for nonlinear and non-Gaussian mechanisms.

\paragraph{Causal Inference under Selection Bias}
Selection bias is a well-known problem in causal inference, and there are various methods to address it, such as  \citep{bareinboim2014recovering,bareinboim2012controlling,bareinboim2015recovering,correaIdentificationCausalEffects2019,bareinboim2022recovering}, which characterizes conditions under which causal effects remain identifiable despite selection bias. 
Subsequent works extended this approach by developing testable implications of selection mechanisms \citep{correaIdentificationCausalEffects2019} and providing adjustment criteria. Additionally, \cite{van2014constructing, jaber2019causal} developed causal effect identification results under Markov equivalence classes (MEC) which takes into account the potential presence of latent selections. Selection bias has been studied extensively in economics \cite{heckman1977sample} and epidemiology \cite{robins2000marginal}. Selection bias has been studied extensively in economics \cite{heckman1977sample} and epidemiology \cite{robins2000marginal}. However, these works primarily focus on \emph{correcting} the bias rather than causal structure learning, which is the focus of our work.



\section{Algorithm}\label{sec:algo}

\subsection{Main Algorithm}\label{sec:algo:main}
The overall algorithm for causal discovery via the Conditional Hessian Rank Test (CHess) is outlined in Algorithm~\ref{alg:main}. 
\begin{algorithm}[H]
\caption{Causal discovery via Conditional HessianRank Test (CHess)}\label{alg:main}
\begin{algorithmic}[1]
\Require Data $\{x^{(i)}\}_{i=1}^n$ (potentially  from truncated distribution $P(\X \mid S = 1)$)
\Ensure Causal DAG $\cG$
\State \textbf{Estimate Hessian:} Compute $\nabla^2 \log p(x^{(i)})$ at each data point (e.g., via Stein estimator or kernel score matching on the truncated data).
\State $\mathcal{\X} \leftarrow \{X_1, \ldots, X_d\}$
\While{$\mathcal{\X} \neq \emptyset$}
    \State \textbf{MEC recovery:} For all pairs $X_i, X_j \in \mathcal{\X}$, test $\frac{\partial^2 \log p\mid x}{\partial X_i \partial X_j} = 0$ for conditional independence~\citep{montagnaScalableCausalDiscovery2023a}. Recover skeleton  via Proposition~\ref{prop:edge-test-trunc}.
    \State \textbf{Sink identification:} For each $X_j \in \mathcal{\X}$, find its neighbor $\NB_j$ and estimate its conditional hessian matrix rank $r_j$ via Algorithm~\ref{alg:estim}.
    \State $X_l \leftarrow \arg\min_{j} r_j$ \hfill \Comment{Smallest estimated rank = sink}
    \State \textbf{Orient edges:} Orient all edges adjacent to $X_l$ as incoming.
    \State $\mathcal{\X} \leftarrow \mathcal{\X} \setminus \{X_l\}$; update Hessian estimates for the reduced variable set.
    \State \textbf{DAS pruning}: a final pruning step that is robust to hard selection~\citep{montagnaScalableCausalDiscovery2023a}. 
\EndWhile
\State \Return $\cG$
\end{algorithmic}
\end{algorithm}

\subsection{Algorithm for Conditional Hessian Matrix Estimation}\label{sec:algo:matrix}
The algorithms used for estimating $M_i^\text{cond}$ and sink node identification. We present two alternative algorithms for estimating the conditional Hessian matrix rank: (1) Algorithm~\ref{alg:estim} is a conditional rank estimator based on local averaging, used for nonlinear non-Gaussian ANMs and (2) Algorithm~\ref{alg:estim2} is a residual rank estimator based on regression, used for linear non-Gaussian ANMs and nonlinear Gaussian ANMs. 

\begin{algorithm}[htbp]
\caption{Conditional Rank Estimation: Kernel-Weighted Local Covariance}\label{alg:estim}
\begin{algorithmic}[1]
\Require Data $\{x^{(i)}\}_{i=1}^n$; neighborhood sets $\{\NB_j\}_{j=1}^d$ from MEC recovery; number of reference points $L$; bandwidth $h$.
\Ensure Identified sink node $X_l$.

\For{each node $X_j \in V$}
    \State Select $L$ reference points $\{z^{(\ell)}\}_{\ell=1}^L$ in the support of ${\NB_j}$
    \Comment{e.g., $K$-means centroids}
    \For{$\ell = 1, \dots, L$}
        \State Compute weights: $w_i^{(\ell)} \leftarrow K_h(x^{(i)}_{\NB_j} - z^{(\ell)}), \quad K_h(u) = \exp\!\Bigl(-\frac{\|u\|^2}{2h^2}\Bigr),$ \quad $i = 1, \dots, n$
        \State Compute $\bar{H}_j^{(l)} = \frac{\sum_i w_i^{(l)}\, H_j(x^{(i)})}{\sum_i w_i^{(l)}}.$ 
        \State Compute $M_j^{\text{cond}}(z^{(l)}) = \frac{\sum_i w_i^{(l)}\, \bigl(H_j(x^{(i)}) - \bar{H}_j^{(l)}\bigr)\bigl(H_j(x^{(i)}) - \bar{H}_j^{(l)}\bigr)^\top}{\sum_i w_i^{(l)}}.$ 
        \State Compute eigenvalues $\hat{\lambda}_1^{(\ell)} \geq \hat{\lambda}_2^{(\ell)} \geq \cdots \geq \hat{\lambda}_d^{(\ell)}$
        \State $r_j^{(\ell)} \leftarrow (\sum_k \hat{\lambda}_k^{(\ell)})^2 / \sum_k (\hat{\lambda}_k^{(\ell)})^2$
        \Comment{effective rank at $z^{(\ell)}$}
    \EndFor
    \State $\hat{r}_j \leftarrow \mathrm{median}(r_j^{(1)}, \dots, r_j^{(L)})$
    \Comment{aggregate over reference points}
\EndFor
\State $X_l \leftarrow \arg\min_j \hat{r}_j$
\Comment{smallest conditional effective rank $=$ sink}
\State \Return $X_l$
\end{algorithmic}
\end{algorithm}

\begin{algorithm}[h]
\caption{Residual Rank Estimator}\label{alg:estim2}
\begin{algorithmic}[1]

\Require Data $\{x^{(i)}\}_{i=1}^n$; neighborhood sets $\{\NB_j\}_{j=1}^d$ from MEC recovery
\Ensure Identified sink node $X_l$

\For{each candidate node $X_j$}
    \For{$i = 1, \ldots, n$}
        \State Compute $H_j(x^{(i)}) \leftarrow \nabla_x\, \partial_{x_j} \log \hat{p}(x^{(i)}) \in \mathbb{R}^d$
        \Comment {estimate Hessian rows}
    \EndFor

    \State Fit $\hat{m}_j \leftarrow \arg\min_{f} \frac{1}{n}\sum_{i=1}^n \left\| H_j(x^{(i)}) - f(x_{\NB_j}^{(i)}) \right\|^2$
    \Comment {estimate conditional mean via regression}

    \For{$i = 1, \ldots, n$}
    \State $e_j^{(i)} \leftarrow H_j(x^{(i)}) - \hat{m}_j(x_{\NB_j}^{(i)}) \in \mathbb{R}^d$
    \Comment {compute residuals}
    \EndFor

    \State $\widehat{M}_j^{\mathrm{cond}} \leftarrow \dfrac{1}{n} \sum_{i=1}^n e_j^{(i)}\, \bigl(e_j^{(i)}\bigr)^\top \in \mathbb{R}^{d \times d}$
    \Comment {form conditional outer-product matrix}

    \State Compute eigenvalues $\hat{\lambda}_1 \geq \hat{\lambda}_2 \geq \cdots \geq \hat{\lambda}_d \geq 0$ of $\widehat{M}_j^{\mathrm{cond}}$
    \State $\widehat{r}_j \leftarrow \mathrm{Rank}\!\left(\widehat{M}_j^{\mathrm{cond}}\right)$
    \Comment {effective rank}

\EndFor

\State \Return $X_l \leftarrow \arg\min_{j}\, \widehat{r}_j$

\end{algorithmic}
\end{algorithm}
\section{More Details on the Experiments}\label{sec:exp_details}
\subsection{Datasets}
\paragraph{Synthetic Dataset} The binary adjacency $\mathbf{B}_\text{bin}$ is sampled by drawing
$\lfloor d \cdot \text{degree} / 2 \rfloor$ edges uniformly over all possible indices.  For \textbf{linear SEM}, we generate dge weights $\mathbf{B}_{ij}$ uniformly from
$[-0.8,-0.2]\cup[0.2,0.8]$, and per-node noise variances are drawn
i.i.d.\ from $\mathcal{U}(1,3)$. For each configuration we sample $n=1000$ observations and average over $20$ seeds. And the datas are generated as: $X_i = \mathbf{B}_{\cdot i}^\top \mathbf{X} + \epsilon_i,$ with $\epsilon_i$ drawn from a Gumbel or log-normal distribution. 
For \textbf{nonlinear} SEM, we use a one-layer MLP with $64$ hidden units and a sigmoid activation: $f_i(\mathbf{x}) = \sigma(\mathbf{x}\,W_1)\,W_2$, where the entries of $W_1\in\mathbb{R}^{|\mathrm{Pa}(X_i)|\times 64}$ and $W_2\in\mathbb{R}^{64}$ are drawn i.i.d.\ from $\mathcal{U}(0.5, 2)$ with each entry's sign flipped independently with probability $1/2$.

\paragraph{Sachs~\citep{sachs2005causal} Dataset}
The Sachs data~\citep{sachs2005causal} is a single-cell biological dataset in order to estimate protein interactions. It consists of nine files, each with one interventional context and the same set of variables $ raf, mek, plc, pip2, pip3, erk,
akt, pka, pkc, p38, jnk$, representing the measurements of different biochemical concentrations. There are between $700$ and $900$ data points in each file, for a total of $7466$ data points. The data is continuous and very skewed, shifting far away from our assumptions of Gaussian distributions. Following~\citet{ramsey2018fask}, we apply log transformation to the data.  

\paragraph{DREAM4 Dataset} We consider the in-silico Size-$10$ sub-challenge of DREAM4\footnote{\url{https://gnw.sourceforge.net/resources/DREAM4\%20in\%20silico\%20challenge.pdf}}
\citep{pinna2010knockouts,shah2023causal}, which provides five
ground-truth gene regulatory networks of $d=10$ genes each. 

\subsection{Estimation Method}\label{sec:score-estim}
Diffusion model-based score estimation~\citep{sanchezDiffusionModelsCausal2023} sidesteps the boundary problem by learning the score $\nabla_x \log p_\sigma(x)$ via denoising score matching with noise level $\sigma$. We use diffusion probablistic models (DPMs) to learn to reverse a diffusion process, by starting with noise and recovery data distribution to be estimated. A DPM is trained with the denoising score matching objective:

$$\theta^* = \arg\min_\theta \, \bbE_{x_0, t, \epsilon}\!\left[\lambda(t)\|\epsilon_\theta(x_t, t) - \epsilon\|_2^2\right]$$

where $x_t = \sqrt{\alpha_t} x_0 + \sqrt{1-\alpha_t}\epsilon$, with $\epsilon \sim \mathcal{N}(0, I)$. The key result~\citep{song2019generative} is that the trained noise predictor $\epsilon_\theta$ approximates the score of the data distribution: $\epsilon_\theta(x, t) \approx \nabla_x \log p(x).$ Once $\epsilon_\theta$ is trained, the $(i,j)$-th entry of the Hessian is obtained by a single backpropagation pass through the network: $H_{i,j}(\log p(\mathbf{x})) \approx \nabla_{i,j} \epsilon_\theta(\mathbf{x}, t)$, i.e. differentiate the $i$-th output of $\epsilon_\theta$ with respect to the $j$-th input.

\subsection{Evaluation matrics} 
The output of PC~\citep{spirtes2000causation} and GES~\citep{chickering03_ges} are complete partially directed acyclic grap (CPDAGs), while the output of FCI~\citep{spirtesCausalInferencePresence2013a} is a partial acyclic graph (PAG). We convert the output of the baseline algorithms according to Table.~\ref{tab:metrics}.

\begin{table}[htbp]
    \centering
    \renewcommand{\arraystretch}{1.2}
    \begin{tabular}{cc|cc}
    \toprule
    \textbf{True} & \textbf{Estimated} & \textbf{Adjacency} & \textbf{Arrowhead} \\
    \midrule
    $a \leftarrow b$ & $a \leftarrow b$ & TP & TP, TN \\
     & $a \to b$ & TP & FP, FN \\
     & $a \text{ --- } b$ & TP & FN \\
     & $a \quad \ \ b$ & FN & FN \\
    \midrule
    $a \quad\ \  b$ & $a \leftarrow b$ & FP & FP \\
     & $a \to b$ & FP & FP \\
     & $a \text{ --- } b$ & FP & ${\backslash}$ \\
     & $a \quad\ \ b$ & TN &  $\backslash$ \\
    \bottomrule
    \end{tabular}
    \vspace{5pt}
    \caption{Metrics used for evaluation. Precision=TP/(TP+FP), Recall=TP/(TP+FN).}
    \label{tab:metrics}
\end{table}

\subsection{Hyperparameters}
We have the following hyperparameters for our method, which are set to the same values across all experiments unless otherwise specified:
\begin{enumerate}
    \item   MEC threshold: relative off-diagonal cutoff for the MEC skeleton recovery step. An off-diagonal entry of the RMS Hessian matrix is treated as a skeleton edge if it exceeds MEC threshold. Set to 0.1 in all experiments.
    \item Rank test: method for estimating the numerical rank of $M_j^{\mathrm{cond}}$. \verb|chi2| uses an eigenvalue-based $\chi^2$ test (EigRankTest) at significance level. Set to \verb|chi2| in all scripts.                                                    
    \item Rank test significance level: $\alpha$ for the $\chi^2$ rank test above. Set to 0.05 in all scripts.
    \item KRR ridge: the ridge regularization parameter $\lambda$ for the kernel ridge regressor used by residual estimator of $M_j^{\mathrm{cond}}$. Controls the bias–variance tradeoff of the residual score estimate. Set to $10^{-5}$ in all scripts.
    \item CAM pruning threshold: p-value cutoff for the final CAM-style edge pruning step. Each candidate edge is tested by a GAM regression; edges whose coefficient p-value exceeds this threshold are removed. Set to $0.001$ for DREAM4, $0.01$ for Sachs.
\end{enumerate}
\subsection{Baseline Implementations}
For PC, GES, FCI, GRaSP, LiNGAM, we use the implementations in the \texttt{causallearn} Python package~\citep{zheng2024causal} (\url{https://github.com/py-why/causal-learn}), with the default hyperparameters. For CAM, SCORE, DAS, and NoGAM, we use the implementations in \texttt{dodiscover} (\url{https://www.pywhy.org/dodiscover/dev/index.html}), with the default hyperparameters.
\section{More Experimental Results}\label{sec:expresults}
\subsection{All Baselines in Linear non-Gaussian ANM}
\paragraph{Results.} When hard selection bisa is not present, LiNGAM-based methods achieves the best performance, which is not a surprising result. However, when selection bias becomes severe, our conditional Hessian rank-based method starts to outperform. We also \underline{underlines} the second-best performance.
\begin{table}[htbp]
\centering
\caption{$d=5$, noise=lognormal: F1 Metrics}
\renewcommand{\arraystretch}{1.2}
\begin{tabular}{l|ccc|ccc}
\toprule
 & \multicolumn{3}{c|}{F1 (Adj) $\uparrow$} & \multicolumn{3}{c}{F1 (Arrow) $\uparrow$} \\
Method & $K=0$ & $K=1$ & $K=2$ & $K=0$ & $K=1$ & $K=2$ \\
\midrule
PC & $\mathbf{0.96}(0.00)$ & $0.76(0.02)$ & $0.70(0.02)$ & $\backslash$ & $\backslash$ & $\backslash$ \\
FCI & $\mathbf{0.96}(0.00)$ & $0.76(0.02)$ & $0.70(0.02)$ & $\backslash$ & $\backslash$ & $\backslash$ \\
Exact & $0.70(0.02)$ & $0.65(0.02)$ & $0.66(0.01)$ & $0.29(0.04)$ & $0.31(0.02)$ & $0.34(0.06)$ \\
GES & $0.96(0.00)$ & $\underline{0.78}(0.02)$ & $0.72(0.02)$ & $\backslash$ & $\backslash$ & $\backslash$ \\
GRaSP & $0.89(0.01)$ & $\mathbf{0.79}(0.03)$ & $0.71(0.02)$ & $\backslash$ & $\backslash$ & $\backslash$ \\
ICA-LiNGAM & $\underline{0.96}(0.00)$ & $0.74(0.02)$ & $0.72(0.01)$ & $\mathbf{0.95}(0.01)$ & $0.59(0.07)$ & $0.50(0.08)$ \\
Direct-LiNGAM & $\underline{0.96}(0.00)$ & $\mathbf{0.79}(0.02)$ & $\underline{0.74}(0.01)$ & $\mathbf{0.95}(0.01)$ & $\mathbf{0.76}(0.02)$ & $\underline{0.68}(0.04)$ \\
CAM & $0.78(0.01)$ & $0.71(0.01)$ & $0.72(0.01)$ & $0.12(0.03)$ & $0.07(0.01)$ & $0.13(0.02)$ \\
SCORE & $0.91(0.01)$ & $0.71(0.01)$ & $0.68(0.02)$ & $\underline{0.89}(0.01)$ & $0.53(0.04)$ & $0.47(0.06)$ \\
DAS & $0.91(0.01)$ & $0.71(0.01)$ & $0.68(0.02)$ & $\underline{0.89}(0.01)$ & $0.53(0.04)$ & $0.47(0.06)$ \\
NoGAM & $0.91(0.01)$ & $0.74(0.01)$ & $0.69(0.02)$ & $0.83(0.03)$ & $0.54(0.05)$ & $0.49(0.05)$ \\
DiffAN & $0.79(0.01)$ & $0.74(0.01)$ & $0.71(0.01)$ & $0.26(0.05)$ & $0.30(0.03)$ & $0.23(0.05)$ \\
CHess (Stein) & $0.94(0.01)$ & $0.76(0.01)$ & $0.73(0.01)$ & $\underline{0.91}(0.02)$ & $0.70(0.03)$ & $\mathbf{0.70}(0.02)$ \\
CHess (DiffAN) & $0.93(0.01)$ & $0.76(0.01)$ & $\mathbf{0.74}(0.02)$ & $0.88(0.03)$ & $\underline{0.72}(0.02)$ & $\mathbf{0.70}(0.03)$ \\
\bottomrule
\end{tabular}
\end{table}

\begin{table}[htbp]
\centering
\caption{$d=5$, noise=lognormal: SHD Metrics}
\renewcommand{\arraystretch}{1.2}
\begin{tabular}{l|ccc}
\toprule
 & \multicolumn{3}{c}{SHD $\downarrow$} \\
Method & $K=0$ & $K=1$ & $K=2$ \\
\midrule
PC & $2.81(1.85)$ & $4.81(3.26)$ & $5.11(2.25)$ \\
FCI & $4.48(0.92)$ & $5.96(0.85)$ & $6.33(2.07)$ \\
Exact & $6.11(2.84)$ & $6.33(2.59)$ & $6.22(4.02)$ \\
GES & $2.78(1.58)$ & $4.63(3.79)$ & $5.70(3.54)$ \\
GRaSP & $3.44(1.80)$ & $4.00(2.74)$ & $5.15(2.13)$ \\
ICA-LiNGAM & $\mathbf{0.41}(0.39)$ & $4.00(4.59)$ & $5.04(7.07)$ \\
Direct-LiNGAM & $\mathbf{0.41}(0.39)$ & $\mathbf{2.63}(2.16)$ & $3.63(3.20)$ \\
CAM & $7.15(3.83)$ & $8.30(1.34)$ & $7.43(1.65)$ \\
SCORE & $1.11(1.14)$ & $4.43(3.25)$ & $5.03(5.37)$ \\
DAS & $1.11(1.14)$ & $4.43(3.25)$ & $5.03(5.37)$ \\
NoGAM & $1.44(2.32)$ & $4.33(4.22)$ & $5.03(3.97)$ \\
DiffAN & $6.04(4.63)$ & $6.26(3.08)$ & $6.78(3.95)$ \\
CHess (Stein) & $\underline{0.81}(1.19)$ & $\underline{3.17}(2.34)$ & $\underline{3.53}(1.92)$ \\
CHess (DiffAN) & $1.11(2.32)$ & $\underline{3.03}(2.10)$ & $\mathbf{3.47}(2.85)$ \\
\bottomrule
\end{tabular}
\end{table}

\begin{table}[htbp]
\centering
\caption{$d=10$, noise=lognormal: F1 Metrics}
\renewcommand{\arraystretch}{1.2}
\begin{tabular}{l|ccc|ccc}
\toprule
 & \multicolumn{3}{c|}{F1 (Adj) $\uparrow$} & \multicolumn{3}{c}{F1 (Arrow) $\uparrow$} \\
Method & $K=0$ & $K=1$ & $K=2$ & $K=0$ & $K=1$ & $K=2$ \\
\midrule
PC & $0.90(0.01)$ & $0.76(0.01)$ & $0.73(0.01)$ & $0.59(0.04)$ & $\backslash$ & $\backslash$ \\
FCI & $0.90(0.01)$ & $0.76(0.01)$ & $0.73(0.01)$ & $0.55(0.03)$ & $\backslash$ & $\backslash$ \\
Exact & $0.53(0.00)$ & $0.50(0.00)$ & $0.46(0.00)$ & $0.23(0.01)$ & $0.25(0.01)$ & $0.21(0.01)$ \\
GES & $0.89(0.00)$ & $0.70(0.01)$ & $0.64(0.01)$ & $0.58(0.04)$ & $0.49(0.02)$ & $0.44(0.02)$ \\
GRaSP & $0.88(0.01)$ & $\mathbf{0.80}(0.01)$ & $\mathbf{0.75}(0.02)$ & $0.55(0.05)$ & $\backslash$ & $\backslash$ \\
ICA-LiNGAM & $\underline{0.93}(0.00)$ & $0.66(0.01)$ & $0.58(0.01)$ & $\underline{0.92}(0.00)$ & $0.51(0.05)$ & $0.40(0.04)$ \\
Direct-LiNGAM & $\mathbf{0.93}(0.00)$ & $0.72(0.01)$ & $0.62(0.01)$ & $\mathbf{0.93}(0.00)$ & $\mathbf{0.71}(0.01)$ & $\mathbf{0.60}(0.02)$ \\
CAM & $0.72(0.01)$ & $0.61(0.00)$ & $0.57(0.01)$ & $0.09(0.01)$ & $0.09(0.01)$ & $0.09(0.01)$ \\
SCORE & $0.80(0.01)$ & $0.65(0.01)$ & $0.58(0.01)$ & $0.75(0.03)$ & $0.52(0.02)$ & $0.40(0.03)$ \\
DAS & $0.82(0.01)$ & $0.66(0.01)$ & $0.60(0.02)$ & $\underline{0.77}(0.02)$ & $0.56(0.02)$ & $0.46(0.03)$ \\
NoGAM & $0.81(0.01)$ & $0.63(0.01)$ & $0.58(0.01)$ & $0.67(0.04)$ & $0.46(0.03)$ & $0.42(0.03)$ \\
DiffAN & $0.70(0.01)$ & $0.61(0.01)$ & $0.57(0.01)$ & $0.22(0.02)$ & $0.21(0.02)$ & $0.17(0.01)$ \\
CHess (Stein) & $0.80(0.01)$ & $0.65(0.01)$ & $0.58(0.01)$ & $0.68(0.03)$ & $\underline{0.57}(0.02)$ & $0.48(0.02)$ \\
CHess (DiffAN) & $0.81(0.01)$ & $0.65(0.01)$ & $0.58(0.01)$ & $0.70(0.02)$ & $\underline{0.58}(0.02)$ & $\underline{0.51}(0.01)$ \\
\bottomrule
\end{tabular}
\end{table}

\begin{table}[htbp]
\centering
\caption{$d=10$, noise=lognormal: SHD Metrics}
\renewcommand{\arraystretch}{1.2}
\begin{tabular}{l|ccc}
\toprule
 & \multicolumn{3}{c}{SHD $\downarrow$} \\
Method & $K=0$ & $K=1$ & $K=2$ \\
\midrule
PC & $5.89(4.84)$ & $\underline{7.78}(6.10)$ & $\underline{9.85}(7.02)$ \\
FCI & $9.74(1.97)$ & $11.85(3.61)$ & $12.85(5.31)$ \\
Exact & $20.56(6.17)$ & $21.74(11.16)$ & $24.56(6.32)$ \\
GES & $6.11(4.40)$ & $10.33(7.19)$ & $13.33(20.44)$ \\
GRaSP & $5.96(5.07)$ & $\mathbf{7.22}(6.84)$ & $\mathbf{9.15}(15.16)$ \\
ICA-LiNGAM & $\underline{1.63}(1.94)$ & $12.07(46.59)$ & $16.37(44.60)$ \\
Direct-LiNGAM & $\mathbf{1.48}(1.66)$ & $\mathbf{7.22}(11.58)$ & $12.15(31.31)$ \\
CAM & $16.67(11.04)$ & $20.40(8.31)$ & $23.23(15.98)$ \\
SCORE & $5.07(13.25)$ & $10.67(9.29)$ & $15.67(29.29)$ \\
DAS & $4.22(6.47)$ & $9.07(8.93)$ & $12.13(15.72)$ \\
NoGAM & $5.96(13.52)$ & $11.93(15.06)$ & $15.50(30.85)$ \\
DiffAN & $15.56(19.36)$ & $18.70(26.36)$ & $21.12(21.49)$ \\
CHess (Stein) & $5.93(13.48)$ & $10.40(16.37)$ & $14.40(18.04)$ \\
CHess (DiffAN) & $5.52(9.81)$ & $10.00(12.13)$ & $13.93(18.20)$ \\
\bottomrule
\end{tabular}
\end{table}

\begin{table}[htbp]
\centering
\caption{$d=5$, noise=gumbel: F1 Metrics}
\renewcommand{\arraystretch}{1.2}
\begin{tabular}{l|ccc|ccc}
\toprule
 & \multicolumn{3}{c|}{F1 (Adj) $\uparrow$} & \multicolumn{3}{c}{F1 (Arrow) $\uparrow$} \\
Method & $K=0$ & $K=1$ & $K=2$ & $K=0$ & $K=1$ & $K=2$ \\
\midrule
PC & $\underline{0.98}(0.00)$ & $0.75(0.01)$ & $0.67(0.01)$ & $\backslash$ & $0.41(0.05)$ & $\backslash$ \\
FCI & $\underline{0.98}(0.00)$ & $0.75(0.01)$ & $0.66(0.02)$ & $\backslash$ & $0.41(0.05)$ & $\backslash$ \\
Exact & $0.76(0.01)$ & $0.71(0.01)$ & $0.68(0.01)$ & $0.30(0.03)$ & $0.31(0.03)$ & $0.36(0.03)$ \\
GES & $0.96(0.00)$ & $0.71(0.01)$ & $0.65(0.01)$ & $\backslash$ & $\backslash$ & $\backslash$ \\
GRaSP & $0.95(0.00)$ & $\mathbf{0.76}(0.01)$ & $0.66(0.02)$ & $\backslash$ & $\backslash$ & $\backslash$ \\
ICA-LiNGAM & $\mathbf{1.00}(0.00)$ & $0.72(0.01)$ & $0.67(0.00)$ & $\underline{0.99}(0.00)$ & $0.41(0.05)$ & $0.38(0.04)$ \\
Direct-LiNGAM & $\mathbf{1.00}(0.00)$ & $0.71(0.01)$ & $0.67(0.01)$ & $\mathbf{1.00}(0.00)$ & $0.67(0.01)$ & $0.51(0.03)$ \\
CAM & $0.88(0.01)$ & $0.72(0.01)$ & $0.68(0.01)$ & $0.15(0.02)$ & $0.17(0.03)$ & $0.23(0.02)$ \\
SCORE & $0.90(0.01)$ & $0.73(0.01)$ & $0.69(0.01)$ & $0.62(0.03)$ & $0.41(0.04)$ & $0.38(0.03)$ \\
DAS & $0.90(0.01)$ & $0.73(0.01)$ & $0.69(0.01)$ & $0.62(0.03)$ & $0.41(0.04)$ & $0.38(0.03)$ \\
NoGAM & $0.94(0.01)$ & $0.74(0.01)$ & $\underline{0.70}(0.01)$ & $0.78(0.05)$ & $0.49(0.03)$ & $0.41(0.04)$ \\
DiffAN & $0.87(0.01)$ & $0.75(0.01)$ & $\mathbf{0.70}(0.01)$ & $0.43(0.07)$ & $0.30(0.04)$ & $0.28(0.03)$ \\
CHess (Stein) & $0.97(0.00)$ & $\underline{0.75}(0.01)$ & $\mathbf{0.70}(0.01)$ & $0.94(0.02)$ & $\mathbf{0.69}(0.03)$ & $\mathbf{0.65}(0.02)$ \\
CHess (DiffAN) & $0.95(0.01)$ & $0.75(0.01)$ & $\mathbf{0.70}(0.01)$ & $0.86(0.05)$ & $\underline{0.69}(0.02)$ & $\underline{0.64}(0.01)$ \\
\bottomrule
\end{tabular}
\end{table}

\begin{table}[htbp]
\centering
\caption{$d=5$, noise=gumbel: SHD Metrics}
\renewcommand{\arraystretch}{1.2}
\begin{tabular}{l|ccc}
\toprule
 & \multicolumn{3}{c}{SHD $\downarrow$} \\
Method & $K=0$ & $K=1$ & $K=2$ \\
\midrule
PC & $2.37(1.94)$ & $5.00(3.11)$ & $5.93(1.55)$ \\
FCI & $4.19(1.04)$ & $6.63(1.64)$ & $7.19(1.04)$ \\
Exact & $5.74(1.75)$ & $5.81(2.00)$ & $5.81(1.85)$ \\
GES & $2.59(2.32)$ & $5.89(1.73)$ & $6.33(1.85)$ \\
GRaSP & $2.78(2.10)$ & $4.85(2.50)$ & $6.04(2.85)$ \\
ICA-LiNGAM & $\underline{0.07}(0.07)$ & $5.70(3.39)$ & $6.30(2.65)$ \\
Direct-LiNGAM & $\mathbf{0.00}(0.00)$ & $4.00(1.48)$ & $5.37(1.86)$ \\
CAM & $5.52(2.03)$ & $6.77(2.45)$ & $7.00(2.33)$ \\
SCORE & $2.70(2.13)$ & $5.30(2.54)$ & $6.07(2.60)$ \\
DAS & $2.70(2.13)$ & $5.30(2.54)$ & $6.07(2.60)$ \\
NoGAM & $1.59(3.13)$ & $4.83(2.81)$ & $5.73(2.93)$ \\
DiffAN & $4.15(4.94)$ & $5.78(2.62)$ & $6.56(2.91)$ \\
CHess (Stein) & $0.52(0.99)$ & $\mathbf{3.43}(2.58)$ & $\mathbf{4.10}(1.09)$ \\
CHess (DiffAN) & $1.11(3.14)$ & $\underline{0.35}(2.38)$ & $\mathbf{4.10}(1.22)$ \\
\bottomrule
\end{tabular}
\end{table}

\begin{table}[htbp]
\centering
\caption{$d=10$, noise=gumbel: F1 Metrics}
\renewcommand{\arraystretch}{1.2}
\begin{tabular}{l|ccc|ccc}
\toprule
 & \multicolumn{3}{c|}{F1 (Adj) $\uparrow$} & \multicolumn{3}{c}{F1 (Arrow) $\uparrow$} \\
Method & $K=0$ & $K=1$ & $K=2$ & $K=0$ & $K=1$ & $K=2$ \\
\midrule
PC & $0.96(0.00)$ & $0.77(0.01)$ & $0.66(0.01)$ & $\backslash$ & $0.54(0.02)$ & $0.47(0.02)$ \\
FCI & $0.96(0.00)$ & $\underline{0.77}(0.01)$ & $\underline{0.67}(0.01)$ & $\backslash$ & $0.46(0.01)$ & $0.37(0.01)$ \\
Exact & $0.57(0.00)$ & $0.53(0.00)$ & $0.52(0.00)$ & $0.19(0.01)$ & $0.25(0.01)$ & $0.27(0.02)$ \\
GES & $0.96(0.00)$ & $0.64(0.00)$ & $0.55(0.01)$ & $0.64(0.06)$ & $0.44(0.02)$ & $0.32(0.01)$ \\
GRaSP & $\underline{0.97}(0.00)$ & $\mathbf{0.81}(0.01)$ & $\mathbf{0.69}(0.01)$ & $0.65(0.05)$ & $0.45(0.05)$ & $0.34(0.03)$ \\
ICA-LiNGAM & $0.95(0.01)$ & $0.57(0.01)$ & $0.49(0.00)$ & $\underline{0.82}(0.06)$ & $0.34(0.03)$ & $0.21(0.02)$ \\
Direct-LiNGAM & $\mathbf{0.98}(0.00)$ & $0.60(0.01)$ & $0.52(0.00)$ & $\mathbf{0.98}(0.00)$ & $\mathbf{0.59}(0.01)$ & $\mathbf{0.50}(0.01)$ \\
CAM & $0.86(0.00)$ & $0.70(0.01)$ & $0.61(0.01)$ & $0.20(0.02)$ & $0.21(0.01)$ & $0.17(0.01)$ \\
SCORE & $0.90(0.01)$ & $0.70(0.01)$ & $0.59(0.01)$ & $0.63(0.03)$ & $0.46(0.02)$ & $0.34(0.01)$ \\
DAS & $0.84(0.01)$ & $0.66(0.01)$ & $0.55(0.01)$ & $0.63(0.03)$ & $0.45(0.02)$ & $0.35(0.02)$ \\
NoGAM & $0.90(0.01)$ & $0.69(0.01)$ & $0.60(0.01)$ & $0.73(0.03)$ & $0.50(0.02)$ & $0.41(0.01)$ \\
DiffAN & $0.85(0.01)$ & $0.68(0.01)$ & $0.59(0.01)$ & $0.37(0.03)$ & $0.26(0.02)$ & $0.22(0.01)$ \\
CHess (Stein) & $0.92(0.01)$ & $0.72(0.01)$ & $0.60(0.01)$ & $0.72(0.04)$ & $\underline{0.56}(0.03)$ & $\underline{0.48}(0.02)$ \\
CHess (DiffAN) & $0.90(0.01)$ & $0.70(0.01)$ & $0.60(0.01)$ & $0.66(0.02)$ & $0.52(0.02)$ & $0.40(0.02)$ \\
\bottomrule
\end{tabular}
\end{table}

\begin{table}[htbp]
\centering
\caption{$d=10$, noise=gumbel: SHD Metrics}
\renewcommand{\arraystretch}{1.2}
\begin{tabular}{l|ccc}
\toprule
 & \multicolumn{3}{c}{SHD $\downarrow$} \\
Method & $K=0$ & $K=1$ & $K=2$ \\
\midrule
PC & $4.96(5.29)$ & $\underline{9.37}(6.60)$ & $\mathbf{12.26}(10.64)$ \\
FCI & $9.30(2.43)$ & $13.67(9.56)$ & $16.15(10.13)$ \\
Exact & $20.26(6.19)$ & $21.48(7.95)$ & $20.96(11.89)$ \\
GES & $5.30(9.84)$ & $14.78(9.14)$ & $19.48(20.18)$ \\
GRaSP & $4.81(5.56)$ & $\mathbf{8.96}(7.22)$ & $\underline{12.56}(11.73)$ \\
ICA-LiNGAM & $\underline{2.81}(17.34)$ & $19.11(24.10)$ & $25.11(21.14)$ \\
Direct-LiNGAM & $\mathbf{0.37}(0.46)$ & $13.59(11.28)$ & $18.48(19.81)$ \\
CAM & $10.85(6.64)$ & $14.93(9.40)$ & $18.47(17.78)$ \\
SCORE & $5.52(11.21)$ & $11.47(8.05)$ & $16.53(18.05)$ \\
DAS & $5.67(10.74)$ & $10.67(6.76)$ & $14.80(11.76)$ \\
NoGAM & $4.15(9.24)$ & $11.03(9.77)$ & $15.27(21.40)$ \\
DiffAN & $9.15(10.94)$ & $14.96(15.29)$ & $18.85(17.39)$ \\
CHess (Stein) & $4.00(11.04)$ & $10.07(19.60)$ & $14.17(20.47)$ \\
CHess (DiffAN) & $4.85(6.13)$ & $10.87(13.72)$ & $15.80(24.23)$ \\
\bottomrule
\end{tabular}
\end{table}

\subsection{All Baselines in MLP non-Gaussian ANM}
\paragraph{Results.}  When hard selection bias is not present, the performance of all methods are similar (F1 around $0.7-0.8$). However, when selection bias becomes severe, our conditional Hessian rank-based method starts to outperform while other methods' performance drops significantly. 
\begin{table}[htbp]
\centering
\caption{$d=5$, noise=lognormal, MLP: F1 Metrics}
\renewcommand{\arraystretch}{1.2}
\begin{tabular}{l|ccc|ccc}
\toprule
 & \multicolumn{3}{c|}{F1 (Adj) $\uparrow$} & \multicolumn{3}{c}{F1 (Arrow) $\uparrow$} \\
Method & $K=0$ & $K=1$ & $K=2$ & $K=0$ & $K=1$ & $K=2$ \\
\midrule
PC & $0.79(0.02)$ & $0.67(0.02)$ & $0.64(0.02)$ & $\backslash$ & $\backslash$ & $\backslash$ \\
FCI & $0.79(0.02)$ & $0.67(0.02)$ & $0.64(0.02)$ & $\backslash$ & $\backslash$ & $\backslash$ \\
Exact & $0.62(0.02)$ & $0.62(0.02)$ & $0.65(0.01)$ & $0.33(0.05)$ & $0.32(0.03)$ & $0.34(0.03)$ \\
GES & $0.77(0.01)$ & $0.67(0.03)$ & $0.66(0.02)$ & $\backslash$ & $\backslash$ & $\backslash$ \\
GRaSP & $0.68(0.04)$ & $0.65(0.03)$ & $0.62(0.02)$ & $\backslash$ & $\backslash$ & $\backslash$ \\
ICA-LiNGAM & $0.79(0.01)$ & $0.65(0.01)$ & $0.65(0.01)$ & $0.56(0.04)$ & $0.38(0.03)$ & $0.35(0.03)$ \\
Direct-LiNGAM & $0.75(0.01)$ & $0.67(0.02)$ & $0.67(0.01)$ & $0.50(0.05)$ & $0.40(0.05)$ & $0.45(0.04)$ \\
SCORE & $0.86(0.01)$ & $0.68(0.01)$ & $0.70(0.01)$ & $0.81(0.02)$ & $0.57(0.02)$ & $0.55(0.03)$ \\
DAS & $0.86(0.01)$ & $0.68(0.01)$ & $0.70(0.01)$ & $0.81(0.02)$ & $0.57(0.02)$ & $0.55(0.03)$ \\
CAM & $0.86(0.02)$ & $0.72(0.01)$ & $\mathbf{0.72}(0.01)$ & $0.37(0.06)$ & $0.28(0.03)$ & $0.31(0.04)$ \\
NoGAM & $\mathbf{0.87}(0.01)$ & $0.68(0.02)$ & $0.70(0.01)$ & $\mathbf{0.83}(0.02)$ & $0.57(0.02)$ & $0.55(0.02)$ \\
DiffAN & $0.86(0.01)$ & $\mathbf{0.72}(0.01)$ & $0.71(0.01)$ & $0.65(0.03)$ & $0.47(0.03)$ & $0.36(0.03)$ \\
CHess (Stein) & $0.86(0.02)$ & $0.69(0.01)$ & $\mathbf{0.72}(0.01)$ & $0.77(0.03)$ & $\mathbf{0.64}(0.02)$ & $\mathbf{0.67}(0.02)$ \\
CHess (DiffAN) & $0.85(0.02)$ & $0.70(0.01)$ & $0.71(0.01)$ & $0.78(0.05)$ & $0.62(0.03)$ & $0.65(0.02)$ \\
\bottomrule
\end{tabular}
\end{table}

\begin{table}[htbp]
\centering
\caption{$d=5$, noise=lognormal, MLP: SHD Metrics}
\renewcommand{\arraystretch}{1.2}
\begin{tabular}{l|ccc}
\toprule
 & \multicolumn{3}{c}{SHD $\downarrow$} \\
Method & $K=0$ & $K=1$ & $K=2$ \\
\midrule
PC & $3.77(1.85)$ & $5.67(2.82)$ & $5.83(2.47)$ \\
FCI & $5.70(0.74)$ & $6.73(1.60)$ & $6.93(1.86)$ \\
Exact & $6.23(2.91)$ & $6.53(2.78)$ & $6.33(1.89)$ \\
GES & $4.27(2.46)$ & $5.63(3.37)$ & $6.50(2.98)$ \\
GRaSP & $4.43(2.58)$ & $5.20(2.49)$ & $5.60(1.71)$ \\
ICA-LiNGAM & $3.37(1.77)$ & $6.00(2.20)$ & $6.30(2.48)$ \\
Direct-LiNGAM & $3.87(1.92)$ & $5.70(3.48)$ & $5.67(3.49)$ \\
SCORE & $1.67(1.09)$ & $4.57(1.78)$ & $4.80(2.43)$ \\
DAS & $1.67(1.09)$ & $4.57(1.78)$ & $4.80(2.43)$ \\
CAM & $4.23(4.18)$ & $6.43(2.58)$ & $6.17(2.61)$ \\
NoGAM & $\mathbf{1.47}(1.05)$ & $4.53(2.45)$ & $4.70(2.08)$ \\
DiffAN & $2.67(2.49)$ & $5.07(2.20)$ & $6.00(2.47)$ \\
CHess (Stein) & $1.77(1.65)$ & $\mathbf{4.17}(2.54)$ & $\mathbf{3.90}(1.62)$ \\
CHess (DiffAN) & $1.80(2.96)$ & $\mathbf{4.17}(2.21)$ & $4.03(2.30)$ \\
\bottomrule
\end{tabular}
\end{table}

\newpage
\subsection{DREAM4 experiments}\label{sec:dream4exp}
\paragraph{Results.} We report the SHD of all methods across $5$ networks in Table~\ref{tab:dream4noselect}. Due to the extremely small sample size ($30$ observations for each network), we compare all methods on the full data without selection. We see that our methods achieve the best performance for all networks, with the only exception being Network 5, where DAS~\citep{montagnaScalableCausalDiscovery2023a} achieves the best performance. This shows that our methods perform competitively on small-sized real-world datasets even without selection bias. 
\begin{table}[htbp]
\centering
\caption{SHD Results for $K=0$ Across All Networks}
\label{tab:dream4noselect}
\renewcommand{\arraystretch}{1.2}
\begin{tabular}{l|ccccc}
\toprule
Method & Network 1 & Network 2 & Network 3 & Network 4 & Network 5 \\
\midrule
SCORE & $12.0$ & $12.0$ & $\mathbf{12.0}$ & $13.0$ & $9.0$ \\
DAS & $12.0$ & $13.0$ & $13.0$ & $13.0$ & $\mathbf{8.0}$ \\
NoGAM & $13.0$ & $13.0$ & $13.0$ & $12.0$ & $10.0$ \\
CAM & $12.0$ & $11.0$ & $14.0$ & $12.0$ & $10.0$ \\
CHess (stein) & $\mathbf{11.0}$ & $14.0$ & $13.0$ & $12.0$ & $13.0$ \\
CHess (diffan) & $12.0$ & $\mathbf{9.0}$ & $\mathbf{12.0}$ & $\mathbf{11.8}$ & $10.4$ \\
\bottomrule
\end{tabular}
\end{table}

\subsection{Sachs experiments}\label{sec:sachsexp}

\rev{We provide extra results for the Sachs dataset in Table~\ref{tab:f1_arrow_halfspace}-\ref{tab:f1_arrow_two_boxes} for F1 arrow and Table~\ref{tab:causal_ordering_halfspace}-\ref{tab:causal_ordering_two_boxes} for causal ordering accuracy on the aggretated perturbational dataset and in \ref{tab:f1_arrow_sachs_separate} for F1 arrow on the separated dataset. We vary the strength and shape of the selection bias, and apply random downsampling when $\rho=1$ to the same sample size as when selection bias is present, to enable a fair comparison. The results are consistent with the main text, where our methods achieve the best performance.}
\paragraph{Extra result for \emph{aggregated} Sachs Dataset}
\rev{We combine all 9 experimental conditions into one dataset and score against the ground truth ($n=7466$ before selection bias is applied), which is inline with the main text. We apply selection of different shapes (half-space, annulus, two-box) and different selection fraction $\rho \in \{1.0 \mathrm{(unselected)}, 0.2, 0.1, 0.05\}$.}
\begin{table}[htbp]
\centering
\caption{F1 (Arrow), halfspace selection}
\vspace{3pt}
\label{tab:f1_arrow_halfspace}
\renewcommand{\arraystretch}{1.2}
\setlength{\tabcolsep}{3pt}
\begin{tabular}{l c c c c}
\toprule
Method & \shortstack{$\rho = 1$ \\ $n=1493$ / $747$ / $373$} & \shortstack{$\rho = 0.2$ \\ ($n = 1493$)} & \shortstack{$\rho = 0.1$ \\ ($n = 747$)} & \shortstack{$\rho = 0.05$ \\ ($n = 373$)} \\
\midrule
SCORE & $0.21$ \var{0.04} / $0.26$ \var{0.06} / $0.28$ \var{0.05} & $0.28$ \var{0.04} & $0.29$ \var{0.05} & $0.28$ \var{0.08} \\
DAS & $0.17$ \var{0.06} / $0.18$ \var{0.10} / $0.19$ \var{0.10} & $0.24$ \var{0.05} & $0.23$ \var{0.08} & $0.20$ \var{0.10} \\
NoGAM & $0.23$ \var{0.02} / $0.26$ \var{0.01} / $0.29$ \var{0.04} & $0.28$ \var{0.04} & $0.30$ \var{0.06} & $0.31$ \var{0.08} \\
CAM & $\mathbf{0.41}$ \var{0.06} / $0.42$ \var{0.08} / $0.40$ \var{0.08} & $0.23$ \var{0.08} & $0.28$ \var{0.09} & $0.27$ \var{0.10} \\
PC & $0.27$ \var{0.10} / $0.26$ \var{0.10} / $0.25$ \var{0.09} & $0.23$ \var{0.07} & $0.23$ \var{0.07} & $0.20$ \var{0.09} \\
FCI & $0.16$ \var{0.05} / $0.16$ \var{0.06} / $0.13$ \var{0.04} & $0.11$ \var{0.10} & $0.14$ \var{0.11} & $0.11$ \var{0.13} \\
GES & $0.27$ \var{0.05} / $0.25$ \var{0.05} / $0.25$ \var{0.06} & $0.26$ \var{0.08} & $0.22$ \var{0.08} & $0.23$ \var{0.09} \\
GRaSP & $0.25$ \var{0.07} / $0.24$ \var{0.06} / $0.28$ \var{0.07} & $0.23$ \var{0.11} & $0.22$ \var{0.10} & $0.28$ \var{0.14} \\
ICA-LiNGAM & $0.23$ \var{0.07} / $0.23$ \var{0.06} / $0.27$ \var{0.08} & $0.29$ \var{0.06} & $0.30$ \var{0.07} & $0.27$ \var{0.09} \\
DirectLiNGAM & $0.30$ \var{0.06} / $0.28$ \var{0.06} / $0.30$ \var{0.07} & $0.30$ \var{0.07} & $0.33$ \var{0.09} & $0.31$ \var{0.09} \\
DiffAN & $0.34$ \var{0.06} / $0.36$ \var{0.09} / $0.30$ \var{0.08} & $0.18$ \var{0.08} & $0.19$ \var{0.05} & $0.20$ \var{0.09} \\
CHess (Stein) & $0.40$ \var{0.05} / $\mathbf{0.44}$ \var{0.06} / $\mathbf{0.42}$ \var{0.09} & $0.34$ \var{0.12} & $0.32$ \var{0.12} & $\mathbf{0.36}$ \var{0.08} \\
CHess (DiffAN) & $0.36$ \var{0.06} / $0.42$ \var{0.07} / $0.41$ \var{0.10} & $\mathbf{0.41}$ \var{0.12} & $\mathbf{0.38}$ \var{0.12} & $0.36$ \var{0.10} \\
\bottomrule
\end{tabular}
\end{table}

\begin{table}[htbp]
\centering
\caption{F1 (Arrow), annulus selection}
\vspace{3pt}
\label{tab:f1_arrow_annulus}
\renewcommand{\arraystretch}{1.2}
\setlength{\tabcolsep}{3pt}
\begin{tabular}{l c c c c}
\toprule
Method & \shortstack{$\rho = 1$ \\ $n=1493$ / $747$ / $373$} & \shortstack{$\rho = 0.2$ \\ ($n = 1493$)} & \shortstack{$\rho = 0.1$ \\ ($n = 747$)} & \shortstack{$\rho = 0.05$ \\ ($n = 373$)} \\
\midrule
SCORE & $0.21$ \var{0.04} / $0.26$ \var{0.06} / $0.28$ \var{0.05} & $0.29$ \var{0.06} & $0.32$ \var{0.07} & $0.34$ \var{0.08} \\
DAS & $0.17$ \var{0.06} / $0.18$ \var{0.10} / $0.19$ \var{0.10} & $0.26$ \var{0.06} & $0.29$ \var{0.09} & $0.32$ \var{0.08} \\
NoGAM & $0.23$ \var{0.02} / $0.26$ \var{0.01} / $0.29$ \var{0.04} & $0.28$ \var{0.05} & $0.31$ \var{0.06} & $0.33$ \var{0.09} \\
CAM & $\mathbf{0.41}$ \var{0.06} / $0.42$ \var{0.08} / $0.40$ \var{0.08} & $0.30$ \var{0.08} & $0.31$ \var{0.13} & $0.26$ \var{0.10} \\
PC & $0.27$ \var{0.10} / $0.26$ \var{0.10} / $0.25$ \var{0.09} & $0.23$ \var{0.10} & $0.25$ \var{0.07} & $0.23$ \var{0.12} \\
FCI & $0.16$ \var{0.05} / $0.16$ \var{0.06} / $0.13$ \var{0.04} & $0.11$ \var{0.08} & $0.15$ \var{0.08} & $0.15$ \var{0.11} \\
GES & $0.27$ \var{0.05} / $0.25$ \var{0.05} / $0.25$ \var{0.06} & $0.22$ \var{0.05} & $0.21$ \var{0.07} & $0.20$ \var{0.10} \\
GRaSP & $0.25$ \var{0.07} / $0.24$ \var{0.06} / $0.28$ \var{0.07} & $0.17$ \var{0.07} & $0.18$ \var{0.09} & $0.19$ \var{0.08} \\
ICA-LiNGAM & $0.23$ \var{0.07} / $0.23$ \var{0.06} / $0.27$ \var{0.08} & $0.21$ \var{0.06} & $0.26$ \var{0.06} & $0.27$ \var{0.07} \\
DirectLiNGAM & $0.30$ \var{0.06} / $0.28$ \var{0.06} / $0.30$ \var{0.07} & $0.29$ \var{0.07} & $0.27$ \var{0.07} & $0.30$ \var{0.07} \\
DiffAN & $0.34$ \var{0.06} / $0.36$ \var{0.09} / $0.30$ \var{0.08} & $0.14$ \var{0.05} & $0.20$ \var{0.06} & $0.19$ \var{0.07} \\
CHess (Stein) & $0.40$ \var{0.05} / $\mathbf{0.44}$ \var{0.06} / $\mathbf{0.42}$ \var{0.09} & $\mathbf{0.39}$ \var{0.07} & $\mathbf{0.45}$ \var{0.11} & $\mathbf{0.48}$ \var{0.12} \\
CHess (DiffAN) & $0.36$ \var{0.06} / $0.42$ \var{0.07} / $0.41$ \var{0.10} & $0.36$ \var{0.08} & $0.41$ \var{0.09} & $0.37$ \var{0.14} \\
\bottomrule
\end{tabular}
\end{table}

\begin{table}[htbp]
\centering
\caption{F1 (Arrow), two boxes selection}
\vspace{3pt}
\label{tab:f1_arrow_two_boxes}
\renewcommand{\arraystretch}{1.2}
\setlength{\tabcolsep}{3pt}
\begin{tabular}{l c c c c}
\toprule
Method & \shortstack{$\rho = 1$ \\ $n=1493$ / $747$ / $373$} & \shortstack{$\rho = 0.2$ \\ ($n = 1493$)} & \shortstack{$\rho = 0.1$ \\ ($n = 747$)} & \shortstack{$\rho = 0.05$ \\ ($n = 373$)} \\
\midrule
SCORE & $0.21$ \var{0.04} / $0.26$ \var{0.06} / $0.28$ \var{0.05} & $0.21$ \var{0.06} & $0.24$ \var{0.06} & $0.21$ \var{0.05} \\
DAS & $0.17$ \var{0.06} / $0.18$ \var{0.10} / $0.19$ \var{0.10} & $0.21$ \var{0.05} & $0.21$ \var{0.07} & $0.19$ \var{0.09} \\
NoGAM & $0.23$ \var{0.02} / $0.26$ \var{0.01} / $0.29$ \var{0.04} & $0.26$ \var{0.05} & $0.27$ \var{0.04} & $0.24$ \var{0.04} \\
CAM & $\mathbf{0.41}$ \var{0.06} / $0.42$ \var{0.08} / $0.40$ \var{0.08} & $0.31$ \var{0.09} & $0.34$ \var{0.08} & $0.31$ \var{0.09} \\
PC & $0.27$ \var{0.10} / $0.26$ \var{0.10} / $0.25$ \var{0.09} & $0.29$ \var{0.08} & $0.27$ \var{0.07} & $0.21$ \var{0.12} \\
FCI & $0.16$ \var{0.05} / $0.16$ \var{0.06} / $0.13$ \var{0.04} & $0.13$ \var{0.11} & $0.18$ \var{0.07} & $0.16$ \var{0.11} \\
GES & $0.27$ \var{0.05} / $0.25$ \var{0.05} / $0.25$ \var{0.06} & $0.25$ \var{0.06} & $0.27$ \var{0.09} & $0.28$ \var{0.09} \\
GRaSP & $0.25$ \var{0.07} / $0.24$ \var{0.06} / $0.28$ \var{0.07} & $0.26$ \var{0.09} & $0.27$ \var{0.11} & $0.25$ \var{0.13} \\
ICA-LiNGAM & $0.23$ \var{0.07} / $0.23$ \var{0.06} / $0.27$ \var{0.08} & $0.25$ \var{0.06} & $0.28$ \var{0.06} & $0.26$ \var{0.06} \\
DirectLiNGAM & $0.30$ \var{0.06} / $0.28$ \var{0.06} / $0.30$ \var{0.07} & $0.32$ \var{0.07} & $0.35$ \var{0.08} & $0.32$ \var{0.10} \\
DiffAN & $0.34$ \var{0.06} / $0.36$ \var{0.09} / $0.30$ \var{0.08} & $0.20$ \var{0.08} & $0.23$ \var{0.09} & $0.22$ \var{0.09} \\
CHess (Stein) & $0.40$ \var{0.05} / $\mathbf{0.44}$ \var{0.06} / $\mathbf{0.42}$ \var{0.09} & $0.32$ \var{0.10} & $0.28$ \var{0.11} & $0.29$ \var{0.11} \\
CHess (DiffAN) & $0.36$ \var{0.06} / $0.42$ \var{0.07} / $0.41$ \var{0.10} & $\mathbf{0.37}$ \var{0.10} & $\mathbf{0.40}$ \var{0.14} & $\mathbf{0.39}$ \var{0.12} \\
\bottomrule
\end{tabular}
\end{table}

\begin{table}[htbp]
\centering
\caption{Causal ordering accuracy $1 - D_{top}/|E|$  (0.5 = random order), halfspace selection}
\vspace{3pt}
\label{tab:causal_ordering_halfspace}
\renewcommand{\arraystretch}{1.2}
\setlength{\tabcolsep}{3pt}
\begin{tabular}{l c c c c}
\toprule
Method & \shortstack{$\rho = 1$ \\ $n=1493$ / $747$ / $373$} & \shortstack{$\rho = 0.2$ \\ ($n = 1493$)} & \shortstack{$\rho = 0.1$ \\ ($n = 747$)} & \shortstack{$\rho = 0.05$ \\ ($n = 373$)} \\
\midrule
SCORE & $0.38$ \var{0.06} / $0.42$ \var{0.07} / $0.42$ \var{0.04} & $0.52$ \var{0.08} & $0.52$ \var{0.08} & $0.52$ \var{0.09} \\
DAS & $0.38$ \var{0.06} / $0.42$ \var{0.07} / $0.42$ \var{0.04} & $0.52$ \var{0.08} & $0.52$ \var{0.08} & $0.52$ \var{0.09} \\
NoGAM & $0.42$ \var{0.03} / $0.42$ \var{0.03} / $0.44$ \var{0.04} & $0.53$ \var{0.06} & $0.52$ \var{0.07} & $0.52$ \var{0.06} \\
CAM & $0.64$ \var{0.07} / $0.61$ \var{0.12} / $0.59$ \var{0.12} & $0.40$ \var{0.12} & $0.45$ \var{0.12} & $0.43$ \var{0.14} \\
PC & - / - / - & - & - & - \\
FCI & - / - / - & - & - & - \\
GES & - / - / - & - & - & - \\
GRaSP & - / - / - & - & - & - \\
ICA-LiNGAM & $0.46$ \var{0.12} / $0.44$ \var{0.10} / $0.48$ \var{0.13} & $0.54$ \var{0.10} & $0.56$ \var{0.10} & $0.48$ \var{0.10} \\
DirectLiNGAM & $0.60$ \var{0.11} / $0.58$ \var{0.07} / $0.61$ \var{0.12} & $0.59$ \var{0.12} & $\mathbf{0.62}$ \var{0.14} & $\mathbf{0.57}$ \var{0.13} \\
DiffAN & $0.58$ \var{0.09} / $0.58$ \var{0.11} / $0.56$ \var{0.11} & $0.36$ \var{0.10} & $0.38$ \var{0.07} & $0.41$ \var{0.10} \\
CHess (Stein) & $\mathbf{0.73}$ \var{0.04} / $\mathbf{0.75}$ \var{0.05} / $\mathbf{0.71}$ \var{0.07} & $0.55$ \var{0.13} & $0.48$ \var{0.15} & $0.55$ \var{0.12} \\
CHess (DiffAN) & $0.72$ \var{0.05} / $0.71$ \var{0.06} / $\mathbf{0.71}$ \var{0.09} & $\mathbf{0.66}$ \var{0.12} & $0.60$ \var{0.16} & $0.56$ \var{0.08} \\
\bottomrule
\end{tabular}
\end{table}

\begin{table}[htbp]
\centering
\caption{Causal ordering accuracy $1 - D_{top}/|E|$  (0.5 = random order), annulus selection}
\vspace{3pt}
\label{tab:causal_ordering_annulus}
\renewcommand{\arraystretch}{1.2}
\setlength{\tabcolsep}{3pt}
\begin{tabular}{l c c c c}
\toprule
Method & \shortstack{$\rho = 1$ \\ $n=1493$ / $747$ / $373$} & \shortstack{$\rho = 0.2$ \\ ($n = 1493$)} & \shortstack{$\rho = 0.1$ \\ ($n = 747$)} & \shortstack{$\rho = 0.05$ \\ ($n = 373$)} \\
\midrule
SCORE & $0.38$ \var{0.06} / $0.42$ \var{0.07} / $0.42$ \var{0.04} & $0.45$ \var{0.07} & $0.48$ \var{0.07} & $0.49$ \var{0.06} \\
DAS & $0.38$ \var{0.06} / $0.42$ \var{0.07} / $0.42$ \var{0.04} & $0.45$ \var{0.07} & $0.48$ \var{0.07} & $0.49$ \var{0.06} \\
NoGAM & $0.42$ \var{0.03} / $0.42$ \var{0.03} / $0.44$ \var{0.04} & $0.46$ \var{0.04} & $0.46$ \var{0.05} & $0.47$ \var{0.08} \\
CAM & $0.64$ \var{0.07} / $0.61$ \var{0.12} / $0.59$ \var{0.12} & $0.49$ \var{0.14} & $0.51$ \var{0.17} & $0.45$ \var{0.19} \\
PC & - / - / - & - & - & - \\
FCI & - / - / - & - & - & - \\
GES & - / - / - & - & - & - \\
GRaSP & - / - / - & - & - & - \\
ICA-LiNGAM & $0.46$ \var{0.12} / $0.44$ \var{0.10} / $0.48$ \var{0.13} & $0.42$ \var{0.10} & $0.49$ \var{0.09} & $0.50$ \var{0.10} \\
DirectLiNGAM & $0.60$ \var{0.11} / $0.58$ \var{0.07} / $0.61$ \var{0.12} & $0.58$ \var{0.10} & $0.53$ \var{0.10} & $0.57$ \var{0.11} \\
DiffAN & $0.58$ \var{0.09} / $0.58$ \var{0.11} / $0.56$ \var{0.11} & $0.26$ \var{0.09} & $0.34$ \var{0.13} & $0.34$ \var{0.14} \\
CHess (Stein) & $\mathbf{0.73}$ \var{0.04} / $\mathbf{0.75}$ \var{0.05} / $\mathbf{0.71}$ \var{0.07} & $0.65$ \var{0.16} & $\mathbf{0.71}$ \var{0.09} & $\mathbf{0.71}$ \var{0.10} \\
CHess (DiffAN) & $0.72$ \var{0.05} / $0.71$ \var{0.06} / $\mathbf{0.71}$ \var{0.09} & $\mathbf{0.65}$ \var{0.11} & $\mathbf{0.71}$ \var{0.10} & $0.65$ \var{0.12} \\
\bottomrule
\end{tabular}
\end{table}

\begin{table}[htbp]
\centering
\caption{Causal ordering accuracy $1 - D_{top}/|E|$  (0.5 = random order), two boxes selection}
\vspace{3pt}
\label{tab:causal_ordering_two_boxes}
\renewcommand{\arraystretch}{1.2}
\setlength{\tabcolsep}{3pt}
\begin{tabular}{l c c c c}
\toprule
Method & \shortstack{$\rho = 1$ \\ $n=1493$ / $747$ / $373$} & \shortstack{$\rho = 0.2$ \\ ($n = 1493$)} & \shortstack{$\rho = 0.1$ \\ ($n = 747$)} & \shortstack{$\rho = 0.05$ \\ ($n = 373$)} \\
\midrule
SCORE & $0.38$ \var{0.06} / $0.42$ \var{0.07} / $0.42$ \var{0.04} & $0.39$ \var{0.06} & $0.40$ \var{0.07} & $0.40$ \var{0.07} \\
DAS & $0.38$ \var{0.06} / $0.42$ \var{0.07} / $0.42$ \var{0.04} & $0.39$ \var{0.06} & $0.40$ \var{0.07} & $0.40$ \var{0.07} \\
NoGAM & $0.42$ \var{0.03} / $0.42$ \var{0.03} / $0.44$ \var{0.04} & $0.45$ \var{0.05} & $0.43$ \var{0.06} & $0.41$ \var{0.06} \\
CAM & $0.64$ \var{0.07} / $0.61$ \var{0.12} / $0.59$ \var{0.12} & $0.51$ \var{0.14} & $0.52$ \var{0.15} & $0.51$ \var{0.13} \\
PC & - / - / - & - & - & - \\
FCI & - / - / - & - & - & - \\
GES & - / - / - & - & - & - \\
GRaSP & - / - / - & - & - & - \\
ICA-LiNGAM & $0.46$ \var{0.12} / $0.44$ \var{0.10} / $0.48$ \var{0.13} & $0.48$ \var{0.11} & $0.51$ \var{0.11} & $0.46$ \var{0.10} \\
DirectLiNGAM & $0.60$ \var{0.11} / $0.58$ \var{0.07} / $0.61$ \var{0.12} & $0.65$ \var{0.10} & $0.64$ \var{0.11} & $0.60$ \var{0.14} \\
DiffAN & $0.58$ \var{0.09} / $0.58$ \var{0.11} / $0.56$ \var{0.11} & $0.36$ \var{0.14} & $0.37$ \var{0.12} & $0.43$ \var{0.12} \\
CHess (Stein) & $\mathbf{0.73}$ \var{0.04} / $\mathbf{0.75}$ \var{0.05} / $\mathbf{0.71}$ \var{0.07} & $0.58$ \var{0.13} & $0.55$ \var{0.14} & $0.52$ \var{0.11} \\
CHess (DiffAN) & $0.72$ \var{0.05} / $0.71$ \var{0.06} / $\mathbf{0.71}$ \var{0.09} & $\mathbf{0.67}$ \var{0.09} & $\mathbf{0.66}$ \var{0.12} & $\mathbf{0.61}$ \var{0.13} \\
\bottomrule
\end{tabular}
\end{table}


\paragraph{Extra result for \emph{separate} Sachs Dataset (per-condition)}
\rev{Each of the 9 experimental conditions is treated as its own dataset and scored against its own ground truth: a perfect intervention makes the perturbed protein independent of its parents, so that protein's incoming edges are cut. We apply selection of different shapes (half-space, annulus, two-box) to each of the per-condition subdataset, with $50\%$ of the data remaining. Results show that the proposed CHess achieves the best performance when hard selection is present in the data. Overall CHess (Stein) dominates when selection is present, CHess (DiffAN) dominates in some of the 9 conditions} (\verb+psitect+, \verb+aktinhib+,\verb+pma+) as well, which is not shown in this averaged table.
\begin{table}[htbp]
\centering
\caption{F1 arrow of per-condition Sachs dataset}
\vspace{3pt}
\label{tab:f1_arrow_sachs_separate}
\renewcommand{\arraystretch}{1.2}
\setlength{\tabcolsep}{3pt}
\begin{tabular}{l c c c c}
\toprule
\multirow{2}{*}{Method} & $\rho=1.0$ & \multicolumn{3}{c}{$\rho=0.5$} \\
\cmidrule(lr){3-5}
& non-select & halfspace & annulus & two boxes \\
\midrule
SCORE & $\mathbf{0.37}$ \var{0.09} & $\mathbf{0.36}$ \var{0.07} & $0.37$ \var{0.08} & $0.38$ \var{0.09} \\
DAS & $0.31$ \var{0.08} & $0.31$ \var{0.07} & $0.31$ \var{0.08} & $0.32$ \var{0.07} \\
NoGAM & $0.35$ \var{0.07} & $0.35$ \var{0.07} & $0.35$ \var{0.07} & $0.34$ \var{0.07} \\
CAM & $0.28$ \var{0.11} & $0.28$ \var{0.09} & $0.28$ \var{0.08} & $0.29$ \var{0.11} \\
PC & $0.08$ \var{0.05} & $0.16$ \var{0.02} & $0.11$ \var{0.03} & $0.15$ \var{0.03} \\
FCI & $0.03$ \var{0.05} & $0.09$ \var{0.03} & $0.07$ \var{0.04} & $0.12$ \var{0.06} \\
GES & $0.12$ \var{0.03} & $0.21$ \var{0.02} & $0.18$ \var{0.02} & $0.17$ \var{0.02} \\
GRaSP & $0.07$ \var{0.04} & $0.11$ \var{0.02} & $0.11$ \var{0.05} & $0.11$ \var{0.07} \\
ICA-LiNGAM & $0.34$ \var{0.08} & $0.31$ \var{0.05} & $0.33$ \var{0.06} & $0.34$ \var{0.06} \\
DirectLiNGAM & $0.29$ \var{0.14} & $0.29$ \var{0.07} & $0.29$ \var{0.09} & $0.28$ \var{0.10} \\
DiffAN & $0.27$ \var{0.08} & $0.26$ \var{0.07} & $0.27$ \var{0.08} & $0.27$ \var{0.08} \\
CHess (Stein) & $0.36$ \var{0.13} & $\mathbf{0.36}$ \var{0.11} & $\mathbf{0.37}$ \var{0.13} & $\mathbf{0.39}$ \var{0.12} \\
CHess (DiffAN) & $0.34$ \var{0.06} & $0.34$ \var{0.05} & $0.34$ \var{0.07} & $0.34$ \var{0.06} \\
\bottomrule
\end{tabular}
\end{table}
\section{Hardware}
All experiments were conducted on a server running Debian GNU/Linux 12, equipped with an Intel Xeon E5-2680 v4 CPU (2.40 GHz), 503 GB RAM, and ten NVIDIA GeForce GTX 1080 Ti GPUs (11 GB VRAM each), using Python 3.13.

\newpage

\end{document}